\documentclass[accepted]{uai2026} 
                        
\usepackage[american]{babel}

\usepackage{natbib} 
\usepackage{mathtools} 
\usepackage{booktabs} 
\usepackage{tikz} 

\usepackage{amsmath}
\usepackage{algorithm}
\usepackage{algorithmic}
\usepackage{amssymb}
\usepackage{mathtools}
\usepackage{amsthm}
\usetikzlibrary{positioning}
\usepackage{comment}

\theoremstyle{plain}
\newtheorem{theorem}{Theorem}[section]
\newtheorem{proposition}[theorem]{Proposition}
\newtheorem{lemma}[theorem]{Lemma}

\theoremstyle{definition}
\newtheorem{definition}[theorem]{Definition}
\newtheorem{assumption}{Assumption}
\theoremstyle{remark}
\newtheorem{remark}[theorem]{Remark}
\theoremstyle{exmp}
\newtheorem{exmp}{Example}

\title{Probably Correct Optimal Stable Matching\\ under Two-Sided Uncertainty}

\author[1]{Andreas Athanasopoulos}
\author[2]{Anne-Marie George}
\author[1]{Christos Dimitrakakis}
\affil[1]{%
    University of Neuchâtel\\
    Neuchâtel, Switzerland
}
\affil[2]{%
    University of Oslo\\
    Oslo, Norway
}

\newcommand{\util}{\mu}

\newcommand{\agent}{\mathrm{a}}
\newcommand{\setAgents}{\mathfrak{A}} 

\newcommand{\setPlayer}{\mathcal{P}} 
\newcommand{\player}{p}              

\newcommand{\setArms}{\mathcal{A}}
\newcommand{\arm}{a}              

\newcommand{\expr}{\mu} 

\newcommand{\matching}{\mathfrak{m}}            
\newcommand{\Matching}{\mathbb{M}}

\DeclareMathOperator*{\argsort}{arg\,sort}

\begin{document}
\maketitle

\begin{abstract}
  We study a sequential learning problem for stable matchings in two-sided markets where preferences on both sides are initially unknown.  We focus on a centralized setting where an algorithm matches agents at each time step and receives noisy rewards that reflect the preferences of the matched agents, following a semi-bandit feedback structure. We adopt a pure exploration perspective, aiming to efficiently identify the optimal stable matching with high probability. Our work extends prior results by handling \emph{two-sided uncertainty} and by exploiting \emph{partial preference} information. A central ingredient is the notion of \textbf{pervasive stable matching}, which enables the identification of optimal stable matchings under partial preferences. We propose elimination-based algorithms whose stopping criteria exploit the structure of the learned partial preferences, and provide a refined sample-complexity analysis. Beyond pure exploration, we extend our approach to regret minimization and establish regret bounds with respect to the \emph{optimal} stable matching that avoid dependence on the minimum reward gap $\Delta_{\min}$.
\end{abstract}

\section{Introduction}
\label{sec:intro}
Two-sided matching markets model situations in which two distinct sets of agents aim to match with each other based on their preferences~\citep{gusfield1989stable}. Classic examples include students applying to universities, workers matching with firms, and organ donors paired with patients~\citep{roth1984evolution}. A fundamental solution concept in this setting is \emph{stable matching}~\citep{gale_Shapley_1962}, a notion of equilibrium in the market, where no pair of agents has an incentive to deviate from the proposed assignment.

Classical models typically assume that preferences are known in advance. In contrast, modern applications such as ride-sharing systems and online labor markets motivate sequential learning formulations, where agents’ preferences are initially unknown and must be learned over time through repeated interactions.

A growing body of literature casts this learning scenario within the Multi-Armed Bandit (MAB) framework, a direction initiated by \citet{das2005two}. In the centralized version of the problem, a platform matches the agents and receives noisy feedback reflecting their underlying preferences. This setting creates an inherent tension between exploration and exploitation: the platform must decide whether to exploit current estimates of a stable matching or continue exploring to learn the agents' preferences.

The problem is typically studied within a regret-minimization framework. In this work, we instead adopt a pure exploration perspective, aiming to efficiently identify an \emph{optimal stable matching} with high probability.

\subsection{Related work}
In their foundational work, \citeauthor{das2005two} introduce this learning problem and empirically evaluate the performance of MAB algorithms across several preference structures. A follow-up by \citet{pmlr-v108-liu20c} introduced the notions of \emph{Player-Optimal} and \emph{Player-Pessimal} stable regret. Focusing on a setting where only one side of the market has unknown preferences, they analyzed an Explore-Then-Commit algorithm, proving sublinear bounds for both regret notions. They also examined an Upper Confidence Bound approach, establishing sublinear bounds for Player-Pessimal stable regret; however, they proved an impossibility result for bounding Player-Optimal regret.

Later studies closed this gap by achieving sublinear player-optimal stable regret \citep{sankararaman2021dominate, basu2021beyond, kong2023player}, though they primarily focused on one-sided uncertainty and specific market structures. More recent studies have begun to consider settings where both sides of the market have uncertain preferences \citep{pagare2024explore, pmlr-v244-zhang24b}. However, these works employ Explore-Then-Commit type algorithms. In contrast, this work adopts a pure exploration framework that is more naturally suited to the analysis of such algorithms and enables refined results on regret with respect to the \emph{optimal} stable matching.

The pure exploration objective of identifying an \emph{optimal stable matching} with high probability was introduced by \cite{athanasopoulos2025probably}, while the work of \cite{NEURIPS2024_7fc0aac0} focuses on identifying \emph{any} stable matching. Both of these studies propose elimination-based algorithms restricted to one-sided uncertainty. We extend this literature by tackling a two-sided learning environment, providing refined arguments for the analysis of elimination algorithms. 

A core component of our approach is utilizing the agents' \emph{partial preference} rankings to identify an optimal stable matching, thereby circumventing the need to learn their full preference orderings. This approach is motivated by the concept of \emph{pervasive stable matching}, introduced by \cite{rastegari2014reasoning} to reason about optimal stable matchings under partial information in \emph{non-learning} scenarios. Specifically, a partial preference profile of the agents admits a pervasive stable matching if that matching remains optimally stable under every possible full refinement of the given preferences. A closely related notion is that of a \emph{super stable matching}, initially introduced to address stability in instances with indifferent preferences \citep{IRVING1994261}. While super stability was recently leveraged by \cite{basu2025competing} to bound regret in sequential learning scenarios, our work specifically relies on pervasive stable matchings. This distinction is crucial, as our objective is to identify the \emph{optimal} stable matching, rather than \emph{any} stable matching.

Appendix~\ref{app:related_work} contains an extended review of related literature, along with a comprehensive discussion on metrics.

\subsection{Contributions}

In this work, we demonstrate how partial preferences and the concepts of \emph{pervasive stable matching} can be applied in a learning scenario to identify the optimal stable matching with high probability under \emph{two-sided market uncertainty}. 

Our specific contributions are structured as follows:

\begin{itemize}
    \item \textbf{Uniform Strategies:} We present two variants of uniform exploration strategies that assume additional information about the reward structure. We show that reasoning about partial preferences significantly reduces the overall sample complexity compared to approaches that learn the full rankings, thereby avoiding the traditional dependence on the minimum reward gap.
    \item \textbf{Elimination Algorithm \& Technical Contribution:} We propose an adaptive elimination algorithm that requires no prior knowledge of reward gaps, utilizing a stopping rule based on partial preferences. Furthermore, we provide a refined complexity bound based on edge-coloring arguments on bipartite graphs.
    \item \textbf{Extended Elimination Rule:} We also present an extension of the elimination algorithm that further eliminates pairs upon identifying a super stable matching.
    \item \textbf{Regret Minimization:} Beyond our primary objective of pure exploration, we propose a meta-algorithm for the regret minimization setting. To the best of our knowledge, this provides the first regret bound with respect to the \emph{optimal} stable matching under \emph{two-sided uncertainty} that avoids a dependence on the minimum reward gap $\Delta_{min}$.
    \item \textbf{Theoretical Lower Bounds:} Finally, we conclude our analysis by establishing a lower bound on the sample complexity, utilizing known results from the combinatorial MAB literature \citep{jourdan2021efficient}.
\end{itemize}

The remainder of the paper is organized as follows. Section~\ref{sec:stable-matching} introduces the necessary notation and background regarding the stable matching problem and partial preferences. Section~\ref{sec:Setting} then formally defines our learning problem and explains how we utilize the concept of pervasive stable matching. In Section~\ref{sec:etc}, we analyze uniform sampling strategies, discussing the impact of utilizing partial preferences. Section~\ref{sec:elimination_algo} presents our elimination-based algorithm, followed by an extension of the elimination procedure and a discussion on a meta-algorithm for the regret minimization setting. Next, Section~\ref{sec:lower_bound}  establishes a lower bound on the sample complexity. In Section~\ref{sec:sim}, we evaluate our algorithms through simulation on random instances. Further discussion and omitted proofs are provided in the Appendix.

\section{Background}
\label{sec:stable-matching}
We consider a learning problem in a two-sided matching market, consisting of two distinct, non-empty sets of agents, $\setPlayer$ and $\setArms$, with $K = |\setPlayer|$ and $N = |\setArms|$ denoting the number of agents on each side, with arbitrary $K, N$\footnote{Prior work usually assume $K \leq N$ \citep{liu2021bandit,basu2025competing,kong2023player,athanasopoulos2025probably}.}.

The agents $\setAgents = \setPlayer \cup \setArms$ are matched one-to-one according to a matching $\Matching \subseteq \setPlayer \times \setArms$, i.e., pairs in $\Matching$ are pairwise disjoint. For convenience, we use the functional representation of a matching, where $\matching(\player) = \arm$ and $\matching(\arm) = \player$ for every $(\player, \arm) \in \Matching$, and $\matching(\agent) = \bot$ for any unmatched agent $\agent \in \setAgents$. We  define \( OS(\agent) \) as the opposite side of agent \( \agent \), i.e., \( OS(\agent) = \setArms\) if \( \agent \in \setPlayer \), and \( OS(\agent) = \setPlayer \) if \( \agent \in \setArms \).

Each agent $\agent \in \setAgents$ has an underlying true preference over the agents on the opposite side, represented by a strict preference ranking $F_{\agent}$. These preferences are derived from utility functions $\util_{\agent} : OS(\agent) \cup \{\bot\} \rightarrow \mathbb{R}$, where $\util_{\agent, \matching(\agent)}$ denotes the utility that agent $\agent$ obtains when matched with $\matching(\agent) \in OS(\agent)$, and $\util_{\agent, \bot}$ is the utility of being unmatched. We assume that for every agent $\agent \in \setAgents$, the utility values $\{\util_{\agent, i}\}_{i \in OS(\agent) \cup \{\bot\}}$ are distinct, thereby ensuring no ties in preferences. The preference ranking is then uniquely constructed by sorting these utilities: $F_{\agent} = \argsort_{i \in OS(\agent) \cup \{\bot\}} \util_{\agent, i}$. We denote by $\succ_{F_{\agent}}$ the corresponding strict order relation over $OS(\agent) \cup \{\bot\}$, where $i \succ_{F_{\agent}} j$ if and only if $\util_{\agent, i} > \util_{\agent, j}$ for $i, j \in OS(\agent) \cup \{\bot\}$

In order for a matching $\matching$ to align with the agents’ preferences, \citet{gale_Shapley_1962} introduced the concept of stable matching, a notion of equilibrium in which no two agents have an incentive to deviate. Formally:
\begin{definition}[Stable Matching]
A matching $\matching$ is stable under the true preference profile $\{F_{\agent}\}_{\agent \in \setAgents}$ if:\begin{enumerate}
    \item \textbf{Individual rationality:} For every agent $\agent \in \setAgents$, it holds that $\matching(\agent) \succeq_{F_{\agent}} \bot$.
    \item \textbf{No blocking pairs:} There exists no pair $(p, a) \in \setPlayer \times \setArms$ such that agent $p$ prefers $a$ over $\matching(p)$ and agent $a$ prefers $p$ over $\matching(a)$, i.e., $a \succ_{F_p} \matching(p)$ and $p \succ_{F_a} \matching(a)$.
\end{enumerate}
\end{definition}

\citet{gale_Shapley_1962} proved that the set of stable matchings $\mathcal{S}$ is always non-empty by introducing the Gale–Shapley (GS) algorithm, which is based on sequential proposals from one side of the market to the other. The algorithm returns the (unique) \emph{optimal stable matching} $m^\star$ for the proposing side, in the sense that every proposer receives the best partner they can obtain in any stable matching. Conversely, the resulting matching is pessimal for the side receiving the proposals, providing each agent on that side with their least-preferred partner among all stable matchings. In the remainder of the paper we concentrate on $\setPlayer$-optimal stable matchings $m^\star$, simply referred to as \textit{optimal}.

\subsection{Stable Matching from Partial Preferences}
\label{sec:partial-matching}
If we correctly estimate the full preference rankings of the agents, we can identify the optimal stable matching using the GS algorithm. However, learning the complete preference orders is not necessary: the optimal stable matching can be identified using only a correct partial preference order, as described by~\cite{rastegari2014reasoning}.

Stable matchings under certain types of partial preferences form the basis of our proposed algorithms. We begin with the definition of a partial ranking.

\begin{definition}[Partial ranking]
A partial ranking $P_a$ of an agent $a\in\setAgents$ over the set of agents $OS(a)$ is a partial order $\succ_{P_a}$, i.e., asymmetric and transitive, that can be represented as directed acyclic graph (DAG) with edges $(i, j) \subseteq OS(a) \times OS(a)$ for all $i\neq j$ with $i \succ_{P_a} j$ (meaning $i \not\preceq_{P_a} j$).
\end{definition}
Note that all partial rankings are asymmetric and transitive, and that a full ranking is a special case of a partial ranking.
\begin{definition}[Full ranking]
A full ranking \(F_a\) of an agent \(a\) over the set of agents \(OS(a)\) is a partial ranking 
such that, for every distinct pair \((i, j) \in OS(a) \times OS(a)\), either 
\(i \succ_{F_a} j\) or \(j \succ_{F_a} i\).
\end{definition}
We now define when a partial ranking \(P_a\) is compatible with \(P'_a\), in the sense that \(P'_a\) is a refinement of \(P_a\).
\begin{definition}[Compatible Partial Rankings]
A partial ranking \(P_a\) is said to be \emph{compatible} with another partial
ranking \(P'_a\), denoted by \(P_a \mapsto P'_a\), if for any \(i, j \in OS(a)\) such that 
\(i \succ_{P'_a} j\), it is not the case that \(j \succ_{P_a} i\). We overload this definition to partial \textit{preference profiles} \(P_{A} = \{P_a\}_{a \in \setAgents}\) and 
\(P'_{A} = \{P'_a\}_{a \in \setAgents}\) of the agents,
writing \(P_{A} \mapsto P'_{A}\) if \(P_a \mapsto P'_a\) for all 
\(a \in \setAgents\).
\end{definition}

Given a partial preference profile \(P_{A}\), we can describe the set of full ranking profiles \(F_{A} = \{F_a\}_{a \in \setAgents}\) that refine \(P_{A}\) as \(\mathcal{F}(P_{A}) = \{F_{A} \mid P_{A} \mapsto F_{A}\}\). This motivates the following definition of a super stable matching \citep{IRVING1994261}, a concept originally proposed for preferences with indifferences: we say that a matching $\matching$ is \emph{super stable} w.r.t. the preference profile $P_A$ if it is stable w.r.t. every full ranking profile $F_A \in \mathcal{F}(P_A)$.

The work of \cite{rastegari2014reasoning} introduce a similar definition of \emph{pervasive stable matching} (PSM), for a matching that is stable and optimal with respect to every full refinement of the partial preferences $F_A \in \mathcal{F}(P_A)$.

\begin{definition}[Pervasive Stable Matching]
We say that a matching $\matching$ is a \emph{pervasive stable matching} 
for the preference profile $P_A$ if $\matching$ is the optimal stable matching 
for every full ranking profile $F_A \in \mathcal{F}(P_A)$. 
We also say that $P_A$ \emph{admits a pervasive stable matching} 
if such a matching exists.
\end{definition}

Previous work shows that the set of super stable matchings may be empty, and whenever it is non-empty, it forms a distributive lattice \citep{spieker1995set}. In this case, there exists a unique optimal super stable matching for each side. In the following example we highlight that an optimal super stable matching is not necessarily the same as the pervasive stable matching \citep{rastegari2014reasoning}.

\begin{exmp}\label{ex:SS-not-POSM}
Consider an instance with two agents on each side, $\setPlayer = \{p_1, p_2\}$ and $\setArms=\{a_1, a_2\}$, respectively. We assume that the agents in $\setArms$ have correctly estimated their preferences, while agents in $\setPlayer$ have no knowledge about their preferences yet, i.e. $P_p =\varnothing \; \forall  p \in \setPlayer$, making them indifferent between $ \arm_1 $ and $ \arm_2$, denoted by $ \arm_1 \sim \arm_2$:
\begin{center}
\begin{tabular}{l l l l}
\toprule
\toprule
 \quad $p_1$ &  \quad $p_2$ &  \quad $a_1$ &  \quad $a_2$ \\
\midrule
 $\arm_1 \sim \arm_2$ &$\arm_1 \sim \arm_2$ &   $\player_1 \succ \player_2$ &  $\player_2 \succ \player_1$ \\[2pt]
\bottomrule
\bottomrule
\end{tabular}
\end{center}
In this setting, there are four possible full ranking profiles in $\mathcal{F}(\{P_a\}_{a \in \setAgents})$ that can refine the partial preferences of the agents in $\setPlayer$ that we illustrate in the next table.
\begin{center}
\begin{tabular}{c | l l | l l}
\toprule
\toprule
Instance & \quad $p_1$ &  \quad $p_2$ &  \quad $a_1$ &  \quad $a_2$ \\
\midrule
$I_1$ & $\arm_1 \succ \arm_2$ &$\arm_1 \succ \arm_2$ &    $\player_1 \succ \player_2$ &    $\player_2 \succ \player_1$ \\[2pt]
$I_2$ & $\arm_1 \succ \arm_2$ &$\arm_2 \succ \arm_1$ &    $\player_1 \succ \player_2$ &   $\player_2 \succ \player_1$ \\[2pt]
$I_3$ & $\arm_2 \succ \arm_1$ &$\arm_1 \succ \arm_2$ &  $\player_1 \succ \player_2$ &  $\player_2 \succ \player_1$ \\[2pt]
$I_4$ & $\arm_2 \succ \arm_1$ &$\arm_2 \succ \arm_1$ &  $\player_1 \succ \player_2$ &  $\player_2 \succ \player_1$ \\[2pt]
\bottomrule
\bottomrule
\end{tabular}
\end{center}

The only super stable matching, and thus the optimal, is $M= \{(p_1,a_1), (p_2, a_2)\}$ as the matching is stable with respect to every possible profile $I_j$ with $j = 1, \dots, 4$. 

Note that $M$, is not a pervasive stable matching, as for $I_3$ the optimal stable matching is $ \{(p_1,a_2), (p_2, a_1)\}$.

If $p_1$ collects more samples to decide that $a_1 \succ a_2$, then $M$ is PSM, as it is the optimal stable for both $I_1$ and $I_2$.
\end{exmp}

\section{Learning Problem}
\label{sec:Setting}
We consider a learning problem where agents are initially unaware of their preferences and must learn them through repeated interactions. We focus on the centralized setting, where an algorithm selects a matching at each time step and observes stochastic rewards for the matched pairs~\footnote{Equivalently, we can assume that agents receive the rewards and truthfully report them to the algorithm.}. 

More formally, the algorithm interacts with an instance
$\nu = \{ \nu_{a,i} \}_{(a, i) \in \setPlayer\times\setArms} \in \mathcal{I}$, where each $\nu_{a,i}$ corresponds to the reward distribution of agent $a \in \setAgents$ when matched
with agent $i \in OS(a)$, and has mean $\mu_{a,i} = \mathbb{E}_{X \sim \nu_{a,i}}[X].$
\begin{assumption}[$1$-sub-Gaussian]
For every agent $\agent \in \mathfrak{A}$ and $i \in OS(\agent)$, the reward distribution $\nu_{\agent, i}$ is $1$-sub-Gaussian.
\end{assumption}
At each time step $t \ge 1$, the algorithm selects a matching $m_t \in \mathcal{M}$ out of the set of all matchings $\mathcal{M}$, and observes stochastic rewards $X_{a,m_t(a)} \sim \nu_{a,m_t(a)}$ for each matched pair $(a, m_t(a))$ for all $a \in \mathfrak{A}$. Specifically, the feedback at time $t$ is represented by the vector $X_t = (X_{a,m_t(a)} \mathbb{I}\{a \in m_t(a)\})_{a \in \mathfrak{A}} \in \mathbb{R}^{K \times N}$. The filtration $\mathcal{F}_t := \sigma(m_1, X_1, \dots, m_t, X_t)$ contains all the information available at step $t+1$, where $\sigma(\cdot)$ denotes the $\sigma$-algebra generated by its arguments.

The algorithm is defined by:
\begin{enumerate}
\item A \textbf{Sampling Rule}: A rule determining which matching to 
sample at each step $t$, formally a sequence $(m_t)_{t \in 
\mathbb{N}}$ where each $m_t$ is $\mathcal{F}_{t-1}$-measurable.
\item A \textbf{Stopping Rule}: A stopping time $\tau$ with respect to the filtration $(\mathcal{F}_t)_{t \in \mathbb{N}}$, which determines when the exploration phase ends.
\item A \textbf{Recommendation Rule}: An $\mathcal{F}_{\tau}$-measurable rule that selects a final matching $m_\tau$ to be returned upon termination.
\end{enumerate}

We consider the fixed-confidence setting, where the objective is to design an algorithm that terminates and outputs the correct optimal stable matching with high probability. 
\begin{definition}[Probably Correct Optimal Stable-matching]
    Given a fixed confidence parameter $\delta \in (0, 1)$, an algorithm is said to be $\delta$-PCOS for every $\nu \in \mathcal{E}_{\Matching}$ if it terminates after $\tau < \infty$ steps and returns the optimal stable matching $m_{\nu}^{\star}$ with high probability:
    \begin{equation*}
        \mathbb{P}_{\nu}(m_\tau = m_{\nu}^{\star}) \geq 1 - \delta.
    \end{equation*}
\end{definition}

\begin{remark}
Beyond the probability of correctness, we are also interested in providing guarantees for the stopping time $\tau$, which represents the total number of matchings sampled before termination. We refer to the expected number of steps $\mathbb{E}[\tau]$ as the sample complexity of the algorithm.
\end{remark}

\paragraph{Learning with Partial Preferences.} Our previous discussion in Section~\ref{sec:partial-matching} indicates that an algorithm only needs to learn a valid partial preference profile $P_A$ that admits a pervasive stable matching. This insight motivates the design of our algorithms in the subsequent sections. 

\begin{definition}[Valid Preferences]
We say that a partial preference profile $P_A$ is \emph{valid} for the 
full ranking profile $F_{A}$ of an instance $\nu$ if:
\begin{enumerate}
    \item $P_A$ is compatible with $F_{A}$, i.e., $P_A \mapsto F_A$, and
    \item $P_A$ admits a pervasive stable matching.
\end{enumerate}
\end{definition}

We denote the set of valid preference profiles by $\mathcal{P}_{\text{valid}}(F_{A}^{\nu})$, which is never empty as it always contains the true profile $F_{A}$. While the work of \cite{rastegari2013two} proves that identifying the set of valid preference profiles is NP-hard, their follow-up research provides a polynomial-time algorithm to return the pervasive stable matching whenever it exists, given a partial ranking \citep{rastegari2014reasoning}.

To analyze the sample complexity of our algorithms, we introduce an admissible gap $\Delta_{F}$ associated with an instance and the set of valid partial preference profiles $\mathcal{P}_{\mathrm{valid}}(F_A^{\nu})$. 

We define the minimum gap induced by a partial ranking $P_a$ as $\Delta_{\min}(P_a) = 
\min_{(i,j) \in P_a}
\lvert \mu_{a,i} - \mu_{a,j} \rvert .$
Thus $\Delta_{\min}(P_a)$ indicates how hard it is for agent $a$ to learn their preference between pair $(i,j)\in P_a$ in the worst case.
For a preference profile $P_A = \{P_a\}_{a \in \setAgents}$, we extend this to $
\Delta_{\min}(P_A) = \min_{a \in \setAgents} \Delta_{\min}(P_a)$. Finally, the \textit{admissible gap of an instance} is defined as the maximum such minimum gaps over all valid profiles, i.e., 
\begin{equation}
        \Delta_{F} = \max_{P_A \in \mathcal{P}_{\mathrm{valid}}(F_A^{\nu})} \Delta_{\min}(P_A).
\end{equation}

\textbf{Relation to the minimum gap:} Previous work typically characterizes the performance of an algorithm in terms of the minimum reward gap $\Delta_{\min} = \min_{\agent \in \mathfrak{A}} \Delta_{\min}(F_a)$ \citep{pmlr-v108-liu20c}. In contrast, our admissible gap $\Delta_F$, defined with respect to the minimum valid partial profile, satisfies $\Delta_F \geq \Delta_{\min}$ by definition.

\section{Uniform strategies}
\label{sec:etc}
Our first contribution using partial preferences is the analysis of a uniform strategy that samples matchings covering all possible pairs $\mathcal{P} \times \mathcal{A}$ uniformly for a fixed number of $h$ exploration rounds. We analyze this algorithm for two cases, where the number of exploration rounds $h$ uses $\Delta_{min}$ or $\Delta_F$, to ensure the identification of the correct optimal stable matching with high probability.

After uniformly sampling pairs in $\mathcal{P} \times \mathcal{A}$ for $h$ rounds, Algorithm~\ref{alg:etc} commits to a matching by executing GS on the estimated preferences, $m_{\tau} = \text{GS}(\{\hat{F}_\agent\}_{\agent \in \setAgents})$. Here the preference ranking for an agent $\agent \in \setAgents$ is determined by sorting the empirical mean rewards, i.e., $\hat{F}_{\agent} = \argsort_{i \in OS(\agent)} \hat{\mu}_{\agent,i}(t)$, where $\hat{\mu}_{\agent,i}(t) = \frac{1}{N_{\agent,i}(t)} \sum_{s=1}^{t} \mathbb{I}(m_s(\agent) = i ) X_{\agent,i}(s)$ and $N_{\agent,i}(t)$ denotes the number of times the pair $(\agent,i)$ has been sampled up to time $t$.

\paragraph{Sampling from Minimal Matching Covers:} To sample all pairs in a set $X \subseteq \mathcal{P} \times \mathcal{A}$, we sample matchings from a minimal matching cover $\mathfrak{M}(X)$ --- the smallest set of matchings such that $X \subseteq \bigcup_{m \in \mathfrak{M}(X)} m$. For a bipartite graph, finding this cover is equivalent to a minimum edge coloring problem, where the edges of each color correspond to a matching in the cover \citep{gabow1978algorithms}. According to Kőnig's Theorem, the minimal number of matchings in a cover is equal to the maximum degree $deg(G(X))$ of the bipartite graph $G(X)$ with edges $X$. Thus, in our setting, a full exploration of all pairs $\mathcal{P} \times \mathcal{A}$ requires $\max\{K, N\}$ matchings in each exploration round $h$. Sampling from a matching cover  is essential for our later elimination algorithms, where only a specific subset of pairs needs to be sampled in each step. It relaxes the requirement for round-robin strategies and extends the analysis beyond the usual assumption of $N < K$. 

\begin{algorithm}[t]
\caption{Uniform Exploration}\label{alg:etc}
\begin{algorithmic}[1]
\REQUIRE h, $\setPlayer$, $\setArms$, $n=\max\{|\setPlayer|, |\setArms|\}$
\STATE Compute a minimal matching cover\\ $\{\matching_i\}^{n}_{i=1} = \mathfrak{M}(\{(p,a) \in \setPlayer \times \setArms\})$
\FOR{ $t$ \ in \ $\{1, \cdots, h\cdot n\}$}
    \STATE  \textit{Sample } $m_{i+ 1}$ \textit{where} $i = (t-1) \mod n$.
    \STATE \textit{Update $\hat{\expr}_{a,\matching_{i+1}(a)}(t)$  $\forall a \; \in \; \setAgents $
    }
\ENDFOR
\STATE $\hat{F}_{\agent} = \argsort_{i \in OS(\agent)} \hat{\expr}_{\agent, i} \; \forall \agent \in \setAgents$ 
\STATE \textbf{return} $m_{\tau} = GS(\{\hat{F}_\agent\}_{\agent \in\setAgents})$ 
\end{algorithmic}
\end{algorithm}

We can now bound the sample complexity of the uniform strategy where the number of exploration rounds is defined as a function of $\Delta_{\min}$. This algorithm samples pairs until all agents \textit{complete} preference rankings can be determined.

\begin{theorem}\label{th:theorem_etc1}Algorithm~\ref{alg:etc} with parameter $ h =  \lceil\frac{8 \ln(2KN/\delta)} {\Delta_{\min}^2}\rceil$ is a $\delta$-PCOS algorithm, and the number of sampled matchings is bounded by:
\begin{equation*}
    O\left(\max\{K, N\}\frac{\ln(KN/\delta)}{\Delta_{\min}^2}\right).
\end{equation*}
\end{theorem}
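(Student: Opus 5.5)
The plan is a concentration argument on a single good event, followed by a deterministic correctness step. Because Algorithm~\ref{alg:etc} cycles $h$ times through a minimal matching cover of the complete bipartite graph $\setPlayer\times\setArms$, at time $\tau = h\cdot n$ every pair $(p,a)$ has been sampled exactly $h$ times. Here $n=\max\{K,N\}$, which by Kőnig's theorem is the size of the cover. Each sample reveals a reward for both sides, $X_{p,a}$ for $p$ and $X_{a,p}$ for $a$. Hence every directed estimate $\hat\mu_{\agent,i}$, for $\agent\in\setAgents$ and $i\in OS(\agent)$, is an average of $h$ independent $1$-sub-Gaussian samples.

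\textbf{Step 1 (good event).} Let $\mathcal{E}=\{\forall \agent,\ \forall i\in OS(\agent): |\hat\mu_{\agent,i}-\mu_{\agent,i}|<\Delta_{\min}/2\}$. The sub-Gaussian tail bound gives, for each estimate,
\[
\mathbb{P}\bigl(|\hat\mu_{\agent,i}-\mu_{\agent,i}|\ge \Delta_{\min}/2\bigr)\le 2\exp\bigl(-h\Delta_{\min}^2/8\bigr)\le \delta/(KN),
\]
where the last inequality follows from the choice $h\ge 8\ln(2KN/\delta)/\Delta_{\min}^2$. There are $2KN$ directed estimates. A union bound therefore gives $\mathbb{P}(\mathcal{E}^c)\le 2\delta$, which is off by a factor of $2$.

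I expect this constant to be the only delicate point. I would try to recover it in one of two ways. The first is to use one-sided deviations per ordered comparison. The second is to read the constant so that $\ln(2KN/\delta)$ already accounts for both sides, i.e.\ $KN$ pairs times both agents, each with a single two-sided tail of $\delta/(2KN)$. If neither works, I would enlarge the logarithm to $\ln(4KN/\delta)$, which leaves the $O(\cdot)$ bound unchanged.

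\textbf{Step 2 (correct rankings on $\mathcal{E}$).} Fix an agent $\agent$ and $i\neq j$ with $\mu_{\agent,i}>\mu_{\agent,j}$. By definition of $\Delta_{\min}$, $\mu_{\agent,i}-\mu_{\agent,j}\ge\Delta_{\min}$. On $\mathcal{E}$,
\[
\hat\mu_{\agent,i}>\mu_{\agent,i}-\Delta_{\min}/2\ge\mu_{\agent,j}+\Delta_{\min}/2>\hat\mu_{\agent,j}.
\]
Therefore $\hat F_\agent=F_\agent$ for every agent. The unmatched option $\bot$ should be handled in the same way, either assumed known or included among the gaps defining $\Delta_{\min}$; I would state this convention. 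With $\hat F_A=F_A$, Gale--Shapley with $\setPlayer$ proposing deterministically returns $m^\star_\nu$ (Section~\ref{sec:stable-matching}). This gives $\mathbb{P}(m_\tau=m^\star_\nu)\ge 1-\delta$, and the algorithm is $\delta$-PCOS.

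\textbf{Step 3 (sample complexity).} The number of sampled matchings is deterministic: $\tau=h\cdot n=\max\{K,N\}\,\lceil 8\ln(2KN/\delta)/\Delta_{\min}^2\rceil$. Since $\ln(2KN/\delta)=O(\ln(KN/\delta))$, this is $O(\max\{K,N\}\ln(KN/\delta)/\Delta_{\min}^2)$, which is the claimed bound.
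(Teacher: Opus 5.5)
Your argument follows the paper's proof step for step. Both use the deterministic count $\tau=h\max\{K,N\}$ from a K\H{o}nig cover, a $\Delta_{\min}/2$ deviation event for every estimate, the sandwich $\hat\mu_{a,i}>\mu_{a,i}-\Delta_{\min}/2\ge\mu_{a,j}+\Delta_{\min}/2>\hat\mu_{a,j}$, and Gale--Shapley on the recovered profile. The factor of $2$ you flag is genuine, and the paper's proof makes the same slip without noticing it. The paper union-bounds over ``all $KN$ pairs'' to get $2KN\exp(-h\Delta_{\min}^2/8)\le\delta$. Its own sufficient condition, however, involves all $2KN$ directed estimates ($\hat\mu_{p,a}$ and $\hat\mu_{a,p}$ for every pair), which gives $2\delta$, as you computed.

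So, as written, the conclusion $\mathbb{P}(m_\tau=m^\star_\nu)\ge 1-\delta$ in your Step~2 is not justified for the stated $h$, and two of your three repairs do not close the gap.

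\textbf{Second repair.} This one rests on an arithmetic slip. With this $h$, the two-sided tail at level $\Delta_{\min}/2$ is $2\exp(-h\Delta_{\min}^2/8)\le\delta/(KN)$, not $\delta/(2KN)$.

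\textbf{Third repair.} This proves the theorem only for a different $h$.

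\textbf{First repair.} This works only under one reading. One-sided tails on the individual estimates still fall short: each agent $a$ needs $2(|OS(a)|-1)$ one-sided events, so the union bound totals $\delta(2-1/K-1/N)$, which exceeds $\delta$ whenever $1/K+1/N<1$. Concentrating the \emph{difference} does work.
\begin{itemize}
\item For $\mu_{a,i}>\mu_{a,j}$, the estimates $\hat\mu_{a,i}$ and $\hat\mu_{a,j}$ come from distinct pairs, so they are independent and their difference is $(2/h)$-sub-Gaussian.
\item Hence $\mathbb{P}(\hat\mu_{a,i}\le\hat\mu_{a,j})\le\exp(-h\Delta_{\min}^2/4)\le(\delta/(2KN))^2$.
\item It suffices to control consecutive elements of each true ranking, which is $K(N-1)+N(K-1)<2KN$ comparisons.
\item The failure probability is then at most $\delta^2/(2KN)\le\delta$, with $h$ exactly as stated.
\end{itemize}
That is the step your write-up needs in place of the per-estimate union bound.
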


Next, we introduce a variant where $h$ depends on $\Delta_F$. In this version, the algorithm samples each pair often enough to correctly identify a valid preference profile $P_A \in \mathcal{P}_{\text{valid}}(F_{A}^{\nu})$, with $\Delta_F= \max_{P_A \in \mathcal{P}_{\mathrm{valid}}(F_A^{\nu})} \Delta_{\min}(P_A)$.

\begin{theorem}\label{th:theorem_etc2}
Algorithm~\ref{alg:etc} with parameter $h =  \lceil\frac{8 \ln(2KN/\delta)} {\Delta_{F}^2}\rceil$ is a $\delta$-PCOS algorithm, and the number of sampled matchings is bounded by:
\begin{equation*}
    O\left(\max\{K, N\}\frac{\ln(KN/\delta)}{\Delta_{F}^2}\right).
\end{equation*}
\end{theorem}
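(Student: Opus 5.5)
Fix a valid profile $P_A^\star \in \mathcal{P}_{\mathrm{valid}}(F_A^{\nu})$ attaining the maximum in the definition of $\Delta_F$, so $\Delta_{\min}(P_A^\star)=\Delta_F$, and let $m^\star$ be its pervasive stable matching. Since $P_A^\star \mapsto F_A^\nu$ and $F_A^\nu \in \mathcal{F}(P_A^\star)$, pervasiveness gives $m^\star = m^\star_\nu$. The plan is to show that, on a high-probability event, the empirical full ranking profile $\hat F_A$ returned by Algorithm~\ref{alg:etc} is also a refinement of $P_A^\star$. Then, by the definition of a pervasive stable matching, $GS(\hat F_A)$ returns the $\setPlayer$-optimal stable matching of $\hat F_A$, and this equals $m^\star = m^\star_\nu$. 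The agents' own observations never require comparisons outside $P_A^\star$, so $\Delta_F$ replaces $\Delta_{\min}$ exactly as in the proof of Theorem~\ref{th:theorem_etc1}.

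\textbf{Concentration.} After $h\cdot n$ steps with $n=\max\{K,N\}$, each matching of the minimal cover has been sampled $h$ times. Since the cover contains every pair, each pair $(p,a)$ gives $h$ samples to both $p$ and $a$, so $N_{\agent,i}=h$ for every agent $\agent$ and every $i\in OS(\agent)$. (If a pair belongs to several cover matchings it only gets more samples.) The $1$-sub-Gaussian assumption gives $\mathbb{P}(|\hat\mu_{\agent,i}-\mu_{\agent,i}|\ge \Delta_F/2)\le 2\exp(-h\Delta_F^2/8)\le \delta/(KN)$ for the stated $h$. There are $2KN$ (agent, partner) estimates, so I will union-bound over them. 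The per-estimate bound then has to be $\delta/(2KN)$; I would tighten this by a constant either by writing the log as $\ln(4KN/\delta)$ or by using the one-sided tail $\exp(-h\Delta_F^2/8)$ for each of the two directions. This constant bookkeeping is absorbed in the $O(\cdot)$, and matches whatever convention was used for Theorem~\ref{th:theorem_etc1}. Call the resulting good event $\mathcal{E}$, with $\mathbb{P}(\mathcal{E})\ge 1-\delta$.

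\textbf{Refinement on $\mathcal{E}$.} For any edge $i\succ_{P^\star_\agent} j$, compatibility with the truth gives $\mu_{\agent,i}>\mu_{\agent,j}$, and $\mu_{\agent,i}-\mu_{\agent,j}\ge\Delta_{\min}(P^\star_\agent)\ge\Delta_F$. On $\mathcal{E}$ we get $\hat\mu_{\agent,i}>\mu_{\agent,i}-\Delta_F/2\ge \mu_{\agent,j}+\Delta_F/2>\hat\mu_{\agent,j}$, so $i\succ_{\hat F_\agent} j$. Hence no edge of $P^\star_\agent$ is reversed in $\hat F_\agent$, i.e.\ $P_A^\star\mapsto \hat F_A$ and $\hat F_A\in\mathcal{F}(P_A^\star)$. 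Pairs not in $P^\star_\agent$ may be ordered arbitrarily; this is exactly where the gain over $\Delta_{\min}$ comes from.

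\textbf{The main subtlety.} The step I expect to require care is the outside option $\bot$. Rankings $F_\agent$ are over $OS(\agent)\cup\{\bot\}$, but the algorithm never samples $\bot$. I would either treat $\mu_{\agent,\bot}$ as known, or read the definitions of partial rankings and $\mathcal{F}(P_A)$ as being over $OS(\agent)$ only, which the Partial ranking definition does. Under either reading, $\hat F_A$ lies in $\mathcal{F}(P_A^\star)$ with the true $\bot$ positions, and the pervasive stable matching argument goes through.

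The sample count is $h\cdot n = \max\{K,N\}\lceil 8\ln(2KN/\delta)/\Delta_F^2\rceil = O\!\left(\max\{K,N\}\ln(KN/\delta)/\Delta_F^2\right)$, which is deterministic, so $\tau<\infty$ always and the algorithm is $\delta$-PCOS.
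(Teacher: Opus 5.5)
Your argument is the paper's. You fix $P^\star_A$ attaining $\Delta_F$ and show that on $\{|\hat\mu_{a,i}-\mu_{a,i}|<\Delta_F/2\ \forall a,i\}$ every edge of $P^\star_A$ survives in $\hat F_A$, so $\hat F_A\in\mathcal{F}(P^\star_A)$ and pervasiveness gives $GS(\hat F_A)=m^\star_\nu$. You also note that the sample count $h\max\{K,N\}$ is deterministic. All of this matches the paper's proof.

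The one step that does not go through as written is the constant you flagged. You are right that there are $2KN$ estimates. With the stated $h$, this union bound only gives failure probability $4KN\exp(-h\Delta_F^2/8)\le 2\delta$. The paper's proof has the same slip: it sums over $a\in\setAgents$, $i\in OS(a)$ but records the total as $2KN\exp(-h\Delta_F^2/8)$.

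Neither of your remedies proves the theorem as stated:
\begin{itemize}
\item Replacing $\ln(2KN/\delta)$ by $\ln(4KN/\delta)$ changes the algorithm's input $h$. The $\delta$-PCOS claim is exact, so this change is not absorbed by the $O(\cdot)$.
\item The one-sided-tail idea does not help. An $i$ in the middle of a chain of $P^\star_a$ needs both $\hat\mu_{a,i}>\mu_{a,i}-\Delta_F/2$ and $\hat\mu_{a,i}<\mu_{a,i}+\Delta_F/2$. You are back to $4KN$ one-sided events, each of probability at most $\delta/(2KN)$.
\end{itemize}

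A fix that keeps the stated $h$ is to union-bound over the edges of $P^\star_A$ rather than over estimates. Take $(i,j)\in P^\star_a$ with gap $g\ge\Delta_F$. Then $\hat\mu_{a,i}-\hat\mu_{a,j}$ is a difference of independent $(1/h)$-sub-Gaussian means, hence $(2/h)$-sub-Gaussian. So $\Pr(\hat\mu_{a,i}\le\hat\mu_{a,j})\le\exp(-hg^2/4)\le(\delta/(2KN))^2$. There are at most $K\binom{N}{2}+N\binom{K}{2}\le KN(K+N)/2$ such edges. The total failure probability is therefore at most $(K+N)\delta^2/(8KN)\le\delta^2/4\le\delta$.

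On the complementary event every edge of $P^\star_A$ is strictly ordered correctly in $\hat F_A$. The rest of your argument then applies unchanged.
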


\begin{remark}
Note that by construction $\Delta_{\min} \leq \Delta_{F}$, which can lead to an arbitrarily large increase in sample complexity, as illustrated in Example~\ref{exmp_2} and Figure~\ref{fig:sample-complexity-delta}. While  knowing either gap, $\Delta_{\min}$ or $\Delta_{F}$, might not be possible in practice, knowing $\Delta_{F}$ seems to be an even more unrealistic assumption, as computing the set of valid preference profiles is computationally hard \citep{rakhlin2013optimization}. 
\end{remark}

\begin{exmp}
\label{exmp_2}
Consider an instance with two agents on each side, with the following true preference rankings:
\begin{center}
\begin{tabular}{l l l l}
\toprule
 \quad $p_1$ & \quad $p_2$ & \quad $a_1$ & \quad $a_2$ \\
\midrule
 $a_1 \succ a_2$ & $a_1 \succ a_2$ &
 $p_1 \succ p_2$ & $p_1 \succ p_2$ \\
\bottomrule
\end{tabular}
\end{center}

For this instance, the partial preference profile $P_A=\{P_{p_1}=\{a_1 \succ a_2\},P_{p_2}=\emptyset,P_{a_1}=\{p_1 \succ p_2\}, P_{a_2}=\emptyset\}$ is valid, i.e., the algorithm does not need to estimate the preferences of $p_2$ and $a_2$. So the difficulty of the learning task does not depend on the gaps associated with these agents.

To illustrate, consider the following mean rewards:
\begin{center}
\begin{tabular}{c c c}
\toprule
Agent & \multicolumn{2}{c}{Mean rewards $\mu$} \\
\midrule
$p_1$ & $\mu_{p_1,a_1} = 1$ & $\mu_{p_1,a_2} = 1-\Delta_F$ \\
$a_1$ & $\mu_{a_1,p_1} = 1$ & $\mu_{a_1,p_2} = 1-\Delta_F$ \\
$p_2$ & $\mu_{p_2,a_1} = 1$ & $\mu_{p_2,a_2} = 1-\Delta_{\min}$ \\
$a_2$ & $\mu_{a_2,p_1} = 1$ & $\mu_{a_2,p_2} = 1-\Delta_{\min}$ \\
\bottomrule
\end{tabular}
\end{center}

\begin{figure}[t]
    \centering
    \includegraphics[width=0.8\linewidth]{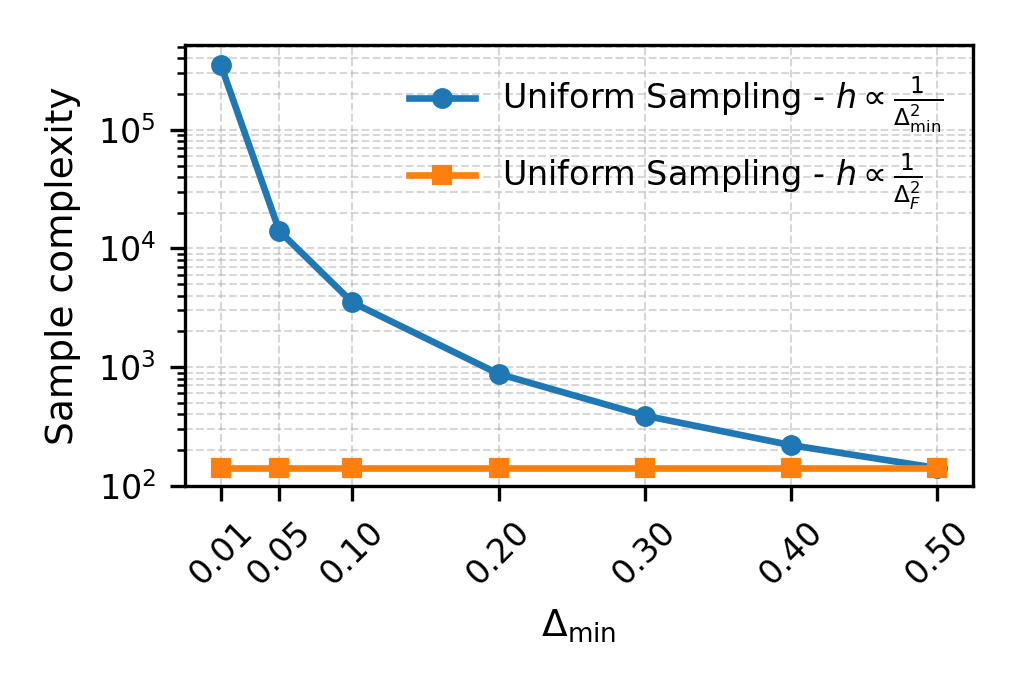}
    \caption{Sample complexity bounds from Theorem~\ref{th:theorem_etc1} and \ref{th:theorem_etc2} for Example~\ref{exmp_2}'s instance with fixed $\Delta_{\min} < \Delta_F = 0.5$.}
    \label{fig:sample-complexity-delta}
\end{figure}

Figure~\ref{fig:sample-complexity-delta} fixes $\Delta_F = 0.5$ and varies $\Delta_{\min}$, comparing the sample complexity of the algorithms with exploration time $h$ defined in Theorem~1 and Theorem~2, and shows that the resulting differences in the sample complexities can become arbitrarily large as $\Delta_{\min}$ goes to zero.
\end{exmp}

\section{Elimination Algorithm}
\label{sec:elimination_algo}
We now turn our attention to elimination algorithms: these have improved sample complexity and do not require prior knowledge of gaps $\Delta$.

\subsection{Simple Elimination Algorithm}
\label{sec:elim_psm}
The key idea of Algorithm~\ref{alg:elim} is to (i) update partial preferences once pairwise orderings are inferred with high probability, (ii) eliminate pairs from sampling, and (iii) stop when a PSM is found. High-probability correctness of the partial preferences implies identification of the optimal matching $m^{\star}$ with high probability.

\paragraph{Sampling:} The elimination algorithm operates in rounds $t$ and keeps track of active (non-eliminated) agents $\mathcal{S}_a(t) \subseteq OS(a)$ -- those for which the relative ordering in $a$'s preferences is not determined with high probability yet. We initialise $\mathcal{S}_a(0) = OS(a)$ for all agents $a\in \setAgents$. Then in each round $t$, we sample the matchings from a \textit{minimal matching cover} $\mathfrak{M}(S_t)$, where $S_t$ is the set of all non-eliminated pairs:
$$S_t = \left\{ (p, a) \in \setPlayer \times \setArms \mid a \in S_p(t) \lor p \in S_a(t) \right\}$$
So at each round $t$, we sample $n_t$ matchings, with $n_t$ defined as the maximum degree of the underling graph of the non-eliminated pairs $G(S_t)$, i.e., $n_t = \text{deg}(G(S_t))$.

\paragraph{Partial Preferences:}
To eliminate pairs and form the partial preferences of the agents, we employ confidence intervals. Specifically, at each round $t$, every agent $a \in \setAgents$ maintains a confidence interval for the mean reward associated with each agent $i \in OS(a)$. The corresponding confidence radius is defined as:
\begin{equation}
\label{eq:ci}
    c_{a,i}(t) = \sqrt{\frac{2\ln{(8KNt^2/\delta)}}{t}}.
\end{equation}
Using this radius, we define the Upper Confidence Bound (UCB) and Lower Confidence Bound (LCB) as:
\begin{align}
    \mathrm{UCB}_{a,i}(t) &= \hat{\mu}_{a,i}(t) + c_{a,i}(t), \\
    \mathrm{LCB}_{a,i}(t) &= \hat{\mu}_{a,i}(t) - c_{a,i}(t).
\end{align}

We construct the partial ranking $P_a$ by adding a directed edge from $i$ to $j$ for any distinct pair $i, j \in OS(a)$ whenever the confidence intervals are disjoint. Formally, at round $t$, we update the edge set of $P_a$ according to:
\begin{equation}
\label{eq:update_pref}
    P_a \leftarrow P_a \cup \{(i, j)\} \quad \text{if} \quad \mathrm{LCB}_{a,i}(t) > \mathrm{UCB}_{a,j}(t).
\end{equation}

\paragraph{Elimination Rule:}
We remove an agent $i$ from a set of active agents $S_a$ if $i$'s confidence interval is strictly disjoint from that of every other active agent $j \in S_a \setminus \{i\}$. Formally, we update $S_a \leftarrow S_a \setminus \{i\}$ if $\forall j \in S_a \setminus \{i\}$:
\begin{equation}
\label{eq:elim}
     \mathrm{LCB}_{a,i}(t) > \mathrm{UCB}_{a,j}(t) \; \lor \; \mathrm{UCB}_{a,i}(t) < \mathrm{LCB}_{a,j}(t).
\end{equation}
This implies that the relative order between $i$ and every other active agent $j$ has been established with high probability, rendering further sampling of $i$ unnecessary for agent $a$.

\paragraph{Stopping Condition:}
We stop sampling either when all available pairs have been eliminated or when the agents’ partial preference profile admits a PSM. By construction, the partial preferences are consistent with the true underlying preferences with high probability; thus, the matching is the optimal stable matching with high probability. To compute a PSM, we use the POSM algorithm \citep{rastegari2014reasoning} (line~3 of Algorithm~\ref{alg:elim}) , which returns a PSM for the given partial preference profile if one exists, and \textsc{None} otherwise, in polynomial time.

\begin{algorithm}[t]
\caption{E-PSM: Elimination PSM}\label{alg:elim}
\begin{algorithmic}[1]
\REQUIRE $\delta$, $\setPlayer$, $\setArms$
\STATE $S_a \leftarrow OS(a) \quad \forall a \in \setAgents$ \hfill $\triangleright$ Active pairs
\STATE $P_a \leftarrow \emptyset \quad \forall a \in \setAgents$ \hfill $\triangleright$ Partial ranking
\STATE $  m_{\tau}= POSM(\{P_a\}_{a \in \setAgents})$
\WHILE{$m_{\tau} = \text{NONE} \land \bigcup_{a \in \setAgents} S_a \neq \emptyset$}
    \STATE $S_t \leftarrow \left\{ (p, a) \in \setPlayer \times \setArms \mid a \in S_p(t) \lor p \in S_a(t) \right\}$
    \FOR{$m \in \mathfrak{M}(S_t)$} 
        \STATE Sample update $\hat{\mu}_{a,m(a)} \; \forall a \in \setAgents$ 
    \ENDFOR
    \STATE Update $c_{a,i}(t) \; \forall a\in \setAgents ,\; i \in OS(a)$ \hfill $\triangleright$ Eq. \ref{eq:ci}
    \STATE Update $P_a \; \forall a \in \setAgents$ \hfill $\triangleright$ Eq. \ref{eq:update_pref}
    \STATE Eliminate pairs in $S_a$ \hfill $\triangleright$ Eq. \ref{eq:elim}
    \STATE $m_{\tau}= POSM(\{P_a\}_{a \in \setAgents})$
\ENDWHILE
\RETURN $m_{\tau}$
\end{algorithmic}
\end{algorithm}

To analyse the sample complexity of Algorithm~\ref{alg:elim}, we define the following reward gaps: For an agent $a \in \setAgents$ and $i \in OS(a)$. Let $\Delta_{a,i} = \min_{j \in OS(a) \setminus \{i\}} |\mu_{a,i} - \mu_{a,j}|$ denote the minimum reward gap separating $i$ from all other candidates, and define $\Delta_{a,i}^{\min} = \min\{\Delta_{a,i}, \Delta_{i,a}\}$ and $\Delta_{a,i}^{F} = \max(\Delta_{F}, \Delta_{a,i}^{\min})$.

\begin{theorem}\label{th:theorem_elim}
Algorithm~\ref{alg:elim} is a $\delta$-PCOS algorithm, and the number of sampled matchings is bounded by:
\begin{equation*}
    O\left(\max_{a \in \setAgents} \sum_{i \in OS(a)} \frac{32 \log(8KN/(\delta\Delta_{a,i}^{F}))}{(\Delta_{a,i}^{F})^2}  \right).
\end{equation*}
\end{theorem}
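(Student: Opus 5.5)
The proof has two parts: correctness on a good event, and a bound on the number of rounds each pair stays active, converted into a count of sampled matchings via the matching-cover (edge-coloring) structure.

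\textbf{Good event and correctness.} Let $\mathcal{E}$ be the event that $|\hat{\mu}_{a,i}(t)-\mu_{a,i}|\le c_{a,i}(t)$ for every agent $a$, every $i\in OS(a)$ and every round $t\ge1$. While pair $(a,i)$ is active, in round $t$ it has been sampled at least $t$ times, since every pair in $S_t$ lies in some matching of $\mathfrak{M}(S_t)$. A sub-Gaussian tail bound with radius $\sqrt{2\ln(8KNt^2/\delta)/t}$, a union bound over the $2KN$ agent--partner pairs, and $\sum_t 1/t^2\le 2$ then give $\mathbb{P}(\mathcal{E}^c)\le\delta$. (If a pair keeps being sampled after it is eliminated for one side, the index $t$ must be read as the frozen sample count; I would handle this by freezing estimates at elimination.)

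On $\mathcal{E}$, every edge added by Eq.~\eqref{eq:update_pref} is correct, so $P_A\mapsto F_A$. Hence if POSM returns a matching, it is the optimal stable matching of every refinement, in particular of $F_A$. If instead every pair is eliminated, each $P_a$ becomes total and correct, so the PSM is $m^\star$ and POSM returns it. This gives the $\delta$-PCOS property. Termination follows from the sample-complexity bound below.

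\textbf{Per-pair elimination time.} On $\mathcal{E}$, once $4c(t)<|\mu_{a,i}-\mu_{a,j}|$ the intervals of $i$ and $j$ for agent $a$ are disjoint. So $i$ leaves $S_a$ once $4c(t)<\Delta_{a,i}$, which holds for
\[
t\ge t_{a,i}:=\frac{C\log(8KN/(\delta\Delta_{a,i}))}{\Delta_{a,i}^2}
\]
by inverting $t\gtrsim 32\ln(8KNt^2/\delta)/\Delta^2$ in the standard way. The pair $(a,i)$ leaves $S_t$ only when both endpoints have eliminated each other, so it is active for at most $\max(t_{a,i},t_{i,a})$ rounds, which is governed by $\Delta^{\min}_{a,i}$.

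Separately, fix a valid profile $P^\star$ attaining $\Delta_F$. Every edge of $P^\star$ has gap at least $\Delta_F$, so after $T_F:=C\log(8KN/(\delta\Delta_F))/\Delta_F^2$ rounds every edge of $P^\star$ has been added to $P_A$. I will argue this even for edges involving eliminated partners, since their intervals are already disjoint from the rest. The current $P_A$ then refines $P^\star$ and is still compatible with $F_A$. This is the step I expect to be the main obstacle: one must check that any correct refinement of a profile admitting a PSM also admits the same PSM. This holds because $\mathcal{F}(P_A)\subseteq\mathcal{F}(P^\star)$, so the PSM of $P^\star$ is optimal for every $F\in\mathcal{F}(P_A)$. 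Consequently the algorithm stops by round $T_F$, and each pair is active for at most $\min(T_F,\max(t_{a,i},t_{i,a}))\lesssim 1/(\Delta^F_{a,i})^2$ rounds up to logarithmic factors, which is exactly why $\Delta^F_{a,i}=\max(\Delta_F,\Delta^{\min}_{a,i})$ appears.

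\textbf{Counting matchings.} The total number of samples is $\sum_t n_t=\sum_t \deg(G(S_t))$. Each round has $\deg(G(S_t))=\max_a|\{i:(a,i)\in S_t\}|$. A maximum of sums cannot simply be exchanged with a sum of maxima, so I would use monotonicity instead. Active sets only shrink, so $\deg(G(S_t))$ is nonincreasing. Order the partners of the vertex attaining the maximum at each time and apply a layer-cake argument: $\sum_t\deg(G(S_t))=\sum_{k\ge1}|\{t:\deg(G(S_t))\ge k\}|$. When $\deg\ge k$, some vertex $a$ has at least $k$ active pairs, so $t$ is below the $k$-th largest activity time $T_{a,(k)}$ among $a$'s edges. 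This yields the bound $\sum_k\max_a T_{a,(k)}$. Bounding this by $\max_a\sum_i T_{a,i}$ up to the constants absorbed in $O(\cdot)$ is the delicate step. I would first try to make it rigorous with a König-type charging argument, charging each round to the max-degree vertex; if that fails, I would accept a factor from summing over the two sides. Substituting $T_{a,i}\le 32\log(8KN/(\delta\Delta^F_{a,i}))/(\Delta^F_{a,i})^2$ then gives the stated bound.
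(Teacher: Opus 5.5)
\textbf{Where the proposal breaks: the final counting step.} Up to that step your plan matches the paper's proof: the good event via a union bound, the per-pair activity time $\max(t_{a,i},t_{i,a})$, and the global cap $T_F$ from a valid profile attaining $\Delta_F$. Your extra observations are correct and fill in details the paper skips: $\mathcal{F}(P_A)\subseteq\mathcal{F}(P^\star)$, an eliminated partner already has all its relations recorded, and full elimination makes every $P_a$ total.

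The gap is the last step, and it cannot be closed, because the inequality $\sum_k\max_a T_{a,(k)}\le C\max_a\sum_i T_{a,i}$ is false. Take $p_1,\dots,p_n$ on one side, with $p_k$ having $k$ edges of activity time $L/k$ to private partners, so every vertex on the other side carries total time at most $L$. Then $\max_a\sum_iT_{a,i}=L$, but $\max_aT_{a,(k)}=L/k$, so $\sum_k\max_aT_{a,(k)}=L\,H_n$. Charging each round to the current max-degree vertex fails because that vertex migrates from $p_n$ down to $p_1$. The charges then add up to $LH_n$ although no vertex's own total exceeds $L$. Since the example is one-sided, paying a factor $2$ for the two sides does not help either.

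Moreover, your layer-cake expression is an identity for this algorithm, not just an upper bound. Active sets only shrink, so $n_t=\deg(G(S_t))=\max_a|\{i:T_{a,i}\ge t\}|$, and there is no slack left to exploit. What your route does prove is $k\,T_{a,(k)}\le\sum_iT_{a,i}$. Hence $\sum_t n_t\le H_{\max\{K,N\}}\max_a\sum_iT_{a,i}$, which is the displayed bound times an $O(\log\max\{K,N\})$ factor.

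\textbf{The paper's argument has the same problem.} The paper disposes of this step by applying K\H{o}nig's theorem to the multigraph with multiplicities $t^F_{a,i}$ and concluding that $\max_a\sum_i t^F_{a,i}$ matchings suffice. That is exactly the max/sum exchange you were suspicious of. K\H{o}nig bounds the size of an optimal edge-colouring of the whole multigraph. Algorithm~\ref{alg:elim} instead recolours the current graph $G(S_t)$ in every round and pays $\deg(G(S_t))$ each time, so the argument does not apply to the algorithm as written.

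The harmonic pattern is realizable by the algorithm. Give each $p_k$, $k\ge 2$, $k$ private arms whose means for $p_k$ form an arithmetic progression with step of order $\sqrt{k/L}$, and keep all arms' preferences well separated. To force order-$L$ rounds before stopping, let $p_1$'s top two arms $x,y$ both rank $p_1$ first and satisfy $\mu_{p_1,x}-\mu_{p_1,y}=1/\sqrt L$. Every valid profile must then contain $x\succ_{p_1}y$, so $\Delta_F\le 1/\sqrt L$ and the cap raises no $\Delta^F_{a,i}$. For $L$ large relative to $\max\{K,N\}$, the algorithm then uses a factor of order $\log\max\{K,N\}$ more matchings than the displayed expression.

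So the missing step is not merely delicate. Neither your argument nor the paper's K\H{o}nig argument establishes the bound as stated. The bound you can actually prove for this sampling rule carries the extra $H_{\max\{K,N\}}$ factor.
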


\subsection{Regret Analysis}
\label{sec:regret}
As previously discussed, almost all results bounding regret w.r.t. the optimal stable matching are based on the ETC framework. More specifically, in our setting we can employ an meta-algorithm that runs the E-PSM algorithm (Algorithm~\ref{alg:elim}) during the exploration phase, and subsequently commits to the identified matching for all remaining rounds.
We provide a bound for the following notion of regret:
\begin{equation}
\label{eq:regret}
    R(T) = \sum_{t=1}^T \mathbb{I}\{m_t \neq m^*\} \text{.}
\end{equation} 
\cite{basu2025competing} uses a similar binary regret for a matching to be stable. In addition, our metric upper-bounds the player-optimal stable regret defined by \cite{pmlr-v108-liu20c}, a metric that reflects market stability only when it is sublinear for every agent in the market.

\begin{theorem}\label{thm:regret}The E-PSM meta-algorithm with horizon $T$ and $\delta = 1/T$ has an expected regret $\mathbb{E}[R(T)]$ bounded by:$$ \mathcal{O}\left( \min \left\{ T, \max_{a\in\mathfrak{A}}\sum_{i\in OS(a)}\frac{32 \log(8KNT/\Delta_{a,i}^{F})}{(\Delta_{a,i}^{F})^{2}} \right\} \right) \text{.}$$\end{theorem}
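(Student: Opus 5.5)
The plan is the usual explore-then-commit decomposition, with Theorem~\ref{th:theorem_elim} as the exploration guarantee and $\delta = 1/T$. The trivial bound $R(T)\le T$ holds because the indicator in \eqref{eq:regret} is at most one per round, so only the second term of the minimum needs proof. Let $\tau$ be the number of matchings sampled by E-PSM before it stops, truncated at $T$. Let $m_\tau$ be the recommended matching. After stopping, the meta-algorithm plays $m_\tau$ in every remaining round. This gives the pathwise bound
\[
R(T) \le \min\{\tau, T\} + (T-\tau)_+\,\mathbb{I}\{m_\tau \neq m^\star\},
\]
and hence $\mathbb{E}[R(T)] \le \mathbb{E}[\min\{\tau,T\}] + T\,\mathbb{P}(m_\tau\neq m^\star)$.

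Let $\mathcal{E}$ be the good event on which every confidence interval built from \eqref{eq:ci} contains its true mean, for all agents, all partners and all rounds. With $\delta=1/T$, the analysis of Theorem~\ref{th:theorem_elim} gives $\mathbb{P}(\mathcal{E}^c)\le \delta = 1/T$; this is the usual union bound over $2KN$ pairs and over rounds via the $t^2$ factor. On $\mathcal{E}$, every edge added by \eqref{eq:update_pref} agrees with the true ranking, so $P_A \mapsto F_A$. Consequently any PSM returned by POSM is the true optimal stable matching $m^\star$. If instead all pairs are eliminated, then $P_A=F_A$ on $\mathcal{E}$ and POSM again returns $m^\star$. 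Therefore the commit term is at most $T\cdot\mathbb{P}(\mathcal{E}^c)\le 1$.

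For the exploration term, split on $\mathcal{E}$:
\[
\mathbb{E}[\min\{\tau,T\}] \le \mathbb{E}[\tau\,\mathbb{I}_{\mathcal{E}}] + T\,\mathbb{P}(\mathcal{E}^c).
\]
The second term is again at most $1$. On $\mathcal{E}$, the argument of Theorem~\ref{th:theorem_elim} applies deterministically and gives
\[
\tau \le \max_{a\in\setAgents}\sum_{i\in OS(a)} \frac{32\log\bigl(8KN/(\delta\Delta^F_{a,i})\bigr)}{(\Delta^F_{a,i})^2}.
\]
The argument has two parts. First, each pair $(a,i)$ stops being sampled once its round count $t$ satisfies $c_{a,i}(t)\lesssim \Delta^F_{a,i}/4$. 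Second, by Kőnig's theorem the number of matchings per round equals the maximum degree of the active graph, which turns the per-pair counts into the max-over-agents sum. The $\Delta_F$ part of $\Delta^F_{a,i}$ enters because, on $\mathcal{E}$, once every pair is sampled enough to resolve gaps of size $\Delta_F$, the learned profile contains a valid profile and POSM stops. Substituting $\delta=1/T$ turns the logarithm into $\log(8KNT/\Delta^F_{a,i})$. Adding the constants $2$ from the two failure terms, which are absorbed into the $\mathcal{O}$, yields the stated bound.

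The main obstacle is that Theorem~\ref{th:theorem_elim} is stated as a high-probability bound on $\tau$, not as a bound in expectation. So one must check that its proof actually shows the sample-count bound holds \emph{deterministically on} $\mathcal{E}$. If it does not, one should re-derive it on $\mathcal{E}$ by solving $\sqrt{2\ln(8KNt^2T)/t}\le \Delta/4$ for $t$. That inversion gives $t = \mathcal{O}(\Delta^{-2}\log(KNT/\Delta))$ up to constants, via the standard lemma for $t\ge c\log(t)/\Delta^2$. Truncation at $T$ means no control of $\tau$ off $\mathcal{E}$ is ever needed.
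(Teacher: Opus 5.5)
Your proposal is correct and follows the same explore-then-commit argument as the paper: take $\delta = 1/T$ so that the failure branch contributes at most $T\delta = 1$, bound the exploration length by the sample-complexity bound of Theorem~\ref{th:theorem_elim}, and cap everything by the trivial bound $T$. The one difference is in the conditioning event: the paper conditions on the correctness event $\{m_\tau = m^\star\}$ and then uses a sample-complexity bound that was derived on the concentration event (via Lemma~\ref{lem:elimination_time}), whereas you split directly on the concentration event, which is the cleaner and more rigorous way to make that substitution.
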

Note that this minimum over the horizon cannot be avoided by any algorithm employing the Explore-Then-Commit framework. To our knowledge, this is the first regret bound with respect to the optimal stable matching under two-sided uncertainty that avoids a dependence on  $\Delta_{min}$.

\subsection{Extended Elimination Rule}
\label{sec:ext_elim}
We now present an extended version of our previous algorithm, which further eliminates pairs upon the identification of a \emph{super stable matching} (see Algorithm~\ref{alg:imp-elim} in the Appendix). Specifically, if at some time step the partial preference profile $\{P_a\}_{a \in \setAgents}$ admits a super stable matching $m_{\mathrm{ss}}$, we use the following elimination rule:
\begin{align*}
\label{eq:ext_elim_rule}
    S_p \leftarrow S_p \setminus \{a\} \quad \text{if} \quad m_{ss}(p) \succ_{P_p}  a \quad \forall \; p \in \setPlayer. \\
    S_a \leftarrow S_a \setminus \{p\} \quad \text{if} \quad p \succ_{P_a} m_{ss}(a) \quad \forall \; a \in \setArms.
\end{align*}
To show that this elimination still allows the identification of a valid profile, and thus a pervasive stable matching, we first show that one specific profile dependant on the optimal stable matching is always valid.
\begin{proposition}\label{prop: a-valid-profile}
    Let $F^*$ be a full ranking profile for agents $\setAgents = \setPlayer \cup \setArms$ with $\setPlayer$-optimal stable matching $\matching^*$. 
    Then the following partial profile $P$ is a valid profile for $F^*$:
\begin{itemize}
    \item $\begin{aligned}[t]
        \forall p\in\setPlayer, & P_p = \{(a, a') \in \setArms\times\setArms \mid a \succ a' \succeq_{F^*_p} \matching^*(p)\} \\
                 &\cup \{(a, a') \in \setArms\times\setArms \mid a \succeq_{F^*_p} \matching^*(p) \succ_{F^*_p} a'\}
          \end{aligned}$
    \item $\begin{aligned}[t]
        \forall a\in\setArms, &P_a = \{(p, p') \in \setArms\times\setArms \mid p' \prec p \preceq_{F^*_a} \matching^*(a)\} \\
                 &\cup \{(p, p') \in \setArms\times\setArms \mid p \preceq_{F^*_a} \matching^*(a) \prec_{F^*_a} p'\}
          \end{aligned}$
\end{itemize}
\end{proposition}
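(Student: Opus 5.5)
The plan is to verify the two conditions of a valid profile separately. Compatibility is the easy one; the real work is showing that $\matching^*$ is the $\setPlayer$-optimal stable matching under every refinement $F\in\mathcal{F}(P)$.

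I read the edge sets as follows, treating $\bot$ as one more element of each agent's ranked set. A player $p$ knows the complete $F^*_p$-order among the arms weakly above $\matching^*(p)$. It also knows that $\matching^*(p)$ beats every arm below it. Symmetrically, an arm $a$ knows the complete $F^*_a$-order among the players weakly below $\matching^*(a)$. It also knows that every player above $\matching^*(a)$ beats $\matching^*(a)$ and everything below it.

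\textbf{Compatibility.} Every edge of $P_p$ and $P_a$ is a relation that holds in $F^*$, so $P_a\subseteq F^*_a$ for every agent. No edge of $P$ is reversed in $F^*$, hence $P\mapsto F^*$.

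\textbf{Stability of $\matching^*$ under every $F\in\mathcal{F}(P)$.} Fix such an $F$. For every $a$, the comparison between $a$ and $\matching^*(p)$ is an edge of $P_p$, so $a\succ_{F_p}\matching^*(p)$ if and only if $a\succ_{F^*_p}\matching^*(p)$. The same holds with $\bot$ in place of $a$. Likewise, for every $p$, $p\succ_{F_a}\matching^*(a)$ if and only if $p\succ_{F^*_a}\matching^*(a)$. Consequently a blocking pair for $\matching^*$, or a violation of individual rationality, under $F$ would also be one under $F^*$. This contradicts the stability of $\matching^*$ under $F^*$.

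\textbf{$\setPlayer$-optimality.} I will show that player-proposing Gale–Shapley on $F$ produces exactly the run it produces on $F^*$. Since GS returns the $\setPlayer$-optimal stable matching, this gives the claim.

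First, a property of the $F^*$-run. Every player $p$ proposes only to arms $a$ with $a\succeq_{F^*_p}\matching^*(p)$. If $p\neq\matching^*(a)$ proposes to $a$, then $a\succ_{F^*_p}\matching^*(p)$. Stability of $\matching^*$ under $F^*$ then forces $\matching^*(a)\succ_{F^*_a}p$. So every proposer to $a$ lies weakly below $\matching^*(a)$ in $F^*_a$.

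Next, an induction over the proposals of the $F^*$-run. The hypothesis is that, up to step $s$, the $F$-run has made the same proposals and the same tentative acceptances. At the next step, the proposing player moves down its list, still within the arms weakly above $\matching^*(p)$. On that segment $F_p$ agrees with $F^*_p$, because the order there is fully fixed by $P_p$. So the player proposes to the same arm. That arm compares the new proposer with its current holder, or with $\bot$. Both lie weakly below $\matching^*(a)$, where $F_a$ agrees with $F^*_a$, so the arm makes the same accept or reject decision. The two runs therefore coincide and terminate in $\matching^*$. This step relies on GS's outcome being independent of the order in which free players propose.

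\textbf{Main obstacle.} I expect the delicate point to be this induction. It must guarantee that no player in the $F$-run ever descends below $\matching^*(p)$, where $F_p$ is arbitrary, and that no arm ever compares two players outside its fixed segment. Both facts follow from the runs coinciding step by step, but the argument has to be arranged carefully so that it is not circular. I would also make explicit the handling of $\bot$, in particular that $\matching^*(p)=\bot$ is consistent with the partial order.
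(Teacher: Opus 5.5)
Your proposal is correct. It handles compatibility and stability as the paper does, but proves $\setPlayer$-optimality by a different route.

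\paragraph{The paper's route.} The paper's stability argument is the same as yours: every comparison with a $\matching^*$-partner is fixed by $P$, so a blocking pair under $F$ would also block under $F^*$. For optimality, the paper argues by contradiction via rotations. If $\matching^*$ were not $\setPlayer$-optimal under $F$, there would be a cycle $(p_1,a_1),\dots,(p_\ell,a_\ell)\in\matching^*$ with $a_{i+1}\succ_{F_{p_i}}a_i$ and $p_{i-1}\prec_{F_{a_i}}p_i$. These are again comparisons with $\matching^*$-partners, so they also hold in $F^*$, which the paper declares contradicts optimality of $\matching^*$ there.

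That route is shorter, but it relies on rotation theory. It also leaves implicit that the rotated matching is actually stable under $F^*$. This is true, because each improved player's new partner lies in its fully ordered segment above $\matching^*(p)$. Likewise, each worsened arm's new partner lies in its fully ordered segment below $\matching^*(a)$. So every comparison relevant to the rotated matching's stability is fixed by $P$.

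\paragraph{What your route buys.} Your coupling of the two player-proposing GS runs needs only the standard facts about GS and is fully constructive. It makes your separate stability step redundant, since the GS output on $F$ is automatically stable. It also never consults the relations in $P_a$ that place players above $\matching^*(a)$ over $\matching^*(a)$. So it shows those relations could be dropped and the smaller profile would still be valid.

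\paragraph{Details to make explicit.} First, your reading of $P_a$ silently corrects the statement as printed. Its second set reverses the $F^*_a$-order, and $\setArms\times\setArms$ should be $\setPlayer\times\setPlayer$; your compatibility step depends on this correction.

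Second, in the induction it is not enough that the order inside the upper segment is fixed. You also need $\matching^*(p)\succ_{P_p}a'$ for every arm $a'$ (and $\bot$) below $\matching^*(p)$. This is what makes that segment a prefix of $F_p$, so the $F$-run never proposes to a lower arm. It also guarantees that a player with $\matching^*(p)=\bot$ has the same acceptable arms under $F_p$ as under $F^*_p$, so the $F$-run halts exactly when the $F^*$-run does.
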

Now any partial profile compatible with profile $P$ as specified in Proposition~\ref{prop: a-valid-profile} is again a valid profile.
In particular, for a partial profile $P' = \{P'_a\}_{a \in \setAgents}$ that is compatible with full ranking profile $F$, $P^* = P \cup P' = \{P_a \cup P'_a\}_{a \in \setAgents}$ is a valid profile.
Assume that $P'$ admits a super stable matching $\matching$ that is not the PSM. Then $P^*$ does not contain:
\begin{itemize}
    \item $a \succ_{P^*_p} a'$ for $a, a' \in \setArms$ and $p \in \setPlayer$ with $\matching(p) \succ_{F_p} a,a'$, unless $a \succ_{P'_p} a'$:\\
    For optimal stable matching $\matching^*$ in $F$, we have $\matching^*(p) \succ_{F_p} \matching(p) \succ_{F_p} a,a'$. Thus $(a,a') \notin P_p$.
    \item $p \prec_{P^*_a} p'$ for $p, p' \in \setPlayer$ and $a \in \setArms$ with $\matching(a) \prec_{F_a} p,p'$, unless $p \prec_{P'_a} p'$: \\
    We have $\matching^*(a) \prec_{F_a} \matching(a) \prec_{F_a} p,p'$ and thus $(p,p') \notin P_a$.
\end{itemize}
 As these relations are not known in $P^*$, and $P^*$ is valid, we can eliminate agents as described above.

Note that just because Proposition~\ref{prop: a-valid-profile} guarantees that a valid profile (specifically $P^*$) can still be identified after eliminating agents form active pairs, this does not mean that the valid profiles that are still learnable attain the best-case gap $\Delta_{a,i}^{F}= \max(\Delta^{F},\Delta_{a,i}^{\min})$. We can therefore not transfer the sample complexity bound from Theorem~\ref{th:theorem_elim}. However, a super stable matching might be identifiable very fast, leading to early elimination of pairs and hence decreased sample complexity. We explore the sample complexity in our experiments to identify whether this trade-off between early elimination and restriction to valid profiles is beneficial.


\section{Lower bound}
\label{sec:lower_bound}
We now establish a non-explicit lower bound on sample complexity by adapting the analysis of \cite{jourdan2021efficient}.

For an instance $\nu \in \mathcal{I}$, we denote with $\matching^{\star}_{\nu}$ the optimal stable matching. To provide a lower bounds on the sample complexity, we define the set of alternative environments $\mathcal{I}_{\text{alt}}(\nu)$ as the collection of all environments in $\mathcal{I}$ where the optimal stable matching is different from that in $\nu$:
\begin{equation}
\mathcal{I}_{\text{alt}}(\nu) = \{ \nu' \in \mathcal{I} : \matching^{\star}_{\nu'} \neq \matching^{\star}_{\nu} \}
\end{equation}
Now we can define the transformed simplex $\mathcal{S}_{\mid \mathcal{M}\mid} = \{ \sum_{m \in \mathcal{M}} I_{m} w_m : w_m \in \Delta_\mathcal{M} \} \subset \mathbb{R}^{\setPlayer \times \setArms}$, where $I_{m} \subset \{0,1\}^{\setPlayer \times \setArms}$ represents the binary indicator vector of active pairs in matching $m$, and $w_m \in \Delta_\mathcal{M}$ denotes the proportion of rounds in which that specific matching is sampled.

\begin{theorem}
\label{th:lowerbound}
For any $\delta$-PACOS algorithm and any matching instance $\nu$,
\begin{equation}
    \mathbb{E}_{\nu}[\tau] \ge \ln{\frac{1}{2.4\delta}} c_{\star}^{-1}(\nu)
\end{equation}
where $c_{\star}(\nu)$ is the solution of the optimization problem:
\begin{equation*}
    c_{\star}(\nu) = \sup_{w\in \mathcal{S}_{\mid \mathcal{M}\mid}} \inf_{\nu' \in Alt(\nu)} \sum_{a \in \mathfrak{A}} \sum_{i \in OS(a)} w_{a,i} KL(\nu_{a,i}, \nu'_{a,i} ).
\end{equation*}
\end{theorem}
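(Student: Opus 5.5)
We prove Theorem~\ref{th:lowerbound} with the standard change-of-measure argument for fixed-confidence identification, as in \cite{jourdan2021efficient} and Kaufmann et al. The only adaptation is to semi-bandit feedback over matchings. Fix a $\delta$-PCOS algorithm with $\mathbb{E}_\nu[\tau]<\infty$ (otherwise there is nothing to prove) and an alternative $\nu'\in\mathcal{I}_{\text{alt}}(\nu)$.

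\textbf{Step 1 (log-likelihood decomposition).} Under semi-bandit feedback, choosing $m_t$ reveals one independent sample from $\nu_{a,m_t(a)}$ for every matched agent $a$. The log-likelihood ratio of the observations up to $\tau$ therefore splits into a sum over agent--partner pairs. Wald's identity then gives
\[
\mathrm{KL}\big(\mathbb{P}_\nu^{\mathcal{F}_\tau},\mathbb{P}_{\nu'}^{\mathcal{F}_\tau}\big)=\sum_{a\in\mathfrak{A}}\sum_{i\in OS(a)}\mathbb{E}_\nu[N_{a,i}(\tau)]\,\mathrm{KL}(\nu_{a,i},\nu'_{a,i}).
\]
The sampling rule is $\mathcal{F}_{t-1}$-measurable, so the sampling decisions contribute no likelihood terms.

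\textbf{Step 2 (data-processing).} Take the event $E=\{m_\tau=\matching^\star_\nu\}$, which is $\mathcal{F}_\tau$-measurable. The $\delta$-PCOS property gives $\mathbb{P}_\nu(E)\ge1-\delta$. Since $\matching^\star_{\nu'}\neq\matching^\star_\nu$, it also gives $\mathbb{P}_{\nu'}(E)\le\delta$. By the data-processing inequality and the bound $\mathrm{kl}(1-\delta,\delta)\ge\ln\frac{1}{2.4\delta}$, the Step~1 sum is at least $\ln\frac{1}{2.4\delta}$.

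\textbf{Step 3 (normalize to the simplex).} Write $N_{a,i}(\tau)=\sum_{m\ni(a,i)}N_m(\tau)$, where $N_m(\tau)$ counts the rounds in which $m$ is played. Set $w_m=\mathbb{E}_\nu[N_m(\tau)]/\mathbb{E}_\nu[\tau]\in\Delta_{\mathcal{M}}$ and $w=\sum_m I_m w_m\in\mathcal{S}_{|\mathcal{M}|}$. For $(a,i)$ with $a\in\setPlayer$, $i\in\setArms$, we get $\mathbb{E}_\nu[N_{a,i}(\tau)]=\mathbb{E}_\nu[\tau]\,w_{a,i}$, and the same identity holds with the roles of $a,i$ swapped. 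Hence
\[
\ln\tfrac{1}{2.4\delta}\le\mathbb{E}_\nu[\tau]\sum_{a}\sum_{i\in OS(a)}w_{a,i}\,\mathrm{KL}(\nu_{a,i},\nu'_{a,i}).
\]
This holds for every alternative, so we take the infimum over $\nu'\in\mathcal{I}_{\text{alt}}(\nu)$. The fixed $w$ is then bounded by the supremum over $\mathcal{S}_{|\mathcal{M}|}$, giving $\ln\frac{1}{2.4\delta}\le \mathbb{E}_\nu[\tau]\,c_\star(\nu)$. Rearranging yields the claim.

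\textbf{Main obstacle.} The delicate point is Step~1. Each matched pair $(p,a)$ yields two observations, $X_{p,a}$ and $X_{a,p}$, from two different distributions. The indexing $w_{a,i}$ must therefore count both directions correctly, which requires $w_{a,i}=w_{i,a}$ as induced by $I_m$. We also need independence of the rewards within a round so that the KL divergence of the joint observation factorizes. We will state this independence as part of the model; if it fails, the bound still holds with the KL of the joint pair distribution, but it is no longer expressed in the stated form. A second technicality is $\mathbb{E}_\nu[\tau]<\infty$, which Wald's identity needs. We handle this case separately, since the bound is trivial when $\mathbb{E}_\nu[\tau]=\infty$.
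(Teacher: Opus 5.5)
Your proposal is correct and follows essentially the same route as the paper. Both arguments obtain the change-of-distribution inequality from Wald's identity plus the data-processing bound of Kaufmann et al.\ (their Lemma 19), apply $d(1-\delta,\delta)\ge\ln\frac{1}{2.4\delta}$, normalize $\mathbb{E}_\nu[N_{a,i}(\tau)]/\mathbb{E}_\nu[\tau]$ into $\mathcal{S}_{|\mathcal{M}|}$, and then take the infimum over alternatives and the supremum over $w$. Your explicit choice $w_m=\mathbb{E}_\nu[N_m(\tau)]/\mathbb{E}_\nu[\tau]$ and your remarks on $\mathbb{E}_\nu[\tau]<\infty$ and within-round independence just make precise steps that the paper asserts in one line.
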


Although the bound is not explicit, it characterizes the intrinsic difficulty of the problem. In the MAB literature, this type of lower bound typically guides the design of optimal algorithms \cite{Degenne2019NonAsymptoticPE}. However, because solving the associated optimization problem is computationally hard, standard Track-and-Stop methods \cite{garivier2016optimal} cannot be directly adapted. Developing efficient approximations, such as adapting the methods of \cite{jourdan2021efficient}, is an interesting direction that remains open.

To provide a more concrete analysis, we establish an explicit lower bound for instances with global preferences in Appendix~\ref{app:lower_bound2}, similar to the approach of \cite{kaufmann2016complexity}, using arguments based on minimum valid preference profile.

\section{Simulations}
\label{sec:sim}
In this section, we present simulations to empirically evaluate the proposed algorithms on random instances for varying market sizes $n$, fixing $n=N=K$ agents on each side. The code to reproduce the presented results is provided online\footnote{\url{https://github.com/a-athanasopoulos/Probably-Correct-Optimal-Stable-Matching}}.

\textbf{Settings:} We perform two sets of experiments, covering both pure exploration and regret minimization, which we present in the following subsections.

\textbf{Instances:} To generate instances, we first construct the agents' preferences using random perturbations. We then assign the rewards for each agent $a \in \setAgents$ by sampling the reward gaps $\Delta_{a, i, i+1}$ between consecutive agents from a uniform distribution over $[\Delta_{min}, 1]$, where we set $\Delta_{min}= 0.2$.

\textbf{Algorithms:} To isolate the individual effects of the elimination strategy and the PSM-based stopping rule, we compare:
\begin{enumerate}[nosep, leftmargin=*]
    \item \textbf{U-$\Delta$:} The uniform sampling strategy of Section \ref{sec:etc} with known $\Delta_{\min}$.
    \item \textbf{U:} Uniform sampling of all pairs, stopping only when all confidence intervals separate (without $\Delta_{\min}$ knowledge).
    \item \textbf{U-PSM:} Uniform sampling of all pairs, terminating when a pervasive stable matching is identified.
    \item \textbf{E-NS:} The elimination algorithm without stopping rule, terminating only once all pairs are eliminated.
    \item \textbf{E-PSM:} The elimination algorithm of Section \ref{sec:elimination_algo}, terminating when a pervasive stable matching is identified.
    \item \textbf{EE-PSM (Extended PSM):} The extended elimination algorithm that further eliminates pairs after identifying a super stable matching as discussed in Section~\ref{sec:ext_elim}.
\end{enumerate}

\subsection{Pure Exploration}
\label{sim:pure_exp}

In Figure~\ref{fig:exp2}, we illustrate the sample complexity of the different approaches, averaged over 100 random instances, along with their standard deviations. For these experiments, we set the confidence parameter to $\delta=0.1$. Notably, all algorithms successfully identified the optimal stable matching in every run; this empirical correctness aligns with standard observations in the MAB literature \citep{pmlr-v35-jamieson14}. 

\begin{figure}[t]
    \centering
    \includegraphics[width=1\linewidth]{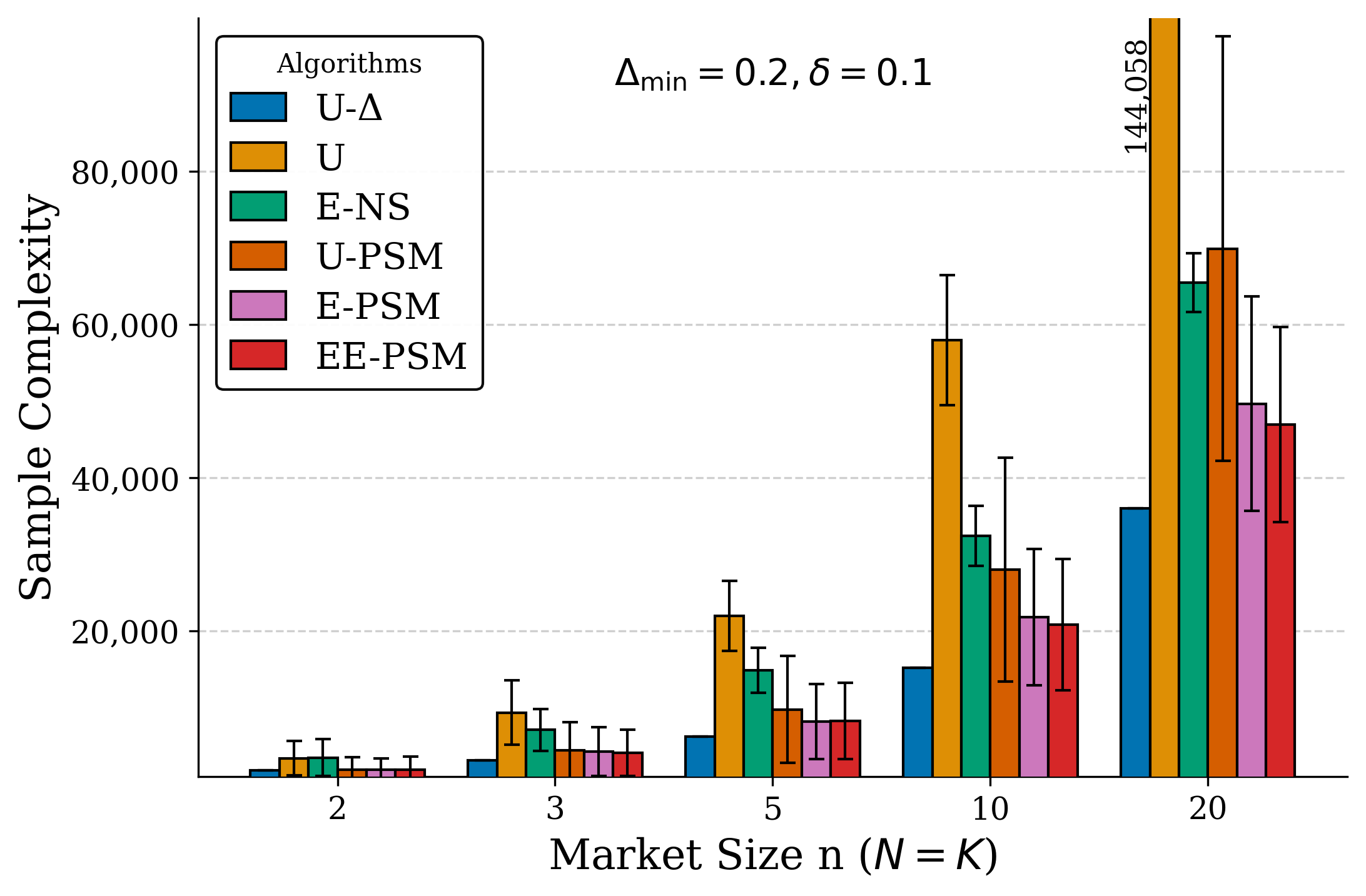}
    \caption{Sample complexity as a function of market size $n$.}
    \label{fig:exp2}
\end{figure}

Regarding sample complexity, we observe that the proposed algorithms, \textbf{E-PSM} and \textbf{EE-PSM}, significantly outperform all other baselines. Interestingly, we observe only a slight performance difference between these two specific algorithms on random instances. More specifically, upon further examining their behavior, we found that although the extended algorithm eliminated a large number of pairs, the size of the minimum matching cover did not decrease significantly, leading to near-identical sample complexities. 

Finally, the results demonstrate clear, independent benefits from both the elimination strategy and the early stopping rule. For example, comparing \textbf{U-PSM} to \textbf{E-PSM} isolates the positive effect of pair elimination, while comparing \textbf{E-NS} to \textbf{E-PSM} highlights the drastic impact of the stopping rule. Furthermore, \textbf{E-NS} outperforms \textbf{U-PSM} in $n=20$, demonstrating that the effect of the elimination rule is relatively larger on average as the market grows. To conclude, the results show that \textbf{U-PSM} exhibits larger variability across the random instances, suggesting a high sensitivity to specific reward structures. In contrast, the effect of the elimination rule in \textbf{E-NS} is much more stable.

Appendix~\ref{app:additional_exp} presents a similar experiment with different numbers of agents on each side, further demonstrating the effectiveness of our approach.

\subsection{Regret Minimization}
We further provide an ablation study for regret minimization. We measure regret using the cumulative market instability defined in Section~\ref{sec:regret}, Eq.~\ref{eq:regret}, which counts the number of rounds in which the selected matching differs from the optimal stable matching \citep{das2005two, basu2025competing}.

To clearly demonstrate the performance of our method against existing benchmarks, we focus on the more extensively studied one-sided uncertainty setting. This allows us to compare with prior regret-minimization algorithms, while our method naturally applies to the two-sided uncertainty setting considered in this paper. 

\textbf{Additional benchmark:} We specifically compare our approach with AETGSE, the algorithm of \cite{kong2024improved}, which combines adaptive elimination with the logic of GS algorithm. In the centralized setting, agents explore their currently valid partners in a coordinated round-robin manner, eliminating partners that are certified to be suboptimal. Once an agent identifies its most preferred valid partner, it stop exploring and temporarily commits to it. If this partner becomes unavailable (due to a higher-priority agent), the agent restarts the exploration process.

\begin{figure}[t]
    \centering
    \includegraphics[width=1\linewidth]{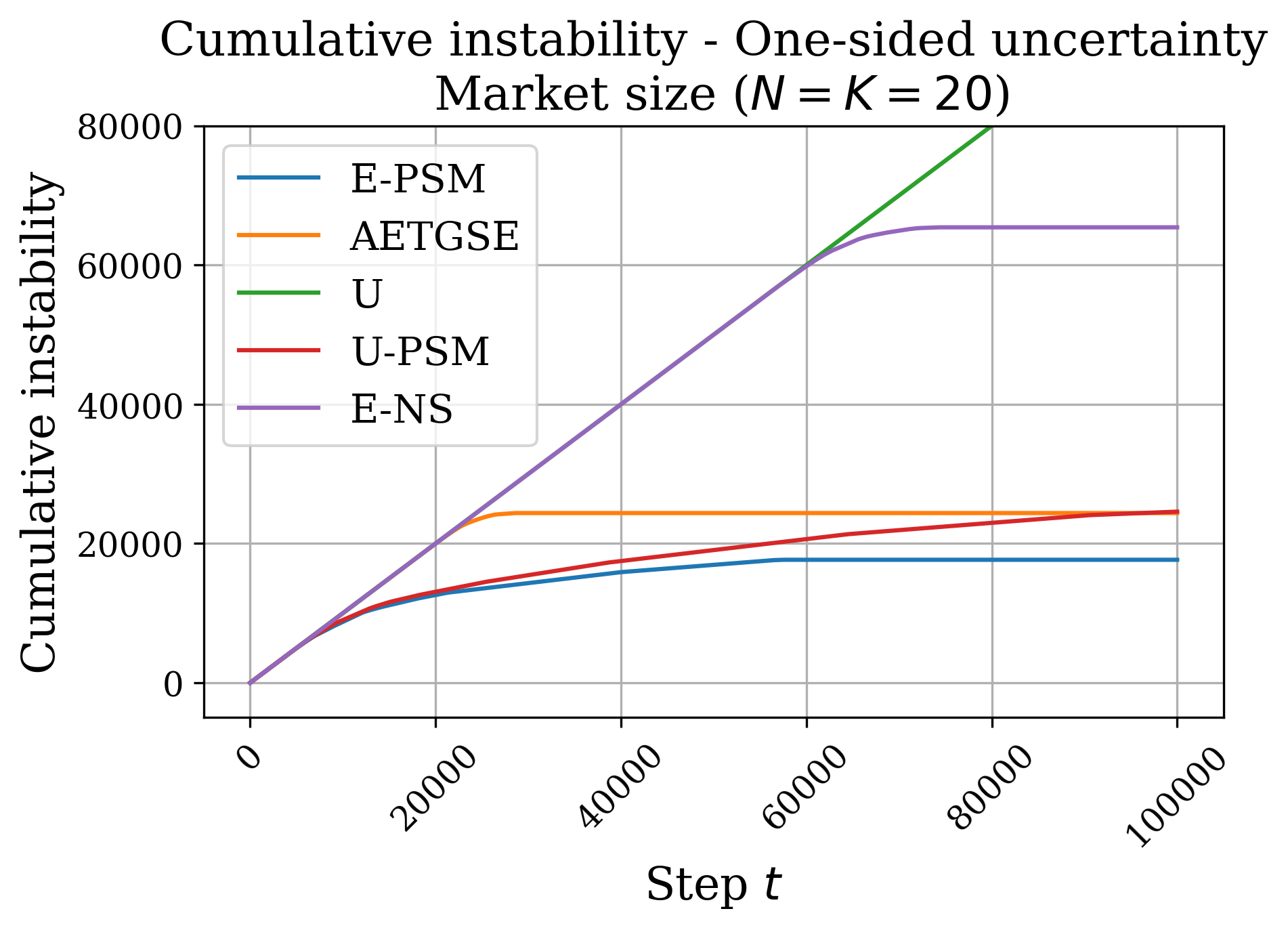}
    \caption{Cumulative regret}
    \label{fig:exp_regret}
\end{figure}

Figure~\ref{fig:exp_regret} presents the results over $50$ independent experiments for each algorithm, using random market instances with the number of agents on each side fixed to $K=N=20$. The results show that our elimination-based approach with a stopping rule based on pervasive stable matching (E-PSM) achieves lower regret in this one-sided uncertainty setting. Moreover, the uniform-sampling approach (U-PSM) interpolates between AETGS-E and E-PSM, suggesting that both the elimination procedure and our stopping rule contribute to improved learning performance.

Beyond the experimental comparison, our theoretical result yields a tighter upper bound than the analysis of AETGS-E, since the relevant gap depends on the minimum valid partial profile (see Section~\ref{sec:Setting}). Nevertheless, we highlight that AETGS-E may perform better on specific instances where agents do not need to restart exploration, due to its adaptive sampling approach. However, in more general instances, where exploration restarts, our algorithm achieves better performance, as demonstrated by our simulations.

\section{Conclusion}
In this work, we investigated a sequential learning problem in matching markets where both sides are initially unaware of their true preferences. Focusing on the pure exploration setting, our objective was to identify the optimal stable matching with high probability. Our primary contribution lies in leveraging partial knowledge of agents' preferences to stop exploration once a pervasive stable matching is found. We first demonstrated the performance gains of this approach on uniform exploration strategies, and subsequently introduced practical elimination algorithms along with a refined sample complexity analysis. Further, we extended our insights to the regret minimization setting and established theoretical lower bounds on the sample complexity.

For future work, it is worth proving theoretical sample-complexity guarantees for the extended algorithm from Section~\ref{sec:ext_elim}, and investigating whether we can further eliminate pairs after identifying a super stable matching, specifically by removing pairs that are not part of an exposed rotation \citep{gusfield1989stable}. In addition, adaptive algorithms, as in \citep{athanasopoulos2025probably, kong2024improved}, that do not uniformly sample active pairs are also a promising approach. Finally, designing optimal algorithms that match the lower bound established in our analysis remains an open problem.

\begin{acknowledgements}
    This project was partially supported by the Research Council of Norway through its Centre of Excellence: Integreat - The Norwegian Centre for knowledge-driven machine learning, project number 332645.
\end{acknowledgements}

\bibliography{uai2026-template}

\onecolumn

\title{Probably Correct Optimal Stable Matching: \\ The Two-Sided Uncertainty Case\\(Supplementary Material)}
\maketitle

\appendix

\section*{Appendix Outline}

In this appendix, we provide the detailed proofs and discussion omitted from the main text.
\begin{itemize}
    \item \textbf{Appendix~\ref{app:related_work}:} Provides an extended related work section along with a discussion on different learning objectives.
    \item \textbf{Appendix~\ref{app:etc}:} Contains the analysis of Theorems~\ref{th:theorem_etc1} and~\ref{th:theorem_etc2}  from Section~\ref{sec:etc}.
    \item \textbf{Appendix~\ref{app:elim}:} Contains the proof of Theorem~\ref{th:theorem_elim} from Section~\ref{sec:elim_psm}.
    \item \textbf{Appendix~\ref{app:regret}:} Contains the proof of  Theorem~\ref{thm:regret} from Section~\ref{sec:regret}.
    \item \textbf{Appendix~\ref{app:extended_elim}:} Contains the proof of Proposition~\ref{prop: a-valid-profile} from Section~\ref{sec:ext_elim}.
    \item \textbf{Appendix~\ref{app:lower_bound1}:} Contains the proof of Theorem~\ref{th:lowerbound} from Section~\ref{sec:lower_bound}.
    \item \textbf{Appendix~\ref{app:lower_bound2}:} Contains the analysis of the lower bound for specific instance and the proof of Lemma~\ref{lemma:global_pref}.
\end{itemize}

\section{Extended Related Work}
\label{app:related_work}
This section provides additional discussion on related work, expanding upon the overview presented in the main text.

\paragraph{MAB and pure exploration:} The stochastic MAB problem, introduced by \cite{thompson1933likelihood}, has been extensively studied in the past few decades, becoming a fundamental tool for analyzing online decision-making problems under uncertainty. A learner selects one action (or arm) in each round and receives a numeric reward drawn from an unknown distribution that reflects the utility of the action. In the classic version of regret minimization, the learner aims to maximize cumulative rewards by balancing exploration and exploitation: searching for optimal actions versus pulling known high-reward arms. Another popular setting is the pure exploration problem, where the learner's objective is to identify a good action regardless of the cost incurred in the learning process. This problem is usually studied in the fixed-confidence setting, where the learner aims to identify an optimal action with high probability using as few samples as possible \citep{even2002pac}. Initial research in the field explored uniform exploration and elimination-based strategies \cite{mannor2004sample, even2006action}, focusing on their correctness and sample complexity. More recent studies \cite{kaufmann2016complexity, garivier2016optimal} establish a lower bound on sample complexity, motivating the Track-and-Stop algorithm which achieves asymptotic optimality. In this paper, we study identifying the optimal stable matching in the fixed-confidence setting. For a comprehensive overview of MAB, we refer to the book by \cite{lattimore2020bandit}.

\paragraph{Stable matching problem:} Two-sided matching markets model situations in which two distinct sets of agents aim to match with each other based on individual preferences~\citep{gusfield1989stable, Roth_Sotomayor_1990, manlove2013algorithmics}. Classic examples include students applying to universities, workers matching with firms, and organ donors paired with patients~\citep{roth1984evolution}. 

In their fundamental work, \citet{gale_Shapley_1962} introduced the concept of stable matching, a notion of equilibrium in which no two agents have an incentive to deviate.
\citet{gale_Shapley_1962} proved that the set of stable matchings $\mathcal{S}$ is always non-empty by introducing the Gale–Shapley (GS) algorithm, which is based on sequential proposals from one side of the market to the other. The algorithm returns the (unique) \emph{optimal stable matching} $m^\star$ for the proposing side, in the sense that every proposer receives the best partner they can obtain in any stable matching. Conversely, the resulting matching is pessimal for the side receiving the proposals, providing each agent on that side with their least-preferred partner among all stable matchings.

A core component of our approach is utilizing the agents' \emph{partial preference} rankings to identify an optimal stable matching, thereby circumventing the need to learn their full preference orderings. This approach is motivated by the concept of \emph{pervasive stable matching}, introduced by \cite{rastegari2014reasoning} to reason about optimal stable matchings under partial information in \emph{non-learning} scenarios. Specifically, a partial preference profile of the agents admits a pervasive stable matching if that matching remains optimally stable under every possible full refinement of the given preferences. A closely related notion is that of a \emph{super stable matching}, initially introduced to address stability in instances with indifferent preferences \citep{IRVING1994261}.

\paragraph{Leaning in matching markets:} The problem of two-sided markets with unknown preferences has recently attracted considerable attention in various settings. \citeauthor{das2005two} were the first to introduce the marriage problem with unknown preferences on both sides of the market, framing it as a MAB problem. In their work \cite{das2005two}, they empirically studied several algorithms in specific preference settings. A follow-up by \citet{pmlr-v108-liu20c} introduced the notions of \emph{Player-Optimal} and \emph{Player-Pessimal} stable regret. Focusing on a setting where only one side of the market has unknown preferences, they analyzed an Explore-Then-Commit algorithm, proving sublinear bounds for both regret notions. They also examined an Upper Confidence Bound approach, establishing sublinear bounds for Player-Pessimal stable regret; however, they proved an impossibility result for bounding Player-Optimal regret.

Later studies closed this gap by achieving sublinear player-optimal stable regret \citep{sankararaman2021dominate, basu2021beyond, kong2023player}, though they primarily focused on one-sided uncertainty and specific market structures. More recent studies have begun to consider settings where both sides of the market have uncertain preferences \citep{pagare2024explore, pmlr-v244-zhang24b}. However, these works employ Explore-Then-Commit type algorithms. In contrast, this work adopts a pure exploration framework that is more naturally suited to the analysis of such algorithms and enables refined results on regret with respect to the \emph{optimal} stable matching.

The pure exploration objective of identifying an \emph{optimal stable matching} with high probability was introduced by \cite{athanasopoulos2025probably}, while the work of \cite{NEURIPS2024_7fc0aac0} focuses on identifying \emph{any} stable matching. Both of these studies propose elimination-based algorithms restricted to one-sided uncertainty. We extend this literature by tackling a two-sided learning environment, providing refined arguments for the analysis of elimination algorithms. 

Conceptually, the closest work to ours is \cite{basu2025competing}, where the authors leverage partial preferences and the concept of super stability to bound regret in sequential learning scenarios. In contrast, our approach specifically relies on pervasive stable matchings, instead of the super stable matching. This distinction is crucial, as our objective is to identify the \emph{optimal} stable matching, rather than \emph{any} stable matching.

Prior work also considers decentralized settings \citep{liu2021bandit, kong2022thompson, sankararaman2021dominate, basu2021beyond, pagare2024explore, pmlr-v244-zhang24b}, models with transferable-utilities \citep{cen2022regret, jagadeesan2021learning, basu2025competing}, and the setting with many-to-one matchings \citep{saha2024altruistic, wang2022bandit, kong2024improved}.

\paragraph{Discussion on alternative objectives:}
Related work typically characterizes algorithmic performance based on the cumulative rewards collected by individual agents. In particular, most existing approaches follow the notion of agent-optimal (or pessimal) stable regret, which compares accumulated rewards against a baseline that always selects the optimal stable matching, following the definitions in the seminal work of \cite{pmlr-v108-liu20c}. This cumulative metric reflects stability only if the regret is sublinear for every agent in the market. Furthermore, this objective can be satisfied by a composition of unstable matchings over the time horizon—similar to the concept of fractional stability over probabilistic matchings that may individually be unstable \citep{caragiannis2019stable}. Another notion of regret, more compatible with our objective, is binary stable regret, which evaluates the stability of the matching at each discrete time step \citep{das2005two, basu2025competing}. Current studies in this area typically consider a matching to be stable without specifically requiring the identification of the optimal stable matching. Additionally, algorithms addressing these different objectives often rely on the Explore-then-Commit (ETC) framework, bounding the probability of error through sufficient exploration \citep{pmlr-v108-liu20c, pagare2024explore, basu2025competing, kong2023player}. Notable exceptions to these ETC-based approaches include analyses of UCB-style algorithms \citep{pmlr-v108-liu20c, jagadeesan2021learning2} and Thompson sampling \citep{kong2022thompson}, which do not guarantee sublinear regret with respect to the optimal stable matching. Ultimately, different objectives prioritize different criteria; we emphasize that our framework is specifically designed for applications where the primary goal is to learn the optimal stable matching itself, rather than merely achieving stability in a cumulative or general sense.

\section{Analysis of Section~\ref{sec:etc}}
\label{app:etc}
In this section, we provide the proofs for the sample complexity of our uniform exploration strategies.

\subsection{Proof of Theorem~\ref{th:theorem_etc1}}
\begin{proof}
We begin the proof by characterizing the sample complexity of the algorithm. We recall that the sample complexity is defined as the total number of matchings performed by the algorithm until termination.

\textbf{Sample Complexity} \\
The sample complexity follows directly from the design of Algorithm~\ref{alg:etc}. In each exploration round $h$, the algorithm ensures that every available pair $(p, a) \in \mathcal{P} \times \mathcal{A}$ is sampled exactly once, by sampling a sequence of $n=\max\{K, N\}$ matchings. This is achieved by sampling the matching from a minimal matching cover $\mathfrak{M}(\mathcal{P} \times \mathcal{A})$. By Kőnig's Theorem, the size of this cover is equal to the maximum degree of the underlying complete bipartite graph, which is $\max\{K, N\}$

Given that the total number of exploration rounds is $h$, the total number of matchings sampled is:
\begin{equation}
    \tau = h \cdot \max\{K, N\} = \left\lceil \frac{8 \ln(2KN/\delta)}{\Delta_{\min}^2} \right\rceil \cdot \max\{K, N\}.
\end{equation}
Thus, the sample complexity is $O\left(\max\{K, N\} \frac{\ln(KN/\delta)}{\Delta_{\min}^2}\right)$.

\textbf{Correctness} \\
We now prove that the algorithm returns the optimal stable matching with probability at least \(1-\delta\), i.e.,
\(\Pr(m_{\tau}=m^\star)\ge 1-\delta\).
If all agents' preference orderings are estimated correctly, then running GS on the estimated profile returns the true optimal stable matching; therefore,
\[
\Pr\!\left( m_{\tau} = m^{\star} \right)
\;\ge\;
\Pr\!\left( \hat{F}_a = F_a \;\; \forall a \in \setAgents \right)
\;=\;
1-\Pr\!\left(\exists a\in\setAgents:\hat{F}_a\neq F_a\right).
\]
Thus, it suffices to show \(\Pr(\exists a\in\setAgents:\hat{F}_a\neq F_a)\le \delta\).

Now fix an agent $a\in \setAgents$. A sufficient condition for every empirical ranking $\hat{F}_a$ to match $F_a$, is that for every agent $i \in OS(a)$, $|\hat{\mu}_{a,i} - \mu_{a,i}| < \frac{\Delta_{\min}}{2}$ holds. Thus for every pair $(i,j)$ where $\mu_{a,i}>\mu_{a,j}$ we have that:
\begin{equation}
\hat{\mu}_{a,i} > \mu_{a,i} - \frac{\Delta_{\min}}{2} \ge \mu_{a,j} + \frac{\Delta_{\min}}{2} > \hat{\mu}_{a,j},
\end{equation}
ensuring that $\hat{\mu}_{a,i} > \hat{\mu}_{a,j} $ using the definition of $\Delta_{min}$. 

By the construction of Algorithm 1, each pair $(a, i) \in \setAgents \times OS(a)$ is sampled exactly $h$ times. Since the rewards are $1$-sub-Gaussian, the empirical mean $\hat{\mu}_{a,i}$ is $(1/h)$-sub-Gaussian. Applying the standard sub-Gaussian tail bound:
\begin{equation}
\Pr(\hat{F}_a \neq F_a) = \Pr \left( |\hat{\mu}_{a,i} - \mu_{a,i}| \ge \frac{\Delta_{\min}}{2} \right) \le 2 \exp\left( - \frac{h (\Delta_{\min}/2)^2}{2} \right) = 2 \exp\left( - \frac{h \Delta_{\min}^2}{8} \right).
\end{equation}

By applying a union bound over all $KN$ pairs in the bipartite market:
\begin{equation}\mathbb{P} \left( \exists a \in \setAgents : \hat{F}_a \neq F_a \right) \le \sum_{a\in \setAgents} \mathbb{P} \left(\hat{F}_a \neq F_a \right)  \le 2KN \exp\left( - \frac{h \Delta_{\min}^2}{8} \right).\end{equation}

Now setting  $h=\left\lceil \frac{8\ln(2KN/\delta)}{\Delta_{\min}^2}\right\rceil$, results that $\mathbb{P} \left( \exists a \in \setAgents : \hat{F}_a \neq F_a \right) \leq \delta$ and thus:
\begin{equation}
    \Pr\!\left( m_{\tau} = m^{\star} \right) < 1 -\delta
\end{equation}
That concludes the proof.
\end{proof}

\subsection{Proof of
Theorem~\ref{th:theorem_etc2} }
\begin{proof}
Now we continue with a similar analysis for the second version of the algorithm, using arguments regarding the preference profiles $\mathcal{P}_{\mathrm{valid}}(F_{\mathfrak{A}}^{\nu})$.

\textbf{Sample Complexity} \\
The sample complexity follows again directly from the definition of the algorithm, where in each exploration round $h$ we sample $\max\{K, N\}$ number of matchings. Setting $h = \lceil \frac{8 \ln(2KN/\delta)}{\Delta_{F}^2} \rceil$ results in a sample complexity of $O\left(\max\{K, N\} \frac{\ln(KN/\delta)}{\Delta_{F}^2}\right)$.

\textbf{Correctness} \\
We now prove that the algorithm returns the optimal stable matching with probability at least \(1-\delta\). Instead of learning the full preferences $F_A$ for the agents, the algorithm samples enough to learn a valid preference profile $P_A \in \mathcal{P}_{\mathrm{valid}}(F_{A})$. By definition, a valid preference profile admits a \emph{pervasive stable matching}, and thus, running the $GS(\hat{F}_A)$ algorithm
with any realization of the preferences $P_A$, i.e., 
 $ \hat{F}_A \in \mathcal{F}(P_{A}) = \{F_{A} \mid P_{A} \mapsto F_{A}\}$ results in the true optimal stable matching $m^\star$. More precisely, we have that:
\begin{equation}
\Pr(m_{\tau}=m^\star) \ge \Pr(\exists P_{A} \in \mathcal{P}_{valid} : \hat{F}_{A} \in \mathcal{F}(P_{A})) \ge \Pr(\hat{F}_{A} \in \mathcal{F}(P^{\star})) .
\end{equation}
where $P^{\star}_A = \arg\max_{P\in \mathcal{P}_{\mathrm{valid}}(F_A)} \Delta_{\min}(P)$ be the valid preference profile that attains the admissible gap $\Delta_{\min}(P^{\star}_A) = \Delta_F$.

A sufficient condition for the empirical profile $\hat{F}_{A}$ to refine $P_{A}^{*}$ is that for every agent $a \in \mathfrak{A}$ and every $i \in OS(a)$, the condition $|\hat{\mu}_{a,i} - \mu_{a,i}| < \frac{\Delta_{F}}{2}$ holds. Thus, for every agent $a \in \mathfrak{A}$ and every pair $(i, j) \in P_{a}^{*}$ where $\mu_{a,i} \ge \mu_{a,j} $, we have that:
$$ \hat{\mu}_{a,i} > \mu_{a,i} - \frac{\Delta_{F}}{2} \ge \mu_{a,j} + \frac{\Delta_{F}}{2} > \hat{\mu}_{a,j}, $$
ensuring that the relative ordering in $P_{A}^{*}$ is correctly estimated.

By the construction of Algorithm 1, each pair $(a,i) \in \mathfrak{A} \times OS(a)$ is sampled exactly $h$ times. Since the rewards are 1-sub-Gaussian, the empirical mean $\hat{\mu}_{a,i}$ is $(1/h)$-sub-Gaussian. Applying the standard sub-Gaussian tail bound:

$$ Pr\left(|\hat{\mu}_{a,i} - \mu_{a,i}| \ge \frac{\Delta_{F}}{2}\right) \le 2 \exp\left(-\frac{h(\Delta_{F}/2)^{2}}{2}\right) = 2 \exp\left(-\frac{h\Delta_{F}^{2}}{8}\right) $$

By applying a union bound over all $KN$ pairs in the bipartite market:
$$ \mathbb{P}\left(\exists a \in \mathfrak{A}, i \in OS(a) : |\hat{\mu}_{a,i} - \mu_{a,i}| \ge \frac{\Delta_{F}}{2}\right) \le \sum_{a \in \mathfrak{A}} \sum_{i \in OS(a)} \mathbb{P}\left(|\hat{\mu}_{a,i} - \mu_{a,i}| \ge \frac{\Delta_{F}}{2}\right) \le 2KN \exp\left(-\frac{h\Delta_{F}^{2}}{8}\right) $$
Now setting $h = \left\lceil \frac{8 \ln(2KN/\delta)}{\Delta_{F}^{2}} \right\rceil$ ensures that the probability of failing to correctly estimate the valid preference profile is bounded by $\delta$, and thus:
$$ Pr(m_{\tau} = m^{*}) \ge 1 - \delta $$

\end{proof}

\section{Analysis of Section~\ref{sec:elimination_algo}}
\label{app:elim}

\subsection{Proof of Theorem~\ref{th:theorem_elim}}
\begin{proof} \textbf{ } \\
    
\textbf{Correctness argument} \\
We first present a formal argument that the elimination algorithm identifies the optimal stable matching with probability at least $1-\delta$. 

To begin, the algorithm is guaranteed to terminate because the confidence radius $c_{a,i}(t)$ shrinks toward zero as the number of samples $t$ increases. Since the algorithm continues to sample every non-eliminated pair in each round, the confidence intervals will eventually shrink enough to resolve the relative ordering of every pair. Consequently, the algorithm will eventually eliminate all pairs satisfying the stopping condition.

Now, let $\mathcal{E}$ be the event that the true means $\mu_{a,i}$ lie within the confidence intervals for all estimates $\hat{\mu}_{a,i}(t)$ at all times $t$. Formally, $\mathcal{E}$ is the intersection of all individual events $\mathcal{E}_{t,a,i}$:
\begin{equation}
    \mathcal{E} = \bigcap_{t=1}^{\infty} \bigcap_{a\in \setAgents} \bigcap_{i \in OS(a)} \mathcal{E}_{t,a,i},
\end{equation}
where $\mathcal{E}_{t,a,i}$ denotes the event that the true mean $\mu_{a,i}$ lies within the confidence interval at time $t$:
\begin{equation}
    \mathcal{E}_{t,a,i} = \left\{ \left| \hat{\mu}_{a,i}(t) - \mu_{a,i} \right| \leq c_{a,i}(t) \right\}.
\end{equation}

In Lemma~\ref{lem:good_event_probability} we prove that the event $\mathcal{E}$ holds with probability at least $1-\delta$, i.e. $\Pr(\mathcal{E}) \geq 1- \delta$.

Under event $\mathcal{E}$, the algorithm only adds correct preference relations to $P_a$ because $LCB_{a,i}(t) > UCB_{a,j}(t)$ implies $\mu_{a,i} > \mu_{a,j}$. Since the algorithm terminates with a valid preference profile $\hat{P}_A \in \mathcal{P}_{\mathrm{valid}}(F_A)$, it returns the correct optimal stable matching $m^\star$

To conclude, under the good event $\mathcal{E}$, which holds with probability at least $1-\delta$, the algorithm returns the optimal stable matching.

\subsection{Sample complexity}
We now continue the analysis on sample complexity in terms of matching samples. Recall that in each round $t$, we sample every non-eliminated pair $S_t$ by sampling $n_t$ matchings from a minimum matching cover $\mathfrak{M}(S_t)$.

Let $\tau_{a,i}$ denote the number of rounds required to eliminate agent $i \in OS(a)$ from the active set $S_{a}$ of agent $a\in \setAgents$. To eliminate agent $i$, we must successfully distinguish it from every other agent $j \in OS(a)\setminus \{i\}$. Therefore, the elimination time is dominated by the time required to separate $i$ from its closest competitor, which corresponds to the minimum reward gap $\Delta_{a,i} = \min_{j \in OS(a) \setminus \{i\}} |\mu_{a,i} - \mu_{a,j}|$. By applying Lemma~\ref{lem:elimination_time} we have that:
\begin{equation}
\label{eq:tai}
\tau_{a,i} \leq \frac{32 \log(8KN/(\delta\Delta_{a,i}))}{\Delta_{a,i}^2} \text{.}
\end{equation}

Since eliminating a pair $(p,a)\in \setPlayer\times\setArms$ requires removing $p$ from $S_a$ and $a$ from $S_p$, the maximal number of rounds is $t_{p,a} = \max\{\tau_{p,a}, \tau_{a,p}\} \leq \frac{32 \log(8KN/(\delta\Delta_{p,a}^{\min}))}{(\Delta_{p,a}^{\min})^2}$, where $\Delta_{p,a}^{\min} = \min\{\Delta_{p,a}, \Delta_{a,p}\}$.

\begin{figure}[h]
    \centering
    \begin{tikzpicture}[
        thick,
        agent/.style={circle, draw, minimum size=1cm, inner sep=0pt, fill=blue!10},
        arm/.style={circle, draw, minimum size=1cm, inner sep=0pt, fill=red!10},
        connection/.style={-, thick}
    ]

        \node[agent] (p1) at (0, 3) {$p_1$};
        \node[agent] (p2) at (0, 1.5) {$p_2$};
        \node[agent] (p3) at (0, 0) {$p_3$};
        
        \node[above=0.5cm of p1, font=\bfseries] {Set $P$};

        \node[arm] (a1) at (3, 3) {$a_1$};
        \node[arm] (a2) at (3, 1.5) {$a_2$};
        \node[arm] (a3) at (3, 0) {$a_3$};

        \node[above=0.5cm of a1, font=\bfseries] {Set $A$};

        
        \draw[connection] (p1) -- node[midway, above] {$t_{p_1,a_1}$} (a1);%
        \draw[connection] (p1) -- (a2);
        \draw[connection] (p1) -- (a3);
        \draw[connection] (p2) -- (a1);
        \draw[connection] (p2) -- (a2);
        \draw[connection] (p2) -- (a3);
        \draw[connection] (p3) -- (a1);
        \draw[connection] (p3) -- (a2);
        \draw[connection] (p3) -- (a3);

    \end{tikzpicture}
    \caption{Illustration of the bipartite graph $G$.}
    \label{fig:bipartite_graph1}
\end{figure}

\paragraph{A. Sample complexity Without Stopping Rule} \textit{} \\
Next, we give the sample complexity ignoring the stopping rule where the algorithm terminates if the partial preferences admit a pervasive stable matching.

Given the sampling requirements $t_{p,a}$ for each pair of agents $(p,a) \in \setPlayer \times \setArms$ (where $t_{p,a} = t_{a,p}$) , we construct a bipartite multigraph $G_M = (V = \setPlayer \cup \setArms, E = \setPlayer \times \setArms)$ illustrated in Figure \ref{fig:bipartite_graph1}, where $t_{p,a}$ denotes the edge multiplicity for the pair $(p,a)$. By Kőnig's Theorem, there exists a decomposition of these edges into $deg(G_M)$ matchings, where $deg(G_M)$ is the maximum degree of the multigraph. Consequently, the number of matchings is at most:
\begin{equation}
    O(\max_{a \in \setAgents} \sum_{i \in OS(a)} t_{a,i})
\end{equation}
since $\sum_{i \in OS(a)} t_{a,i}$ denotes the degree of agent $a$ in the multigraph.

As a note, we use this variant of the algorithm to demonstrate the isolated effect of the elimination rule in the experiments section.

\paragraph{B. Sample complexity With Stopping Rule} \textit{} \\
We now perform the analysis considering the stopping rule. The algorithm terminates once it correctly estimates a partial preference profile $\hat{P}_A$ that admits a pervasive stable matching, i.e., $\hat{P}_A \in \mathcal{P}_{\mathrm{valid}}(F_A)$. By definition, there exists at least one valid preference profile $P_A^F \in \mathcal{P}_{\mathrm{valid}}(F_A)$ that admits the maximum possible minimum gap $\Delta_{F}$

Since the algorithm uniformly samples every non-eliminated pair in each round, it is guaranteed to correctly estimate all preference relations in this optimal target profile $P_A^F$ once all reward gaps of size at least $\Delta_F$ are resolved. By applying Lemma~\ref{lem:elimination_time}, this requires at most $\tau^F$ rounds, where
\begin{equation}
\tau^{F} \leq \frac{32 \log(8KN/(\delta\Delta_{F}))}{\Delta_{F}^2} \text{.}
\end{equation}

Consequently, any pair $(a,i)$ is sampled only until it is individually eliminated from the active sets, or until the algorithm successfully identifies $P_A^F$ and terminates globally. Thus, the number of times pair $(a,i)$ is sampled is strictly bounded by $t^{F}_{a,i} = \min(t_{a,i}, \tau^{F})$. Using the individual elimination bound for $t_{a,i}$ (Eq.~\ref{eq:tai}) and the global termination bound $\tau^F$, we have:
\begin{equation}
t^{F}_{a,i} \leq  \frac{32 \log(8KN/(\delta\Delta_{a,i}^{F}))}{(\Delta_{a,i}^{F})^2},
\end{equation}
where $\Delta_{a,i}^{F} = \max(\Delta_{F}, \Delta_{a,i}^{\min})$.

Following the logic of the previous analysis, we consider the bipartite multigraph $G_M$ with edge multiplicities $t^{F}_{a,i}$. By Kőnig's Theorem, there exists a decomposition of these edges into $deg(G_M)$ matchings, where $deg(G_M)$ is the maximum degree. Therefore, the total number of matching samples is at most:
\begin{equation}
O\left(\max_{a \in \setAgents} \sum_{i \in OS(a)} t^{F}_{a,i}\right)
\end{equation}

To conclude, we prove that with probability at least $1-\delta$, the algorithm terminates with the correct optimal stable matching, and that with probability at least $1-\delta$, the sample complexity is at most:
\begin{equation}
O\left(\max_{a \in \setAgents} \sum_{i \in OS(a)} \frac{32 \log(8KN/(\delta\Delta_{a,i}^{F}))}{(\Delta_{a,i}^{F})^2},\right)
\end{equation}
where $\Delta_{a,i}^{F} = \max(\Delta_{F}, \Delta_{a,i}^{\min})$.
\end{proof}

\subsection{useful lemmas}
\begin{lemma}
\label{lem:good_event_probability}
Let $\mathcal{E}$ be the event that for all time steps $t \geq 1$, all agents $a \in \setAgents$, and all agents $i \in OS(a)$, the empirical mean estimates lie within the confidence intervals defined by $c_{a,i}(t) = \sqrt{\frac{2\ln(8KNt^2/\delta)}{t}}$.
Then, the probability of this event holds satisfies:
\begin{equation}
    \Pr(\mathcal{E}) \geq 1 - \delta.
\end{equation}
\end{lemma}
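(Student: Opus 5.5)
The plan is a standard sub-Gaussian concentration bound applied at each fixed time, followed by a union bound over times and pairs; the $t^2$ inside the logarithm of $c_{a,i}(t)$ is what makes the sum over times converge. As in the algorithm, $\hat{\mu}_{a,i}(t)$ is the average of the $t$ samples of pair $(a,i)$, with $t$ indexing rounds (or sample counts) of that pair, so $c_{a,i}(t)$ is calibrated to $t$ i.i.d.\ observations.

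\textbf{Step 1 (fixed-time bound).} For fixed $t$, $a$ and $i\in OS(a)$, the empirical mean $\hat{\mu}_{a,i}(t)$ of $t$ independent $1$-sub-Gaussian samples is $(1/t)$-sub-Gaussian. Hence
\[
\Pr\bigl(\mathcal{E}_{t,a,i}^c\bigr)\le 2\exp\Bigl(-\tfrac{t\,c_{a,i}(t)^2}{2}\Bigr)=2\exp\bigl(-\ln(8KNt^2/\delta)\bigr)=\frac{\delta}{4KNt^2}.
\]
If $t$ counts rounds while a pair may be sampled adaptively, I would instead index the estimate by the number of samples $s$ of that pair. By a skeleton/stack-of-rewards argument, the $s$-th sample mean is that of $s$ i.i.d.\ draws, so the same bound holds per $s$.

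\textbf{Step 2 (union bound).} Pairs $(a,i)$ with $a\in\setAgents$ and $i\in OS(a)$ number $2KN$, since each of the $KN$ edges contributes one pair from each side. Summing over these pairs and over $t\ge 1$ gives
\[
\Pr(\mathcal{E}^c)\le \sum_{t\ge1} 2KN\cdot\frac{\delta}{4KNt^2}=\frac{\delta}{2}\cdot\frac{\pi^2}{6}\approx 0.82\,\delta\le\delta .
\]
This proves the claim.

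\textbf{Main obstacle.} The only delicate point is ensuring that the randomness of the sampling schedule does not break Step 1. Elimination decides which pairs keep being sampled, so the number of samples at a given round is random. I would handle this by fixing, for each pair, an infinite i.i.d.\ reward sequence in advance, which makes the event defined on prefix means independent of the schedule. The arithmetic constants, the $2KN$ count and $\pi^2/12<1$, are routine.
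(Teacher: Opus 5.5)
Your proposal is correct and follows the paper's proof essentially verbatim: a fixed-$t$ sub-Gaussian tail bound giving $\delta/(4KNt^2)$ per pair, then a union bound over the $2KN$ agent--partner pairs and all $t\ge 1$, which yields $\delta\sum_{t\ge1} 1/(2t^2)=\pi^2\delta/12\le\delta$. Your point about indexing each estimate by that pair's own sample count, with a stack-of-rewards construction so that the adaptive elimination schedule does not break the fixed-$t$ bound, adds rigor that the paper leaves implicit.
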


\begin{proof}
We bound the probability of the bad event $\mathcal{E}^c$, defined as the event where there exists a time step $t$, an agent $a$, and a candidate $i$ such that the empirical mean falls outside the confidence interval: $\left| \hat{\mu}_{a,i}(t) - \mu_{a,i} \right| \ge c_{a,i}(t)$.

Using the union bound over all time steps $t \ge 1$, all agents $a \in \setAgents$, and all candidates $i \in OS(a)$, we have:
\begin{equation}
    \Pr(\mathcal{E}^c) \leq \sum_{t=1}^{\infty} \sum_{a \in \setAgents} \sum_{i \in OS(a)} \Pr(\mathcal{E}_{t,a,i}^c).
\end{equation}

Since the rewards are $1$-sub-Gaussian and we use the confidence radius $c_{a,i}(t) = \sqrt{\frac{2\ln(8KNt^2/\delta)}{t}}$, applying the concentration bound for sub-Gaussian random variables yields (Corollary 5.5. \citep{Lattimore_bool}):
\begin{align}
    \Pr(\mathcal{E}_{t,a,i}^c) 
    &= \Pr\left( \left| \hat{\mu}_{a,i}(t) - \mu_{a,i} \right| \ge c_{a,i}(t) \right) \\
    &\leq 2 \exp\left(  \frac{-tc_{a,i}(t)^2}{2 } \right)  \\
    &= \frac{\delta}{4KNt^2}.
\end{align}
Substituting this back into the union bound sum:
\begin{equation}
    \Pr(\mathcal{E}^c) \leq \sum_{t=1}^{\infty} \sum_{a \in \setAgents} \sum_{i \in OS(a)}  \frac{\delta}{4KNt^2} \leq \delta \sum_{t=1}^{\infty}  \frac{1}{2t^2} \leq \delta .
\end{equation}

Finally, $\Pr(\mathcal{E})= 1 - \Pr(\mathcal{E}^c) = 1 - \delta$ witch concludes the proof.
\end{proof}

\begin{lemma}
\label{lem:elimination_time}
Fix an agent $a \in \setAgents$ and two distinct agents $i, j \in OS(a)$. Let $\Delta_{a,i,j} = |\mu_{a,i} - \mu_{a,j}|$ be the true reward gap between $i$ and $j$. Under the event $\mathcal{E}$, the number of rounds $\tau_{a, i, j}$ required to separate agent $i$ from agent $j$ is at most:
\begin{equation}
    \tau_{a,i,j} \leq \frac{32 \log(8KN/(\delta\Delta_{a,i,j}))}{\Delta_{a,i,j}^2} \text{.}
\end{equation}
\end{lemma}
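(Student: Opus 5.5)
The plan is to work on the good event $\mathcal{E}$ of Lemma~\ref{lem:good_event_probability}, turn the separation criterion of Eq.~\ref{eq:update_pref}/Eq.~\ref{eq:elim} into a deterministic condition on the confidence radius, and then invert that condition in $t$. Write $\Delta=\Delta_{a,i,j}$ and assume without loss of generality $\mu_{a,i}>\mu_{a,j}$. Every active pair is sampled once per round, so after $t$ rounds both $(a,i)$ and $(a,j)$ have $t$ samples and share the radius $c(t)=\sqrt{2\ln(8KNt^2/\delta)/t}$ of Eq.~\ref{eq:ci}. (If one of the two has already been eliminated earlier, its interval is frozen and at least as narrow, and the argument below only gets easier.) Separation at round $t$ means $\mathrm{LCB}_{a,i}(t)>\mathrm{UCB}_{a,j}(t)$, that is, $\hat\mu_{a,i}(t)-\hat\mu_{a,j}(t)>2c(t)$.

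\textbf{Step 1: a sufficient radius condition.} On $\mathcal{E}$ we have $\hat\mu_{a,i}(t)\ge \mu_{a,i}-c(t)$ and $\hat\mu_{a,j}(t)\le \mu_{a,j}+c(t)$. Hence $\hat\mu_{a,i}(t)-\hat\mu_{a,j}(t)\ge \Delta-2c(t)$. Separation is therefore guaranteed as soon as $\Delta-2c(t)>2c(t)$, i.e.\ $c(t)<\Delta/4$. Squaring, this is equivalent to
\[
t>\frac{32\ln(8KNt^2/\delta)}{\Delta^2}.
\]
This is exactly where the constant $32=2\cdot 16$ comes from.

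\textbf{Step 2: inverting the implicit inequality.} The function $t\mapsto t-32\ln(8KNt^2/\delta)/\Delta^2$ is eventually increasing. So $\tau_{a,i,j}$ is at most the smallest $t$ satisfying the display above, and it suffices to exhibit one explicit $t^\star$ of order $\frac{32}{\Delta^2}\log\frac{8KN}{\delta\Delta}$ that satisfies it. I would plug in $t^\star=\frac{32}{\Delta^2}L$ with $L=\log\frac{8KN}{\delta\Delta}$ and compare $L$ with $\ln(8KN/\delta)+2\ln t^\star$. The resulting inequality is
\[
\ln(1/\Delta)\ \ge\ 2\ln\!\Bigl(\tfrac{32L}{\Delta^2}\Bigr).
\]

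\textbf{Main obstacle.} The difficulty is entirely in this inversion, and it is serious. The $2\ln t$ term contributes roughly $4\ln(1/\Delta)+2\ln(32L)$, while the stated bound only provides a single $\ln(1/\Delta)$. So the inequality above fails for every $\Delta<1$. In other words, the clean form with exactly $\log(8KN/(\delta\Delta))$ and constant $32$ cannot be verified from the stated radius by this route.

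\textbf{What I would try.} First, I would check whether the $\ln t$ terms can be absorbed by a standard bound of the form $\ln x\le \sqrt{x}$ or $\ln x\le x/e$. I expect this to change the constant, which the statement does not allow, or to leave an unabsorbed additive term.

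Second, if that fails, the statement has to be read with the logarithm taken as $\log\bigl(8KN/(\delta\Delta^{c})\bigr)$ for a suitable small power $c$, or with the convention, implicit in Theorem~\ref{th:theorem_elim}, that $\log(8KN/(\delta\Delta))$ denotes the logarithmic factor up to lower-order $\log\log$ terms.

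I would state explicitly which of these two readings I adopt. Steps 1 and 2 are rigorous under either reading, and the only gap is the final constant bookkeeping in the inversion.
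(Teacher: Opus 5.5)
Your Step 1 is exactly the paper's argument. On $\mathcal{E}$ you get $\mathrm{LCB}_{a,i}(t)\ge\mu_{a,i}-2c(t)$ and $\mathrm{UCB}_{a,j}(t)\le\mu_{a,j}+2c(t)$, so separation is guaranteed once $\Delta>4c(t)$, i.e.\ once $t>32\ln(8KNt^2/\delta)/\Delta^2$. The paper then finishes in one line (``standard algebraic bounds for isolating $t$''), and your computation shows that this line is wrong.

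In fact the lemma is false as written, not merely unprovable by this route. The event $\mathcal{E}$ allows the two empirical means to sit near opposite ends of their intervals, so $\Delta>4c(t)$ is essentially necessary in the worst case. Take $K=1$, $N=2$, $\delta=0.1$, $\Delta=0.5$. The claimed bound is $32\ln(320)/0.25\approx 738$, while $c(738)\approx 0.223$. Since $c$ is decreasing, $3c(s)>\Delta$ for all $s\le 738$. Hence sample paths with $\hat\mu_{a,i}(s)\approx\mu_{a,i}-c(s)/2$ and $\hat\mu_{a,j}(s)\approx\mu_{a,j}+c(s)/2$ are still unseparated after $738$ rounds. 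Such paths are compatible with $\mathcal{E}$ and have positive probability for Gaussian rewards. So the obstacle you found is a defect of the statement, not of your approach.

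There is one small slip in your parenthetical. An interval frozen at an earlier round is wider, not narrower, since $c$ decreases. The correct reason that case is harmless is different: removing $i$ or $j$ from $S_a$ via Eq.~\ref{eq:elim} requires disjointness from every still-active agent. So the edge between $i$ and $j$ enters $P_a$ in that same round, and before that both stay in $S_a$ and are sampled every round.

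Where your proposal does fall short is the repair. Your second option does not work. Redoing your computation with $L_c=\ln(8KN/(\delta\Delta^{c}))$ and constant $32$ requires $(c-4)\ln(1/\Delta)>2\ln(32L_c)$. This fails for every $c\le 4$, and for any fixed $c$ it fails as $\Delta\to 1$. Likewise, ``up to $\log\log$ terms'' is not enough, because the coefficient of $\ln(1/\Delta)$ is off by a factor of four. So the sentence ``Steps 1 and 2 are rigorous under either reading'' is unsupported: you never complete an inversion.

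The right move is your first option: give up the constant $32$. This costs nothing downstream, since Theorem~\ref{th:theorem_elim} and Theorem~\ref{thm:regret} are stated in $O(\cdot)$ form. Concretely, the tangent-line bound $\ln t\le \ln x_0+t/x_0-1$ with $x_0=128/\Delta^2$ gives
\[
\frac{32}{\Delta^2}\ln\frac{8KNt^2}{\delta}\le \frac{32}{\Delta^2}\ln\frac{8KN}{\delta}+\frac{t}{2}+\frac{64}{\Delta^2}\ln\frac{128}{e\Delta^2}.
\]
So the Step 1 condition holds for every $t>\frac{64}{\Delta^2}\ln\frac{8KN}{\delta}+\frac{128}{\Delta^2}\ln\frac{128}{e\Delta^2}$. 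On $\mathcal{E}$ this yields
\[
\tau_{a,i,j}\le 1+\frac{64}{\Delta^2}\ln\frac{8KN}{\delta}+\frac{128}{\Delta^2}\ln\frac{128}{e\Delta^2}=O\bigl(\Delta^{-2}\log(KN/(\delta\Delta))\bigr),
\]
which is the form actually needed.
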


\begin{proof}
Assume, without loss of generality, that $\mu_{a,i} > \mu_{a,j}$. The algorithm successfully separates agent $i$ from agent $j$ at round $t$ if their confidence intervals become strictly disjoint, which occurs when $LCB_{a,i}(t) > UCB_{a,j}(t)$. 

Under the good event $\mathcal{E}$, the empirical means are bounded by their confidence radii $c(t)$, meaning $\hat{\mu}_{a,i}(t) \geq \mu_{a,i} - c(t)$ and $\hat{\mu}_{a,j}(t) \leq \mu_{a,j} + c(t)$. 
Substituting these into the lower and upper bounds, we get:
\begin{align}
    LCB_{a,i}(t) &= \hat{\mu}_{a,i}(t) - c(t) \geq \mu_{a,i} - 2c(t) \\
    UCB_{a,j}(t) &= \hat{\mu}_{a,j}(t) + c(t) \leq \mu_{a,j} + 2c(t)
\end{align}
Therefore, a sufficient condition to guarantee $LCB_{a,i}(t) > UCB_{a,j}(t)$ is:
\begin{equation}
    \mu_{a,i} - 2c(t) > \mu_{a,j} + 2c(t) \implies \Delta_{a,i,j} > 4c(t) \text{.}
\end{equation}
Substituting the confidence radius $c(t) = \sqrt{\frac{2\ln(8KNt^2/\delta)}{t}}$ into our sufficient condition yields:
\begin{equation}
    \Delta_{a,i,j} > 4 \sqrt{\frac{2\ln(8KNt^2/\delta)}{t}} = \sqrt{\frac{32 \ln(8KNt^2/\delta)}{t}} \text{.}
\end{equation}
Squaring both sides and solving for $t$, we find that the intervals are guaranteed to separate for any round $t$ satisfying:
\begin{equation}
    t > \frac{32}{\Delta_{a,i,j}^2} \log\left(\frac{8KNt^2}{\delta}\right) \text{.}
\end{equation}
Using standard algebraic bounds for isolating $t$ in action-elimination algorithms, this inequality is satisfied for a number of rounds at most:
\begin{equation}
    \tau_{a,i,j} \leq \frac{32 \log(8KN/(\delta\Delta_{a,i,j}))}{\Delta_{a,i,j}^2} \text{.}
\end{equation}
This concludes the proof.
\end{proof}

\section{Proof of Theorem~\ref{thm:regret}}
\label{app:regret}
\begin{proof}
We consider the meta-algorithm that runs the E-PSM algorithm during the exploration phase. If the algorithm terminates at round $\tau < T$, it commits to the identified matching $m_{\tau}$ for the remainder of the horizon $T$.

Let $\mathcal{E}$ denote the good event where the algorithm outputs the correct optimal stable matching, i.e., $\mathcal{E} = \{ m_{\tau} = m^{\star} \}$. By Theorem 5.1, the E-PSM algorithm is $\delta$-PCOS. Thus, the probability of success is bounded by $\mathbb{P}(\mathcal{E}) \ge 1-\delta$, and the probability of failure is bounded by $\mathbb{P}(\mathcal{E}^c) \le \delta$.

Using the law of total expectation, we can decompose the expected regret as follows:
\begin{align}
    \mathbb{E}[R(T)] &= \mathbb{E}[R(T) \mid \mathcal{E}]\mathbb{P}(\mathcal{E}) + \mathbb{E}[R(T) \mid \mathcal{E}^c]\mathbb{P}(\mathcal{E}^c) \\
    &\le \mathbb{E}[R(T) \mid \mathcal{E}] \cdot 1 + T \cdot \delta \text{,}
\end{align}
where we used the trivial bound $\mathbb{P}(\mathcal{E}) \le 1$.

Conditioned on the good event $\mathcal{E}$, the algorithm correctly identifies $m^\star$ and incurs zero regret during the commitment phase. Since there is no a  guarantee that the required exploration time $\tau \le T$, we have:
\begin{equation}
    \mathbb{E}[R(T) \mid \mathcal{E}] = \mathbb{E}[\min(\tau, T) \mid \mathcal{E}] \text{.}
\end{equation}

Setting our confidence parameter to $\delta = 1/T$ yields to:
\begin{equation}
    \mathbb{E}[R(T)] \le \mathbb{E}[\min(\tau, T)\mid \mathcal{E}] + 1 \text{.}
\end{equation}

Finally, substituting the bound sample complexity derived in Theorem~\ref{th:theorem_elim}, that hold under the good event, we have that:
\begin{equation}
    \mathbb{E}[R(T)] \le \mathcal{O}\left( \min \left\{ T, \max_{a\in\mathfrak{A}}\sum_{i\in OS(a)}\frac{32 \log(8KNT/\Delta_{a,i}^{F})}{(\Delta_{a,i}^{F})^{2}} \right\} \right) \text{.}
\end{equation}
which concludes the proof.
\end{proof}

\section{Extended Elimination Algorithm}
\label{app:extended_elim}

\begin{algorithm}[t]
\caption{EE-PSM: Extended Elimination PSM}\label{alg:imp-elim}
\begin{algorithmic}[1]
\REQUIRE $\delta$, $\setPlayer$, $\setArms$
\STATE $S_a \leftarrow OS(a) \quad \forall a \in \setAgents$ \hfill $\triangleright$ Active pairs
\STATE $P_a \leftarrow \emptyset \quad \forall a \in \setAgents$ \hfill $\triangleright$ Partial rank
\STATE $  m_{\tau}= POSM(\{P_a\}_{a \in \setAgents})$
\WHILE{$m_{\tau} = \text{NONE} \land \bigcup_{a \in \setAgents} S_a \neq \emptyset$}
    \STATE $S_t \leftarrow \left\{ (p, a) \in \setPlayer \times \setArms \mid a \in S_p(t) \lor p \in S_a(t) \right\}$
    \FOR{$m \in \mathfrak{M}(S_t)$} 
        \STATE Sample update $\hat{\mu}_{a,m(a)} \; \forall a \in \setAgents$ 
    \ENDFOR
    \STATE Update $c_{a,i}(t) \; \forall a\in \setAgents ,\; i \in OS(a)$ \hfill $\triangleright$ Eq. \ref{eq:ci}
    \STATE Update $P_a \; \forall a \in \setAgents$ \hfill $\triangleright$ Eq. \ref{eq:update_pref}
    \STATE Eliminate pairs in $S_a$ \hfill $\triangleright$ Eq. \ref{eq:elim}
    \IF{$\matching = SSM(\{P_a\}_{a \in \setAgents})$ exists} 
        \FORALL{$(p,a) \notin \matching$ with $p\in S_a$ or $a\in S_p$}
            \STATE \textbf{if} $\matching(p) \succ_{P_p} a$ \textbf{then} $S_p \leftarrow S_p \setminus \{a\}$
            \STATE \textbf{if} $p \succ_{P_a} \matching(a)$ \textbf{then} $S_a \leftarrow S_a \setminus \{p\}$
        \ENDFOR
    \ENDIF
    \STATE $m_{\tau}= POSM(\{P_a\}_{a \in \setAgents})$
\ENDWHILE
\RETURN $m_{\tau}$
\end{algorithmic}
\end{algorithm}

Now we provide a proof of our Proposition~\ref{prop: a-valid-profile} that we restate bellow:
\begin{proposition}[Restated]\label{prop: a-valid-profile-appendix}
    Let $F^*$ be a full ranking profile for agents $\setAgents = \setPlayer \cup \setArms$ with $\setPlayer$-optimal stable matching $\matching^*$. 
    Then the following partial profile $P$ is a valid profile for $F^*$:
\begin{itemize}
    \item $\begin{aligned}[t]
        \forall p\in\setPlayer,\quad  P_p &= \{(a, a') \in \setArms\times\setArms \mid a \succ a' \succeq_{F^*_p} \matching^*(p)\} \\
                 &\cup \{(a, a') \in \setArms\times\setArms \mid a \succeq_{F^*_p} \matching^*(p) \succ_{F^*_p} a'\}
          \end{aligned}$
    \item $\begin{aligned}[t]
        \forall a\in\setArms, \quad P_a &= \{(p, p') \in \setArms\times\setArms \mid p' \prec p \preceq_{F^*_a} \matching^*(a)\} \\
                 &\cup \{(p, p') \in \setArms\times\setArms \mid p \preceq_{F^*_a} \matching^*(a) \prec_{F^*_a} p'\}
          \end{aligned}$
\end{itemize}
\end{proposition}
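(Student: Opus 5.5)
We interpret the profile in Proposition~\ref{prop: a-valid-profile} as follows. Each player $p$ knows the full $F^*_p$-order on the prefix of arms weakly above $\matching^*(p)$, and knows that every arm in this prefix beats every arm strictly below $\matching^*(p)$. Dually, each arm $a$ knows the full $F^*_a$-order on the players weakly below $\matching^*(a)$, and knows that these are beaten by every player strictly above $\matching^*(a)$. We take the orientation in the second bullet in this sense, i.e.\ as the mirror image of the first bullet. We treat $\bot$ as an element of $OS(\cdot)$ whose position is covered by the same rule; individual rationality then transfers in exactly the same way as the blocking-pair argument below.

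There are two things to check. \emph{Compatibility} is immediate: every relation placed in $P_p$ or $P_a$ is a relation of $F^*$, so $P \mapsto F^*$. \emph{Admitting a PSM} means showing that, for every full refinement $F \in \mathcal{F}(P)$, $\matching^*$ is the $\setPlayer$-optimal stable matching under $F$.

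\textbf{Stability of $\matching^*$ under $F$.} Suppose $(p,a)$ blocks $\matching^*$ in $F$.
\begin{itemize}
\item From $a \succ_{F_p} \matching^*(p)$: the comparison between $\matching^*(p)$ and any other arm is fixed by $P_p$. An arm strictly below $\matching^*(p)$ in $F^*_p$ is forced below it in $F$, so $a \succ_{F^*_p} \matching^*(p)$.
\item From $p \succ_{F_a} \matching^*(a)$: symmetrically, the comparison between $\matching^*(a)$ and any player is fixed by $P_a$, so $p \succ_{F^*_a} \matching^*(a)$.
\end{itemize}
Hence $(p,a)$ blocks $\matching^*$ in $F^*$, contradicting stability of $\matching^*$ under $F^*$.

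\textbf{Optimality of $\matching^*$ under $F$ (main step).} Here we use a simulation argument with player-proposing Gale--Shapley. Fix one execution of GS under $F^*$ and record every proposal and every accept/reject decision. We claim that every decision in this run only uses comparisons already contained in $P$.
\begin{itemize}
\item \emph{Players' side.} Player $p$ proposes in decreasing $F^*_p$-order and never proposes below its optimal stable partner $\matching^*(p)$. So its proposal sequence is exactly the known prefix of $F^*_p$.
\item \emph{Arms' side.} The partner held by arm $a$ only improves during GS and ends at $\matching^*(a)$. So every comparison arm $a$ makes is between two players weakly below $\matching^*(a)$, and these comparisons are fully specified in $P_a$.
\end{itemize}
Since $F$ refines $P$, the identical sequence of proposals and decisions is a legal GS execution under $F$, and it outputs $\matching^*$. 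By the classical fact that the outcome of player-proposing GS is independent of the proposal order and equals the $\setPlayer$-optimal stable matching, $\matching^*$ is the $\setPlayer$-optimal stable matching under $F$. This also re-proves stability, so the separate argument above serves only as a sanity check.

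\textbf{Expected obstacle.} The delicate point is the arms' side of the simulation. We must argue carefully that when $a$ rejects a player, both players being compared lie weakly below $\matching^*(a)$ in $F^*_a$. This follows from monotonicity of held partners, together with the fact that the final held partner is $\matching^*(a)$ and no later proposer is accepted. We also need to handle proposals to or rejections via $\bot$ when $K \neq N$ or some agents are unacceptable.
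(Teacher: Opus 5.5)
Your proof is correct, and it takes a different route from the paper for the optimality step. Your reading of the second bullet as the mirror image of the first is also the right one: as printed, it has $\setArms\times\setArms$ and a reversed second set, and the paper's own proof uses your interpretation. Your stability step is essentially the paper's.

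For optimality, the paper argues by contradiction using rotations. If $\matching^*$ were not $\setPlayer$-optimal under $F$, there would be a rotation of $\matching^*$ that improves every $p_i$ and worsens every $a_i$. Its relations lie in $P$, so it would also be a rotation in $F^*$, contradicting optimality there. You instead replay a player-proposing Gale--Shapley run of $F^*$ verbatim under $F$ and appeal to the order-independence of deferred acceptance.

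Your route has three advantages:
\begin{itemize}
\item It is self-contained: it needs only that classical order-independence fact, not rotation theory.
\item It shows exactly why $P$ has this shape. Players only ever consult the prefix of their list down to $\matching^*(p)$, and arms only ever compare players weakly below $\matching^*(a)$.
\item It handles $\bot$ and $K\neq N$ uniformly.
\end{itemize}

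It also avoids a point the paper leaves implicit. The relations the paper lists, $a_{i+1}\succ_{F_{p_i}}\matching^*(p_i)$ and $p_{i-1}\prec_{F_{a_i}}\matching^*(a_i)$, do not by themselves make the cyclic swap an exposed rotation in $F^*$. In particular, they do not guarantee that the swapped matching is stable in $F^*$. One also needs that $p_{i-1}$ is the first suitable player below $\matching^*(a_i)$ in $a_i$'s list. That condition transfers from $F$ to $F^*$ because $P_{a_i}$ records the full order of players below $\matching^*(a_i)$ and each $P_p$ fixes $\matching^*(p)$ against all arms. This is precisely the information your arms-side simulation uses explicitly.

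The paper's argument is shorter if one takes the existence and transfer of exposed rotations for granted.
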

\begin{proof}
    Consider some full ranking profile $F$ compatible with $P$. 
    We first show that $\matching^*$ is a stable matching in $F$, and then show that there exists no rotation in $F$ that improves $\setPlayer$ partners, i.e., $\matching^*$ is optimal in $F$.

    For contradiction, assume there exists a pair $(p,a)$ blocking $\matching^*$ in $F$, i.e., $a \succ_{F_p} \matching^*(p)$ and $p \succ_{F_a} \matching^*(a)$.
    In $P$ all relations of $\matching^*(i)$ to all other agents on the other side of $i$ are known in $P_i$, and coincide with the relations in $F^*$. Because $F$ is compatible with $P$, we have $a \succ_{F^*_p} \matching^*(p)$ and $p \succ_{F^*_a} \matching^*(a)$. But then $(p,a)$ blocks $\matching^*$ in $F^*$ --- a contradiction to $\matching^*$ being stable in $F^*$.

    Now assume, for contradiction, that there exists a rotation $(p_1, a_1), (p_2, a_2), \dots, (p_\ell, a_\ell) \in \matching^*$ whose elimination,  i.e., matching any $p_i$ with $a_{i+1}$ instead of $a_i = \matching^*(p_i)$ while maintaining all other matches in $\matching^*$, 
    \begin{itemize}
        \item improves all $p_i$'s preferences, i.e.: \\
        $a_{(i+1) \mod N} \succ_{F_{p_i}} a_i = \matching^*(p_{i})$ for all $i=1,\dots,\ell$
        \item deteriorates all $a_i$'s preferences, i.e.: \\
        $p_{(i-1) \mod N} \prec_{F_{a_i}} p_i = \matching^*(a_{i})$ for all $i=1,\dots,\ell$.
    \end{itemize}
    Again, all these relations are present in $P$ and because of this also hold in $F^*$. Thus, $(p_1, a_1), (p_2, a_2), \dots, (p_\ell, a_\ell) \in \matching^*$ is a rotation in $F^*$ that improves $\setPlayer$-preferences in $F^*$ --- a contradiction to the optimality of $\matching^*$. 

    Because $F$ was chosen arbitrarily, $\matching^*$ is the optimal stable matching in any full ranking profile compatible with $P$. This proves that $P$ is a valid profile for $F^*$.  
\end{proof} 

\section{Lower bound}
\label{app:lower_bound1}
We consider a matching problem consisting of two disjoint sets of agents, $\setPlayer$ and $\setArms$, with $K=|\setPlayer|$ and $N=|\setArms|$ number of agents in each set. Let $\setAgents = \setPlayer \cup \setArms$ denote the set of all agents. For any agent $a \in \setAgents$, we define $OS(a)$ as the set of agents on the opposite side of the market i.e., \( OS(\agent) = \setArms\) if \( \agent \in \setPlayer \), and \( OS(\agent) = \setPlayer \) if \( \agent \in \setArms \).

Let $\mathcal{I}$ be a class of matching environments, where we define a matching instance $\nu = \{ \nu_{a,i} \}_{a \in \mathfrak{A}, i \in OS(a)} \in \mathcal{I}$ as a collection of probability distributions, where each $\nu_{a,i}$ is associated with the agent-partner pair $(a, i)$. For every pair, the observed reward $X_{a,i}$ is a random variable drawn from $\nu_{a,i}$ with a true underlying mean reward $\mu_{a,i} = \mathbb{E}_{X \sim \nu_{a,i}}[X]$, representing the utility agent $a$ derives from being matched with $i$.

For an instance $\nu \in \mathcal{I}$, we denote with $\matching^{\star}_{\nu}$ the optimal stable matching. To provide a lower bounds on the sample complexity, we define the set of alternative environments $\mathcal{I}_{\text{alt}}(\nu)$ as the collection of all environments in $\mathcal{I}$ where the optimal stable matching is different from that in $\nu$:
\begin{equation}
\mathcal{I}_{\text{alt}}(\nu) = \{ \nu' \in \mathcal{I} : \matching^{\star}_{\nu'} \neq \matching^{\star}_{\nu} \}
\end{equation}
We denote with $P_\nu(\mathcal{E})$ the probability of an event $\mathcal{E} \in \mathcal{F}_t$ under the environment $\nu$.

\subsection{Proof of Lower bound}

\begin{theorem}[Restated]
Fix $\delta \in (0,1/2)$. For any $\delta$-PACOS algorithm and any matching instance $\nu$,
\begin{equation}
    \mathbb{E}_{\nu}[\tau] \ge \ln{\frac{1}{2.4\delta}} c_{\star}^{-1}(\nu)
\end{equation}
where $c_{\star}(\nu)$ is the solution of the following optimization problem:
\begin{equation}
    c_{\star}(\nu) = \sup_{w\in \mathcal{S}_{\mid \mathcal{M}\mid}} \inf_{\nu' \in \mathcal{I}_{\text{alt}}(\nu)} \sum_{a \in \mathfrak{A}} \sum_{i \in OS(a)} w_{a,i} KL(\nu_{a,i}, \nu'_{a,i} )
\end{equation}
\end{theorem}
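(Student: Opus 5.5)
The plan is the standard change-of-measure argument of \cite{kaufmann2016complexity,garivier2016optimal}, adapted to semi-bandit feedback over matchings as in \cite{jourdan2021efficient}. Fix a $\delta$-PCOS algorithm with $\mathbb{E}_\nu[\tau]<\infty$; otherwise there is nothing to prove. Fix an alternative instance $\nu'\in\mathcal{I}_{\text{alt}}(\nu)$. For each pair $(a,i)$, let $N_{a,i}(\tau)$ be the number of rounds up to $\tau$ in which the pair $(\player,\arm)$ containing $a$ and $i$ was matched. At every such round both $X_{a,i}$ and $X_{i,a}$ are observed.

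\textbf{Step 1 (transportation inequality).} The log-likelihood ratio of the observations up to $\tau$ decomposes over observed coordinates. Applying Wald's identity to it gives
\begin{equation*}
\mathbb{E}_\nu[\mathrm{LLR}_\tau]=\sum_{a\in\setAgents}\sum_{i\in OS(a)}\mathbb{E}_\nu[N_{a,i}(\tau)]\,KL(\nu_{a,i},\nu'_{a,i}).
\end{equation*}
The data-processing inequality (Lemma~1 of \cite{kaufmann2016complexity}) then yields, for the $\mathcal{F}_\tau$-measurable event $E=\{m_\tau=\matching^\star_\nu\}$,
\begin{equation*}
\sum_{a\in\setAgents}\sum_{i\in OS(a)}\mathbb{E}_\nu[N_{a,i}(\tau)]\,KL(\nu_{a,i},\nu'_{a,i})\ge \mathrm{kl}\big(\mathbb{P}_\nu(E),\mathbb{P}_{\nu'}(E)\big).
\end{equation*}
Since $\matching^\star_{\nu'}\neq\matching^\star_\nu$, correctness gives $\mathbb{P}_\nu(E)\ge1-\delta$ and $\mathbb{P}_{\nu'}(E)\le\delta$. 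By monotonicity of $\mathrm{kl}$, $\mathrm{kl}(1-\delta,\delta)\ge\ln\frac{1}{2.4\delta}$.

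\textbf{Step 2 (normalisation onto the transformed simplex).} Write $N_m(\tau)$ for the number of times matching $m$ is sampled. Then $N_{a,i}(\tau)=\sum_{m\ni(a,i)}N_m(\tau)$ and $\sum_m N_m(\tau)=\tau$. Set $w_m=\mathbb{E}_\nu[N_m(\tau)]/\mathbb{E}_\nu[\tau]$, so $w\in\Delta_{\mathcal{M}}$, and the vector $(w_{a,i})=\sum_m I_m w_m$ lies in $\mathcal{S}_{|\mathcal{M}|}$. Moreover $\mathbb{E}_\nu[N_{a,i}(\tau)]=\mathbb{E}_\nu[\tau]\,w_{a,i}$. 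Dividing the inequality of Step 1 by $\mathbb{E}_\nu[\tau]$ gives, for every $\nu'\in\mathcal{I}_{\text{alt}}(\nu)$,
\begin{equation*}
\ln\tfrac{1}{2.4\delta}\le\mathbb{E}_\nu[\tau]\sum_{a,i}w_{a,i}KL(\nu_{a,i},\nu'_{a,i}).
\end{equation*}

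\textbf{Step 3 (optimisation).} Take the infimum over $\nu'$; the weight $w$ does not depend on $\nu'$. Then bound the result above by the supremum over $w\in\mathcal{S}_{|\mathcal{M}|}$, which gives $\ln\frac{1}{2.4\delta}\le\mathbb{E}_\nu[\tau]\,c_\star(\nu)$. Rearranging yields the claim.

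The main obstacle is Step 1, where the semi-bandit structure must be handled carefully. Each round reveals a vector of rewards, and the per-round KL must be additive over the revealed coordinates. This requires the coordinates to be independent given the sampled matching, which I would state as an implicit modelling assumption. A second point is that each pair $(\player,\arm)$ contributes both $KL(\nu_{\player,\arm},\cdot)$ and $KL(\nu_{\arm,\player},\cdot)$ with the same count. This is consistent with the double sum in $c_\star$ once we set $w_{a,i}=w_{i,a}$. Finally, $\mathcal{I}_{\text{alt}}(\nu)$ must be nonempty for the infimum to be meaningful; if it is empty the bound is trivial under the convention $c_\star^{-1}=0$.
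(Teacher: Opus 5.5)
Your proposal is correct and follows essentially the same route as the paper: a change-of-measure inequality (Wald's identity plus the data-processing bound) applied to the event $\{m_\tau=\matching^\star_\nu\}$, the estimate $d(1-\delta,\delta)\ge\ln\frac{1}{2.4\delta}$, and normalization of $\mathbb{E}_\nu[N_{a,i}(\tau)]$ by $\mathbb{E}_\nu[\tau]$ into $\mathcal{S}_{|\mathcal{M}|}$, followed by the infimum over $\mathcal{I}_{\text{alt}}(\nu)$ and the supremum over weights. Your explicit choice $w_m=\mathbb{E}_\nu[N_m(\tau)]/\mathbb{E}_\nu[\tau]$ usefully justifies the simplex membership that the paper only asserts; one small correction is that monotonicity of $d$ gives only $d(\mathbb{P}_\nu(E),\mathbb{P}_{\nu'}(E))\ge d(1-\delta,\delta)$, and the bound by $\ln\frac{1}{2.4\delta}$ is a separate elementary inequality.
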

\begin{proof}
Let $\delta \in (0,1/2)$ and consider a $\delta-$PACOS algorithm, where the probability of the event that the algorithm outputs the optimal stable matching is at least $1-\delta$, i.e., the probability of the event $\mathcal{E}_{\nu} = \{m_{\tau} = m^{\star}_{\nu} \}$ satisfies $P_{\nu}(\mathcal{E}_{\nu})\geq 1 - \delta $. Now consider alternative matching instance  $\nu' \in \mathcal{E}_{\text{alt}}(\nu)$, where by definition we have that $P_{\nu'}(\mathcal{E}_{\nu})\leq \delta$.

Applying the Lemma~\ref{lemma_dist} to the event $\mathcal{E}_{\nu}$ we have that $\forall \nu' \in \mathcal{I}_{\text{alt}}(\nu)$:
 \begin{equation}
    \sum_{a \in \mathfrak{A}} \sum_{i \in OS(a)} \mathbb{E}_\nu[N_{a,i}(\tau)] KL(\nu_{a,i}, \nu'_{a,i}) \ge d(P_{\nu}(\mathcal{E}_{\nu}),P_{\nu'}(\mathcal{E}_{\nu})) \ge d(1-\delta,\delta)
 \end{equation}
where $d(x,y) := x \log(x/y) + (1-x) \log((1-x)/(1-y))$ is the binary relative entropy.

Now we can further bound:
 \begin{equation}
    \sum_{a \in \mathfrak{A}} \sum_{i \in OS(a)} \mathbb{E}_\nu[N_{a,i}(\tau)] KL(\nu_{a,i}, \nu'_{a,i})  \ge \ln{\frac{1}{2.4 \delta}}
 \end{equation}
 
using the lower bound on the binary relative entropy $d(1-\delta, \delta) \ge \ln \frac{1}{2.4 \delta}$ for $\delta \in (0, 1/2)$ \cite{kaufmann2016complexity}.

Taking the infimum over the possible alternative model $\nu'  \in \mathcal{I}_{\text{alt}}(\nu)$ we have that:
 \begin{align}
    \ln{\frac{1}{2.4 \delta}} &\leq \inf_{\nu'  \in \mathcal{E}_{\text{alt}}(\nu)}
    \sum_{a \in \mathfrak{A}} \sum_{i \in OS(a)} \mathbb{E}_\nu[N_{a,i}(\tau)] KL(\nu_{a,i}, \nu'_{a,i}) \\
    &\leq \mathbb{E}_\nu[\tau] \inf_{\nu'  \in \mathcal{I}_{\text{alt}}(\nu)}
    \sum_{a \in \mathfrak{A}} \sum_{i \in OS(a)} \frac{\mathbb{E}_\nu[N_{a,i}(\tau)] }{\mathbb{E}_\nu[\tau]}KL(\nu_{a,i}, \nu'_{a,i})
 \end{align}

Now by the definition of the transformed simplex we have that $\frac{\mathbb{E}_\nu[N_{a,i}(\tau)] }{\mathbb{E}_\nu[\tau]} \in \mathcal{S}_{\mid \mathcal{M}\mid}$ which leeds to:
 \begin{equation}
    \ln{\frac{1}{2.4 \delta}} \leq \mathbb{E}_\nu[\tau] \sup_{w \in \mathcal{S}_{\mid \mathcal{M}\mid}} \inf_{\nu'  \in \mathcal{I}_{\text{alt}}(\nu)}
    \sum_{a \in \mathfrak{A}} \sum_{i \in OS(a)} 
    w_{a,i} KL(\nu_{a,i}, \nu'_{a,i})
 \end{equation}
which concludes the proof
\end{proof}

\subsection{Change of Distribution Lemma}
At each time step $t \geq 1 $ we sample a matching $m_t \in \mathcal{M}$, and the platform observers the rewards of each agent that is matched $X_t = (X_{a,m_t(a)} I\{a \in m_t(a)\})_{\forall a \in \setAgents} \subset \mathbb{R}^{K \times N}$ similar to the semi-bandit feedback, where $X_{a,m_t(a)}$ is distributed according to $\nu_{a,m_t(a)}$.

Given a history $\mathcal{F}_{t} = \sigma(m_1, X_{1,m_1},  \cdots, m_t, X_{t,m_t} )$ up to time $t$, we can define the log-likelihood of the observation for the two different matching instances $\nu$ and $\nu'$:
\begin{equation}
    \mathcal{L}_t(m_1, X_{1,m_1},  \cdots, m_t, X_{t,m_t} ) = \sum_{a\in \setAgents} \sum_{i \in OS(a)} \sum_{s=1}^{t} I\{ (a,i) \in m_t\} \ln \frac{p_{a,i}(X_{s,a,i})}{p'_{a,i}(X_{s,a,i})}
\end{equation}
where $p_{i,j}$ denotes the density of the distribution $\nu_{i,j}$.

Now, we present a transportation lemma for two-sided matching problems, adapted from Lemma 1 originally proposed in \cite{kaufmann2016complexity}.

\begin{lemma}[Change of Distribution for Matching Markets]
\label{lemma_dist}
Let $\nu$ and $\nu'$ be two matching market instances where for every agent $a \in \mathfrak{A}$ and every potential partner $i \in OS(a)$, the reward distributions $\nu_{a,i}$ and $\nu'_{a,i}$ are mutually absolutely continuous. For any almost-surely finite stopping time $\tau$ with respect to $(\mathcal{F}_t)_{t \in \mathbb{N}}$, and any event $\mathcal{E} \in \mathcal{F}_\sigma$:
\begin{equation*}
    \sum_{a \in \mathfrak{A}} \sum_{i \in OS(a)} \mathbb{E}_\nu[N_{a,i}(\tau)] KL(\nu_{a,i}, \nu'_{a,i}) \ge d(P_\nu(\mathcal{E}), P_{\nu'}(\mathcal{E}))
\end{equation*}
where $d(x,y) := x \log(x/y) + (1-x) \log((1-x)/(1-y))$ is the binary relative entropy.
\end{lemma}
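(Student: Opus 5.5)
The statement is the change-of-distribution (transportation) lemma for the semi-bandit matching setting. I would prove it by adapting Lemma 1 of \cite{kaufmann2016complexity}: first compute the expected log-likelihood ratio at the stopping time, then apply the data-processing inequality for KL divergence to the event $\mathcal{E}$. Here $\mathcal{E}$ is understood to lie in $\mathcal{F}_\tau$.

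\textbf{Step 1 (Wald-type identity).} I will show
\begin{equation*}
\mathbb{E}_\nu[\mathcal{L}_\tau]=\sum_{a\in\setAgents}\sum_{i\in OS(a)}\mathbb{E}_\nu[N_{a,i}(\tau)]\,KL(\nu_{a,i},\nu'_{a,i}).
\end{equation*}
Write $\mathcal{L}_\tau=\sum_{s\ge1}\mathbb{I}\{s\le\tau\}\sum_{(a,i)}\mathbb{I}\{(a,i)\in m_s\}\ln\frac{p_{a,i}(X_{s,a,i})}{p'_{a,i}(X_{s,a,i})}$. The event $\{s\le\tau\}$ and the matching $m_s$ are both $\mathcal{F}_{s-1}$-measurable. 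The fresh observation $X_{s,a,i}$ is drawn from $\nu_{a,i}$ independently of $\mathcal{F}_{s-1}$. Conditioning on $\mathcal{F}_{s-1}$ therefore turns each summand into $\mathbb{I}\{s\le\tau,(a,i)\in m_s\}KL(\nu_{a,i},\nu'_{a,i})$. Summing over $s$ gives $N_{a,i}(\tau)$.

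The exchange of sum and expectation needs care. I would first work with $\tau\wedge T$, where the sums are finite. Positive and negative parts are handled separately, using $\mathbb{E}\,|\ln(p/p')|<\infty$ whenever the KL terms are finite; if some KL term is infinite the inequality is trivial. Then let $T\to\infty$ by monotone convergence, using $\tau<\infty$ a.s. Semi-bandit feedback introduces nothing new here. Within one round, the observations of different pairs enter the likelihood as a product, so their log-ratios simply add.

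\textbf{Step 2 (data processing).} By mutual absolute continuity, $\mathcal{L}_\tau$ is the log of the Radon--Nikodym derivative $dP_\nu/dP_{\nu'}$ restricted to $\mathcal{F}_\tau$. Hence $\mathbb{E}_\nu[\mathcal{L}_\tau]=KL(P_\nu|_{\mathcal{F}_\tau},P_{\nu'}|_{\mathcal{F}_\tau})$. This holds because the sampling rule is the same under both instances, so its (possibly randomized) factors cancel in the likelihood ratio. Pushing both measures forward through the indicator $\mathbb{I}_{\mathcal{E}}$ and applying the data-processing inequality gives
\begin{equation*}
KL(P_\nu|_{\mathcal{F}_\tau},P_{\nu'}|_{\mathcal{F}_\tau})\ge d(P_\nu(\mathcal{E}),P_{\nu'}(\mathcal{E})).
\end{equation*}
The inequality can also be obtained directly from Jensen's inequality, as in \cite{kaufmann2016complexity}: condition on $\mathcal{E}$ and on $\mathcal{E}^c$ and apply convexity of $-\ln$ to $\mathbb{E}_\nu[e^{-\mathcal{L}_\tau}\mid\mathcal{E}]=P_{\nu'}(\mathcal{E})/P_\nu(\mathcal{E})$.

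\textbf{Main obstacle.} The delicate step is Step 1's interchange of expectation with the random-length sum. I would handle it by truncation at $\tau\wedge T$ followed by a monotone/dominated convergence argument. If $\mathbb{E}_\nu[N_{a,i}(\tau)]=\infty$ for some pair with positive KL, the left-hand side is infinite and the claim holds trivially.
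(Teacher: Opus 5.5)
Your proposal is correct and follows the same route as the paper: a Wald-type identity giving $\mathbb{E}_\nu[\mathcal{L}_\tau]=\sum_{a}\sum_{i}\mathbb{E}_\nu[N_{a,i}(\tau)]\,KL(\nu_{a,i},\nu'_{a,i})$, followed by the inequality $\mathbb{E}_\nu[\mathcal{L}_\tau]\ge d(P_\nu(\mathcal{E}),P_{\nu'}(\mathcal{E}))$, which the paper takes from Lemma~19 of Kaufmann, Capp\'e and Garivier and which is exactly your data-processing/Jensen step. Your version is more careful than the paper's: the paper applies Wald's identity without addressing integrability or the semi-bandit factorization, and it writes $\mathcal{F}_\sigma$ where $\mathcal{F}_\tau$ is meant.
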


\begin{proof}
For each $(a,i)\in \setAgents \times OS(a)$, let $\{X_{s,a,i}\}_{s\ge 1}$ be the observation sequence associated with that pair, and let $N_{a,i}(\tau)$ be the number of times it is sampled up to time $\tau$; thus the realized observations by time $\tau$ are $\{X_{s,a,i}\}_{s=1}^{N_{a,i}(\tau)}$.

Then we can define the the log-likelihood $\mathcal{L}_\tau$ at time $\tau$ as
\begin{equation}
    \mathcal{L}_\tau(m_1, X_{1,m_1},  \cdots, m_\tau, X_{\tau,m_\tau} ) = \sum_{a\in \setAgents} \sum_{i \in OS(a)} \sum_{s=1}^{N_{a,i}(\tau)} \ln \frac{p_{a,i}(X_{s,a,i})}{p'_{a,i}(X_{s,a,i})}.
\end{equation}

With a direct application of Wald's Identity on $\mathbb{E}_{\nu}[L_{\tau}]$ we have that:
\begin{equation}
\mathbb{E}_{\nu}[L_{\tau}] = \sum_{a\in \setAgents} \sum_{i \in OS(a)} \mathbb{E}_{\nu}[N_{a,i}(\tau)] \mathbb{E}_{\nu}[\ln\frac{p_{a,i}(X_{a,i})}{p'_{a,i}(X_{a,i})}] = \sum_{a\in \setAgents} \sum_{i \in OS(a)} \mathbb{E}_{\nu}[N_{a,i}(\tau)] KL(\nu_{a,i},\nu'_{a,i})]
\end{equation}
where in the second equality we use the fact that $KL(\nu, \nu') = \mathbb{E}_{\nu}[\ln\frac{p(X)}{p'(X)}]$.

Applying the Lemma 19 from \cite{kaufmann2016complexity} which state that $\mathbb{E}_{\nu}[L_{\tau}]  \ge d(P_\nu(\mathcal{E}), P_{\nu'}(\mathcal{E}))$ results in:

\begin{equation*}
    \sum_{a \in \mathfrak{A}} \sum_{i \in OS(a)} \mathbb{E}_\nu[N_{a,i}(\tau)] KL(\nu_{a,i}, \nu'_{a,i}) \ge d(P_\nu(\mathcal{E}), P_{\nu'}(\mathcal{E}))
\end{equation*}

which concludes the proof.
\end{proof}

\section{Lower bounds on specific instances}
\label{app:lower_bound2}
We can now continue by providing a lower bound based on specific instances.

Similar to the work of \cite{kaufmann2016complexity} we will perform an analysis changing the distribution of one pair of agents, under the identifiable class of environments:
\begin{equation} 
\mathcal{I}_{\text{class}} = \{ \nu = (\nu_{p,a})_{\forall (p,a) \in \mathcal{P} \times \mathcal{A}} : \nu_{p,a} \in \mathcal{P}_m \text{ and }  m^{\star}(\nu) \ne \emptyset \}
\end{equation}

such that the probability measure $\mathcal{P}_m$ satisfies the following continuity assumption:
\begin{assumption}
\label{assuption:indent}
For any two distribution $p,q \in \mathcal{P}_m$, such that $p \ne q$, and for all $\alpha>0$:
\begin{enumerate}
    \item There exist a $q_1 \in \mathcal{P}_m$ such that: $KL(p,q) < KL(p, q_1) < KL(p,q)  +\alpha$ and $\mathbb{E}_{X \sim q_1}[X] > \mathbb{E}_{X \sim q}[X] $
    \item There exist a $q_2 \in \mathcal{P}_m$ such that: $KL(p,q) < KL(p, q_2) < KL(p,q)  +\alpha$ and $\mathbb{E}_{X \sim q_2}[X] < \mathbb{E}_{X \sim q}[X] $
\end{enumerate}
\end{assumption}

\subsection{Global preferences }
We study instances with global preferences similar to Example 7 in \cite{pmlr-v108-liu20c}.

\begin{lemma}[Global preferences]
\label{lemma:global_pref}
Consider a matching instance $\nu \in \mathcal{I}_{\text{class}}$ with equal umber of agents on both sides and that satisfies Assumption \ref{assuption:indent}. Suppose the agents exhibit global preferences such that for all $p_i \in \mathcal{P}$, $a_1 \succ a_2 \succ \dots \succ a_n$, and for all $a_i \in \mathcal{A}$, $p_1 \succ p_2 \succ \dots \succ p_n$. Then any $\delta$-PACOS Algorithm, satisfies that:
\begin{align}
\mathbb{E}_{\nu}[\tau] \geq \log\left(\frac{1}{2.4 \delta}\right)  \max(\max_{p_i \in \setPlayer}A_i, \max_{a_i \in \setArms}B_i)
\end{align}
where
\begin{align*}
    A_i = \max\Bigr( \frac{1}{KL(\nu_{p_i,a_i}, \nu_{p_i,a_{i+1}})}, \frac{1}{KL(\nu_{a_i,p_i}, \nu_{a_i,p_{i+1}}) }  \Bigl) + \sum_{j<i} \frac{1}{KL(\nu_{a_j,p_i}, \nu_{a_j,p_j})} + \sum_{j>i} \frac{1}{KL(\nu_{p_i,a_j}, \nu_{p_i,a_i})}
\end{align*}
and 
\begin{align*}
    B_i = \max\Bigr( \frac{1}{KL(\nu_{p_i,a_i}, \nu_{p_i,a_{i+1}})}, \frac{1}{KL(\nu_{a_i,p_i}, \nu_{a_i,p_{i+1}}) } \Bigl) + \sum_{j<i} \frac{1}{KL(\nu_{p_j,a_i}, \nu_{p_j ,a_j}) } 
+ \sum_{j>i} \frac{1}{KL(\nu_{a_i,p_j}, \nu_{a_i,p_i})}
\end{align*}

\end{lemma}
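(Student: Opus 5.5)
The plan is to follow the single-pair perturbation scheme of \cite{kaufmann2016complexity}, specialised to the matching setting through Lemma~\ref{lemma_dist}. Under global preferences the optimal stable matching is $\matching^\star = \{(p_i,a_i)\}_{i=1}^n$. Indeed, $p_1,a_1$ are mutual top choices, and removing them leaves an instance of the same form. For each pair $(x,y)$ whose relative order matters, I will build an alternative instance $\nu' \in \mathcal{I}_{\text{alt}}(\nu)$ that changes only one distribution $\nu_{a,i}$. I then apply Lemma~\ref{lemma_dist} to the event $\{m_\tau = \matching^\star_\nu\}$, exactly as in the proof of Theorem~\ref{th:lowerbound}. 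This gives
\[
\mathbb{E}_\nu[N_{a,i}(\tau)]\, KL(\nu_{a,i},\nu'_{a,i}) \ge \ln\tfrac{1}{2.4\delta}.
\]
Assumption~\ref{assuption:indent} lets me choose $\nu'_{a,i}$ with KL arbitrarily close to the KL with the competitor's distribution, while the mean moves strictly past the competitor's mean. Letting $\alpha\to 0$ then gives a lower bound of $\ln\frac{1}{2.4\delta}/KL(\nu_{a,i},\nu_{a,j})$ on the expected number of samples of pair $(a,i)$.

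The perturbations are as follows. First, in $\nu_{p_i,a_{i+1}}$ I raise the mean above $\mu_{p_i,a_i}$, equivalently moving it to the law $\nu_{p_i,a_i}$, which costs $KL(\nu_{p_i,a_{i+1}},\nu_{p_i,a_i})$. Swapping $a_i\leftrightarrow a_{i+1}$ for $p_i$ changes the player-optimal matching; this must be checked by running GS on the modified instance. Alternatively I perturb $\nu_{p_i,a_i}$ downward below $\mu_{p_i,a_{i+1}}$. I will use whichever orientation matches the KL written in $A_i$, namely $KL(\nu_{p_i,a_i},\nu_{p_i,a_{i+1}})$, so the perturbed pair is $(p_i,a_i)$. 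The analogous perturbation for $a_i$ between $p_i$ and $p_{i+1}$ also samples pair $(p_i,a_i)$. Taking the better of the two bounds on $\mathbb{E}[N_{p_i,a_i}(\tau)]$ yields the $\max$ term.

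Second, for $j<i$, I perturb $\nu_{a_j,p_i}$ so that $a_j$ prefers $p_i$ to $p_j$. Then $(p_i,a_j)$ becomes a blocking pair for $\matching^\star$, since $p_i$ already prefers $a_j\succ a_i$. This bounds $\mathbb{E}[N_{p_i,a_j}(\tau)]\ge \ln\frac{1}{2.4\delta}/KL(\nu_{a_j,p_i},\nu_{a_j,p_j})$.

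Third, for $j>i$, I perturb $\nu_{p_i,a_j}$ so that $p_i$ prefers $a_j$ to $a_i$. Then $p_i$ proposes to $a_j$ first, and since $a_j$ prefers $p_i$ over $p_j$, the optimal matching changes. This bounds $\mathbb{E}[N_{p_i,a_j}(\tau)]$ with $KL(\nu_{p_i,a_j},\nu_{p_i,a_i})$.

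The sum over all $j$ of the pair counts involving $p_i$ equals the number of rounds in which $p_i$ is matched. Since each round is one matching, this number is at most $\tau$. Summing the three families of bounds therefore gives $\mathbb{E}_\nu[\tau]\ge \ln\frac{1}{2.4\delta}A_i$. The symmetric argument, with the roles of $\setPlayer$ and $\setArms$ exchanged, uses the edges incident to $a_i$ and gives $B_i$. Maximising over $i$ and over the two sides concludes the proof.

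I expect the main obstacle to be verifying that each single-distribution perturbation genuinely lies in $\mathcal{I}_{\text{alt}}(\nu)$. In particular, the perturbation of $(p_i,a_i)$ must change the \emph{optimal} stable matching rather than merely create another stable matching. The $j<i$ case also needs care, because $a_j$ ranks $p_j$ first. There I will argue directly that a blocking pair appears, or that GS changes the outcome. A secondary subtlety is the KL orientation and the limit $\alpha\to 0$. I will handle both by taking an infimum in the inequality from Lemma~\ref{lemma_dist} before optimising.
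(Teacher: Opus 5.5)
Your proposal is correct and follows essentially the paper's route. Both arguments use single-pair alternative instances built from Assumption~\ref{assuption:indent} and apply Lemma~\ref{lemma_dist} to $\{m_\tau=\matching^\star_\nu\}$, which gives exactly the paper's per-edge bounds, and then sum these over the edges incident to one vertex. The differences are minor and favour your version. The deferred check for the $j=i$ perturbations is immediate, because afterwards $(p_i,a_{i+1})$ (resp.\ $(p_{i+1},a_i)$) blocks $\matching^\star$. Your counting bound $\tau\ge\sum_j N_{p_i,a_j}(\tau)$ replaces the paper's minimal-valid-profile and K\H{o}nig detour, and it avoids the paper's inconsistent treatment of $N_{p_i,a_j}$ and $N_{a_j,p_i}$ as distinct counts (e.g.\ its claim that $\mathbb{E}_\nu[N_{p_i,a_j}(\tau)]=0$ for $j<i$).
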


\begin{proof}
We consider an instance $\nu$ with equal umber of agents on both sides and global preference profile $F$ where for all $p_i \in \mathcal{P}$, $a_1 \succ a_2 \succ \dots \succ a_n$, and for all $a_i \in \mathcal{A}$, $p_1 \succ p_2 \succ \dots \succ p_n$, such that the optimal stable matching $\matching$ is given by $\matching(p_i)=a_i$ for all $i$.

Figure~\ref{fig:lower-bound-graph-exp1} illustrates the case of three agents on each side.

\begin{figure}[h!]
    \centering
    \begin{tikzpicture}[
    leftnode/.style={circle, draw, fill=blue!10, minimum size=7mm},
    rightnode/.style={circle, draw, fill=red!10, minimum size=7mm},
    pref/.style={font=\scriptsize, align=right},
    prefR/.style={font=\scriptsize, anchor=west, align=left},
    edge/.style={gray!60},
    boldedge/.style={black, very thick}
]

\foreach \i in {1,...,3} {
    \node[leftnode] (P\i) at (0,6-\i) {$p_{\i}$};
    \node[pref, anchor=east] at (-0.4,6-\i)
        {$a_1 > a_2 > a_3$};
}

\node[rightnode] (A1) at (6,5) {$a_1$};
\node[prefR] at (6.4,5)
{$p_1 > p_2 > p_3$};

\node[rightnode] (A2) at (6,4) {$a_2$};
\node[prefR] at (6.4,4)
{$p_1 > p_2 > p_3$};

\node[rightnode] (A3) at (6,3) {$a_3$};
\node[prefR] at (6.4,3)
{$p_1 > p_2 > p_3$};

\foreach \i in {1,...,3} {
    \foreach \j in {1,...,3} {
        \draw[edge] (P\i) -- (A\j);
    }
}

\draw[boldedge] (P1) -- (A1);
\draw[boldedge] (P2) -- (A2);
\draw[boldedge] (P3) -- (A3);

\end{tikzpicture}
    \caption{Instance with global preferences for $N=3$.}
    \label{fig:lower-bound-graph-exp1}
\end{figure}

We first present a valid preference profile $P$ for this instance that is minimal, in the sense that $P$ is a subset of every other valid partial profile $P'$ for $F$, i.e., $\forall P' \in \mathcal{P}_{\mathrm{valid}}(F)$, it holds that $P_a \subseteq P'_a$ for all $a \in \setAgents$.
This profile $P$ is defined as follows: Each agent $p_i \in \setPlayer$ has partial preferences $P_{p_i}= \{a_i \succ a_j \mid j>i\}$, i.e., $p_i$ prefers their optimal matching partner $a_i$ over all higher-indexed agents in $\setArms$. Similarly, for each agent $a_i \in \setArms$, $P_{a_i} = \{p_i \succ p_j \mid  j>i\}$.

\paragraph{Validity of $P$} \textbf{} \\

By definition, $P$ is compliant with $F$.
To prove that $P \in \mathcal{P}_{\mathrm{valid}}(F)$, we first show that $\matching$ is a super stable matching in $P$, and then show that $\matching$ is optimal in every completion of $P$, i.e., any full ranking profile compatible with $P$.

To verify that $\matching(p_i)=a_i$ is super stable under the partial preference profile $P$, consider any pair $(p_i,a_j)$ with $j\neq i$.
If $i>j$, then $a_j$ prefers their match $p_j$ to $p_i$ in $P$, since $p_j = \matching(a_j)\succ_{P_{a_j}} p_i$. Thus, $(p_i,a_j)$ is not a blocking pair in any completion of $P$. 
If $j>i$, then $p_i$ prefers their match $a_i$ to $a_j$ in $P$, since $a_i = \matching(p_i)\succ_{P_{p_i}} a_j$, and again $(p_i,a_j)$ is not blocking $\matching$ in any completion of $P$. Hence, no pair blocks $\matching$ in any completion of $P$, and $\matching$ is super stable.

We now show that $\matching$ is also the optimal stable matching in any completion of $P$, i.e., a pervasive stable matching. 
For contradiction, assume that this is not the case, and that there exists a full ranking profile $F'$ compatible with $P$ that admits an optimal stable matching $\matching' \neq \matching$. Because $\matching$ is super stable, it is also a stable matching in $F'$. Now consider some  edge $(p_i,a_j)$ in $\matching'\setminus\matching$. Because $\matching'$ is assumed to be optimal in $F'$, we have that $p_i$ is better off under $\matching'$ than $\matching$ and vice versa for $a_j$, i.e.,   $a_j = \matching'(p_i) \succ_{F'_{p_i}} \matching(p_i) = a_i$ and $p_j = \matching(a_j) \prec_{F'_{a_j}} \matching'(a_j) = p_i$. 
If $i > j$ this is not compliant with preferences $p_j \succ_{P_{a_j}} p_i$ in $P$.
Similarly, if $j>i$ this is not compliant with preferences $a_i \succ_{P_{p_j}} a_j$ in $P$.
Thus, there cannot exist any  edge $(p_i,a_j)$ in $\matching'\setminus\matching$. This is a contradiction to our choice of $F'$ and $\matching'$. Hence $\matching$ is optimal in any full ranking profile compatible with $P$.

\paragraph{Minimality of $P$} \textbf{} \\
We now prove that $P$ is minimal in the sense that it is a subset of every other valid partial profile $P'$ for $F$, i.e., $\forall P' \in \mathcal{P}_{\mathrm{valid}}(F)$, it holds that $P_a \subseteq P'_a$ for all $a\in\setAgents$.

To see this, consider that for any valid profile $P' \in \mathcal{P}_{\mathrm{valid}}(F)$, in order for an edge $(p_i, a_j)$ not to block the stable matching $\matching$ in any completion of $P'$, we require that either $a_i \succ_{P'_{p_i}} a_j$ or $p_j \succ_{P'_{a_j}} p_i$ (not necessarily both). According to $F$, the former holds only when $i < j$, and the latter holds only when $j < i$. Thus:
\begin{itemize}
\item For all pairs $(p_i, a_j)$ with $i < j$, a valid profile must specify $a_i \succ_{p_i} a_j$.
\item For all pairs $(p_i, a_j)$ with $i > j$, a valid profile must specify $p_j \succ_{a_j} p_i$.
\end{itemize}

These are exactly the relations specified in $P$.
Therefore, $P$ is both valid and minimal with respect to ensuring stability and optimality of $\matching$ across all completions.

\paragraph{Lower bound on the expected number of pair samples $\mathbb{E}_\nu[N_{p_i,a_j}(\tau)]  $} \textbf{} \\
Following \cite{kaufmann2016complexity}, we construct alternative environments to provide lower bounds on the sample counts $N_{p_i,a_i}$ and $N_{a_i,p_i}$ for any agent pair $(p_i,a_i)$.

We construct our alternative environments based on the preference profile $P = \{P_a\}_{a \in \mathfrak{A}}$ defined above. Specifically, since we proved that $P$ is valid and minimal, any completion (or alternative environment) compatible with $P$ will yield the exact same optimal stable matching. Therefore, to create an alternative environment $\nu'$ where the optimal stable matching is different, we must construct an environment that is not consistent with $P$. We achieve this by minimally modifying the reward distribution of only a single pair. 

We specifically consider  alternative instances $\nu'$ where the distribution of one  agent $p_i \in \setPlayer$ is modified in only one agent $a_j \in \setArms$:
\begin{align}
       \nu' = (&\nu_{p_1}, \cdots, \nu'_{p_i}, \cdots, \nu_{p_K}  \\ 
       &\nu_{a_1}, \cdots,  \nu_{a_i},  \cdots, \nu_{a_N} ) 
\end{align}
where
\begin{equation}
   \nu'_{p_i} = (\nu_{{p_{i},a_{1}}}, \cdots,  \nu'_{p_{i},a_{j}},  \cdots, \nu_{p_{i},a_{N}} ) 
\end{equation}
with (using Assumption~\ref{assuption:indent}):
\begin{itemize}
    \item if $j=i$ then: $\mu'_{p_{i}, a_{i}} < \mu_{p_{i}, a_{i+1}} $  and $KL(\nu_{p_{i}, a_{i}},\nu_{p_{i}, a_{i+1}}) < KL(\nu_{p_{i}, a_{i}},\nu'_{p_{i}, a_{i}}) < KL(\nu_{p_{i}, a_{i}},\nu_{p_{i}, a_{i+1}}) +\alpha$
    \item if $j>i$ then: $\mu'_{p_{i}, a_{j}} > \mu_{p_{i}, a_{i}}$  and $KL(\nu_{p_{i}, a_{j}}, \nu_{p_{i}, a_{i}}) < 
                  KL(\nu_{p_{i}, a_{j}}, \nu'_{p_{i}, a_{j}}) < KL(\nu_{p_{i}, a_{j}}, \nu_{p_{i}, a_{i}}) + \alpha $
\end{itemize}

Note that in every of those alternative instances we have a different optimal stable matching, i.e., $m^{\star}_{\nu} \neq m^{\star}_{\nu'}$. Thus, introducing the event $\mathcal{E} = \{m_{\tau} = m^{\star}_{\nu}\}$, any $\delta$-PAC algorithm stopping at time $\tau$ with matching $m_\tau$ satisfies $\Pr_{\nu}(\mathcal{E})\geq 1 - \delta$ and $\Pr_{\nu'}(\mathcal{E}) \leq \delta$ .

Applying Lemma~\ref{lemma_dist} on the stopping time $\tau$ for the two matching instances $\nu$ and $\nu'$ we have that:
\begin{equation*}
    \sum_{a \in \mathfrak{A}} \sum_{i \in OS(a)} \mathbb{E}_\nu[N_{a,i}(\tau)] KL(\nu_{a,i}, \nu'_{a,i}) \ge d(P_\nu(\mathcal{E}), P_{\nu'}(\mathcal{E})) \ge d(1-\delta, \delta)
\end{equation*}
where $N_{a,i}(\tau)$ is the number of samples of pair $(a,i)\in \setPlayer\times\setArms$ until stopping time $\tau$.

Using the lower bound on the binary relative entropy $d(1-\delta, \delta) \ge \ln \frac{1}{2.4 \delta}$ for $\delta \in (0, 1/2)$ \cite{kaufmann2016complexity}, and the fact that the two different instances $\nu$ and $\nu'$ differ only in the distribution of one pair, we obtain:
\begin{equation*}
\mathbb{E}_\nu[N_{p_i,a_j}(\tau)]  \ge \log(\frac{1}{2.4 \delta}) \frac{1}{KL(\nu_{p_i,a_j}, \nu'_{p_i,a_j})}
\end{equation*}
Now from the definition of the alternative instances, for $i=j$:
\begin{equation*}
\mathbb{E}_\nu[N_{p_i,a_i}(\tau)]  \ge \log(\frac{1}{2.4 \delta}) \frac{1}{KL(\nu_{p_i,a_i}, \nu_{p_i,a_{i+1}}) + \alpha}
\end{equation*}
and for $j>i$ :
\begin{equation*}
\mathbb{E}_\nu[N_{p_i,a_j}(\tau)]  \ge \log(\frac{1}{2.4 \delta}) \frac{1}{KL(\nu_{p_i,a_j}, \nu_{p_i,a_i})+ \alpha}
\end{equation*}
Furthermore, for $j < i$, the definition of $P$ and the construction of the alternative environments imply that $p_i$ does not require samples from $a_j$. Consequently:
\begin{equation}
\label{eq:case_3_p}
\mathbb{E}_\nu[N_{p_i,a_j}(\tau)] = 0
\end{equation}

Now following the same reasoning for the agents $a_i \in \setArms$, by symmetry of the partial preferences $P$ we have that for $i=j$:
\begin{equation*}
\mathbb{E}_\nu[N_{a_i,p_i}(\tau)]  \ge \log(\frac{1}{2.4 \delta}) \frac{1}{KL(\nu_{a_i,p_i}, \nu_{a_i,p_{i+1}}) + \alpha}
\end{equation*}
and for $j>i$ :
\begin{equation*}
\mathbb{E}_\nu[N_{a_i,p_j}(\tau)]  \ge \log(\frac{1}{2.4 \delta}) \frac{1}{KL(\nu_{a_i,p_j}, \nu_{a_i,p_i})+ \alpha}
\end{equation*}
and for $j<i$ :
\begin{equation}
\label{eq:case_3_a}
\mathbb{E}_\nu[N_{a_i,p_j}(\tau)] = 0
\end{equation}

\begin{figure}[h]
    \centering
    \begin{tikzpicture}[
        thick,
        agent/.style={circle, draw, minimum size=1cm, inner sep=0pt, fill=blue!10},
        arm/.style={circle, draw, minimum size=1cm, inner sep=0pt, fill=red!10},
        connection/.style={-, thick}
    ]

        \node[agent] (p1) at (0, 3) {$p_1$};
        \node[agent] (p2) at (0, 1.5) {$p_2$};
        \node[agent] (p3) at (0, 0) {$p_3$};
        
        \node[above=0.5cm of p1, font=\bfseries] {Set $P$};

        \node[arm] (a1) at (3, 3) {$a_1$};
        \node[arm] (a2) at (3, 1.5) {$a_2$};
        \node[arm] (a3) at (3, 0) {$a_3$};

        \node[above=0.5cm of a1, font=\bfseries] {Set $A$};

        
        \draw[connection] (p1) -- node[midway, above] {$t_{k,l}$} (a1);%
        \draw[connection] (p1) -- (a2);
        \draw[connection] (p1) -- (a3);
        \draw[connection] (p2) -- (a1);
        \draw[connection] (p2) -- (a2);
        \draw[connection] (p2) -- (a3);
        \draw[connection] (p3) -- (a1);
        \draw[connection] (p3) -- (a2);
        \draw[connection] (p3) -- (a3);

    \end{tikzpicture}
    \caption{Illustration of the bipartite graph $G$.}
    \label{fig:bipartite_graph2}
\end{figure}

\textbf{Lower bound on the number of matchings $\mathbb{E}_{\nu}[\tau]$} \textbf{} \\
Finally, we lower bound the number of matchings $\tau$ by analyzing the associated bipartite multigraph $G$ with edge multiplicities $t_{i,j}$, as illustrated in Figure~\ref{fig:bipartite_graph2}.

To define the edge multiplicities $t_{k,l}$, we observe that each pair $(p_k, a_l)$ must be sampled at least $\max(\mathbb{E}_{\nu}[N_{p_k,a_l}(\tau)], \mathbb{E}_{\nu}[N_{a_k,p_l}(\tau)])$. Letting parameter $\alpha \to 0$, we have that for $k=l$: 
\begin{equation*}
    t_{k,k} =  \max( \mathbb{E}_\nu[N_{p_k,a_k}(\tau)], \mathbb{E}_\nu[N_{a_k,p_k}(\tau)]) \geq \log(\frac{1}{2.4 \delta}) \max\Bigr( \frac{1}{KL(\nu_{p_k,a_k}, \nu_{p_k,a_{k+1}})}, \frac{1}{KL(\nu_{a_k,p_k}, \nu_{a_k,p_{k+1}}) } \Bigl)
\end{equation*}
and for $k<l$, Eq.~\ref{eq:case_3_a} implies that $\mathbb{E}_\nu[N_{a_l,p_k}] = 0$ and thus:
\begin{equation*}
    t_{k,l} =  \mathbb{E}_\nu[N_{p_k,a_l}(\tau)] \geq \log(\frac{1}{2.4 \delta}) \frac{1}{KL(\nu_{p_k,a_l}, \nu_{p_k,a_k})} 
\end{equation*}
and for $l<k$, Eq.~\ref{eq:case_3_p} implies that $\mathbb{E}_\nu[N_{p_k,a_l}] = 0$ and thus:
\begin{equation*}
    t_{k,l} =  \mathbb{E}_\nu[N_{a_l,p_k}(\tau)] \geq \frac{1}{KL(\nu_{a_l,p_k}, \nu_{a_l,p_l})}
\end{equation*}
 
Now according to Kőnig’s Theorem, the number of matchings required to cover the edges of the bipartite multigraph $G$, is equal to the maximum degree: 
$$\mathbb{E}_{\nu}[\tau] = def(G) = \max(\max_{p_i \in \setPlayer} deg(p_i),\max_{a_i \in \setArms} deg(a_i))$$
where the degree for agents $p_i \in \setPlayer$ is:
\begin{align*}
    &deg(p_i) = \sum^{n}_{j=1} t_{i,j} =  t_{i,i} + \sum_{j<i} t_{i,j} + \sum_{j>i} t_{i,j} \\
    &\geq \log\left(\frac{1}{2.4 \delta}\right) \left[ \max\left( \frac{1}{KL(\nu_{p_i,a_i}, \nu_{p_i,a_{i+1}})}, \frac{1}{KL(\nu_{a_i,p_i}, \nu_{a_i,p_{i+1}}) } \right) + \sum_{j<i} \frac{1}{KL(\nu_{a_j,p_i}, \nu_{a_j,p_j})} + \sum_{j>i} \frac{1}{KL(\nu_{p_i,a_j}, \nu_{p_i,a_i})} \right]
\end{align*}
and the degree for agents for $a_i \in \setArms$:
\begin{align*}
    &deg(a_i) = \sum^{n}_{j=1} t_{j,i} =  t_{i,i} + \sum_{j<i} t_{j,i} + \sum_{j>i} t_{j,i} \\
    &\geq \log\left(\frac{1}{2.4 \delta}\right) 
\left[ \max\Bigr( \frac{1}{KL(\nu_{p_i,a_i}, \nu_{p_i,a_{i+1}})}, \frac{1}{KL(\nu_{a_i,p_i}, \nu_{a_i,p_{i+1}}) } \Bigl) + \sum_{j<i} \frac{1}{KL(\nu_{p_j,a_i}, \nu_{p_j ,a_j}) } 
+ \sum_{j>i} \frac{1}{KL(\nu_{a_i,p_j}, \nu_{a_i,p_i})}
\right]
\end{align*}

To conclude we have that:
\begin{align}
\mathbb{E}_{\nu}[\tau] \geq \log\left(\frac{1}{2.4 \delta}\right)  \max(\max_{p_i \in \setPlayer}A_i, \max_{a_i \in \setArms}B_i)
\end{align}
where,
\begin{align*}
    A_i = \max\Bigr( \frac{1}{KL(\nu_{p_i,a_i}, \nu_{p_i,a_{i+1}})}, \frac{1}{KL(\nu_{a_i,p_i}, \nu_{a_i,p_{i+1}}) }  \Bigl) + \sum_{j<i} \frac{1}{KL(\nu_{a_j,p_i}, \nu_{a_j,p_j})} + \sum_{j>i} \frac{1}{KL(\nu_{p_i,a_j}, \nu_{p_i,a_i})}
\end{align*}
and 
\begin{align*}
    B_i = \max\Bigr( \frac{1}{KL(\nu_{p_i,a_i}, \nu_{p_i,a_{i+1}})}, \frac{1}{KL(\nu_{a_i,p_i}, \nu_{a_i,p_{i+1}}) } \Bigl) + \sum_{j<i} \frac{1}{KL(\nu_{p_j,a_i}, \nu_{p_j ,a_j}) } 
+ \sum_{j>i} \frac{1}{KL(\nu_{a_i,p_j}, \nu_{a_i,p_i})}
\end{align*}
\end{proof}

\section{Additional Experiments}
\label{app:additional_exp}
We present additional results for the pure exploration setting in markets with different numbers of agents on the two sides. Specifically, we fix the number of arms to $K=10$ and vary the number of agents $N$ from $2$ to $50$. For each value of $N$, we run $20$ independent instances generated as in the main experiment. As shown in Figure~\ref{fig:exp3}, the observations from Section~\ref{sim:pure_exp} continue to hold across different market sizes.

\begin{figure}[t]
    \centering
    \includegraphics[width=1\linewidth]{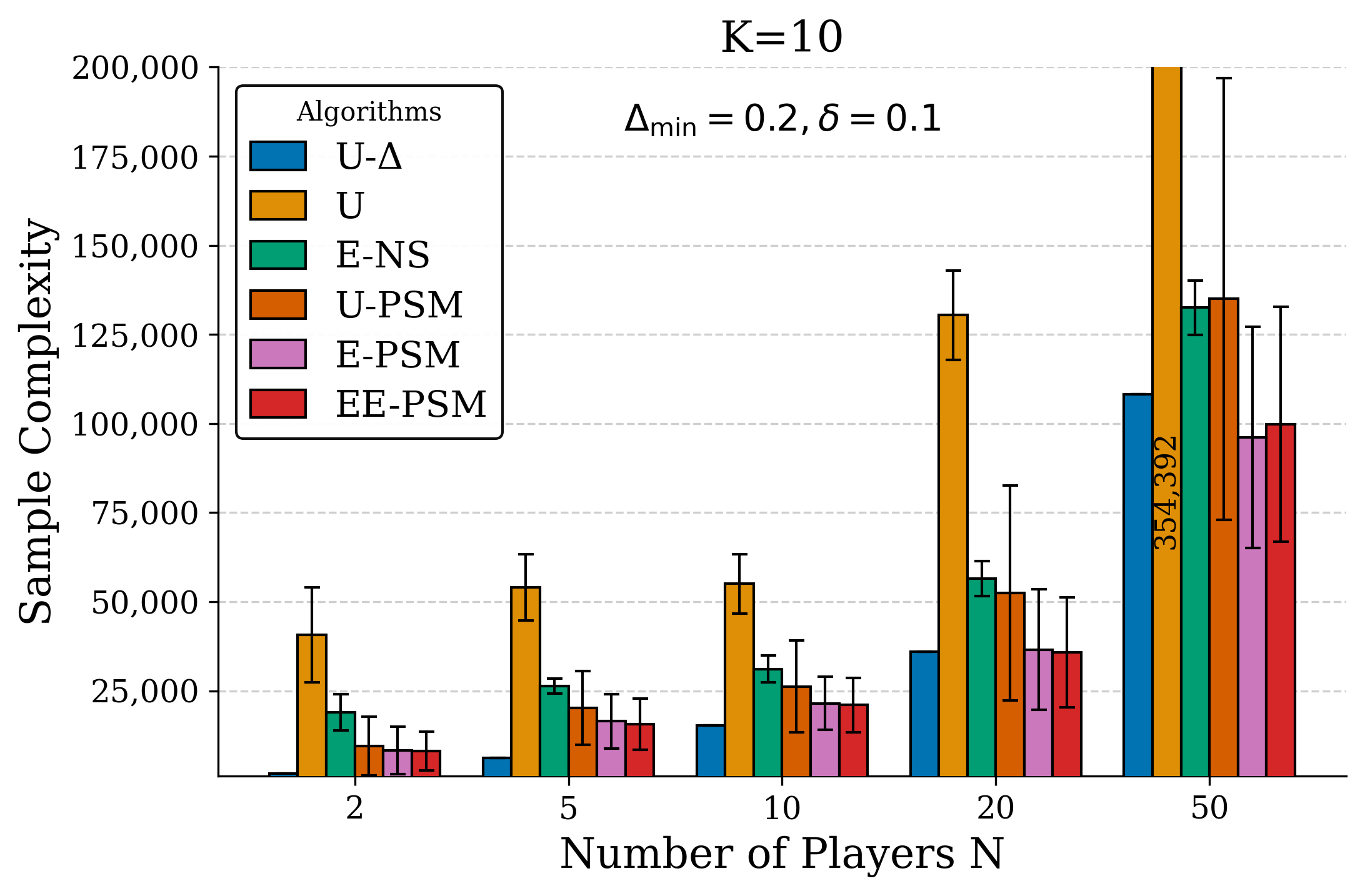}
    \caption{Sample complexity as a function of the number of players $N$, with $K=10$ fixed.}
    \label{fig:exp3}
\end{figure}

\end{document}